
\documentclass{article}

\usepackage{microtype}
\usepackage{graphicx}
\usepackage{subfigure}
\usepackage{booktabs} 

\usepackage{hyperref}



\usepackage[accepted]{icml2019}

\usepackage{amsmath}
\usepackage{amsthm}
\usepackage{amssymb}
\usepackage[utf8]{inputenc}
\usepackage{enumitem}
\usepackage[export]{adjustbox}
\allowdisplaybreaks


\usepackage{forloop}
\newcounter{ct}

\newcommand{\doubleQuote}[1]{\lq\lq{#1}\rq\rq}

\newcommand{\pih}{\widehat{\pi}}

\newcommand{\lah}{\widehat{\lambda}}

\newcommand{\Gh}{\widehat{G}}
\newcommand{\Ch}{\widehat{C}}
\newcommand{\Lh}{\widehat{L}}
\newcommand{\Lmax}{\mathrm{L}_{\mathrm{max}}}
\newcommand{\Lmin}{\mathrm{L}_{\mathrm{min}}}
\newcommand{\Lhmax}{\widehat{\mathrm{L}}_{\mathrm{max}}}
\newcommand{\Lhmin}{\widehat{\mathrm{L}}_{\mathrm{min}}}

\newcommand{\X}{\mathrm{X}}
\newcommand{\A}{\mathrm{A}}
\newcommand{\D}{\mathrm{D}}
\newcommand{\E}{\mathbb{E}}
\newcommand{\R}{\mathbb{R}}
\newcommand{\F}{\mathrm{F}}

\newcommand{\T}{\mathbb{T}}
\newcommand{\Tpi}{\mathbb{T}^\pi}

\newcommand{\pdim}{\textnormal{\texttt{dim}}}

\usepackage{color}

\DeclareMathOperator*{\argmin}{arg\,min}
\DeclareMathOperator*{\argmax}{arg\,max}
\DeclareMathOperator*{\arginf}{arg\,inf}

\newcommand{\abs}[1]{\lvert#1\rvert}
\newcommand{\norm}[1]{\left\|#1\right\|}

\newcommand*\xbar[1]{%
   \hbox{%
     \vbox{%
       \hrule height 0.5pt 
       \kern0.5ex
       \hbox{%
         \kern-0.1em
         \ensuremath{#1}%
         \kern-0.1em
       }%
     }%
   }%
} 
\graphicspath{{figures/}}

\theoremstyle{plain}
\newtheorem{thm}{Theorem}[section]
\newtheorem{lem}[thm]{Lemma}
\newtheorem{prop}[thm]{Proposition}

\newtheorem{assumption}{Assumption}

\newtheorem{rem}[thm]{Remark}

\theoremstyle{definition}
\newtheorem{defn}{Definition}[section]

\newtheorem{exmp}{Example}[section]

\theoremstyle{remark}



\makeatletter
\newcommand*\rel@kern[1]{\kern#1\dimexpr\macc@kerna}
\newcommand*\widebar[1]{%
  \begingroup
  \def\mathaccent##1##2{%
    \rel@kern{0.8}%
    \overline{\rel@kern{-0.8}\macc@nucleus\rel@kern{0.2}}%
    \rel@kern{-0.2}%
  }%
  \macc@depth\@ne
  \let\math@bgroup\@empty \let\math@egroup\macc@set@skewchar
  \mathsurround\z@ \frozen@everymath{\mathgroup\macc@group\relax}%
  \macc@set@skewchar\relax
  \let\mathaccentV\macc@nested@a
  \macc@nested@a\relax111{#1}%
  \endgroup
}
\makeatother



\setlength{\textfloatsep}{0.1cm}
\setlength{\floatsep}{0.1cm}


\makeatletter
\g@addto@macro\normalsize{%
  \setlength\abovedisplayskip{3pt}
  \setlength\belowdisplayskip{3pt}
  \setlength\abovedisplayshortskip{3pt}
  \setlength\belowdisplayshortskip{3pt}
}
\makeatother

\usepackage{etoolbox}
\makeatletter
\patchcmd\@combinedblfloats{\box\@outputbox}{\unvbox\@outputbox}{}{%
  \errmessage{\noexpand\@combinedblfloats could not be patched}%
}%
\makeatother



\icmltitlerunning{Batch Policy Learning under Constraints}

\begin{document}

\twocolumn[
\icmltitle{Batch Policy Learning under Constraints}




\begin{icmlauthorlist}
\icmlauthor{Hoang M. Le}{caltech}
\icmlauthor{Cameron Voloshin}{caltech}
\icmlauthor{Yisong Yue}{caltech}

\end{icmlauthorlist}

\icmlaffiliation{caltech}{California Institute of Technology, Pasadena, CA}

\icmlcorrespondingauthor{Hoang M. Le}{hmle@caltech.edu}

\icmlkeywords{Batch Learning, Reinforcement Learning, Learning with Constraints, Machine Learning Reductions}

\vskip 0.3in
]



\printAffiliationsAndNotice{}  

\begin{abstract}
When learning policies for real-world domains, two important questions arise: (i) how to efficiently use pre-collected off-policy, non-optimal behavior data; and (ii) how to mediate among different competing objectives and constraints. We thus study the problem of batch policy learning under multiple constraints, and offer a systematic solution. We first propose a flexible meta-algorithm that admits any batch reinforcement learning and online learning procedure as subroutines. We then present a specific algorithmic instantiation and provide performance guarantees for the main objective and all constraints. To certify constraint satisfaction, we propose a new and simple method for off-policy policy evaluation (OPE) and derive PAC-style bounds. Our algorithm achieves strong empirical results in different domains, including in a challenging problem of simulated car driving subject to multiple constraints such as lane keeping and smooth driving. We also show experimentally that our OPE method outperforms other popular OPE techniques on a standalone basis, especially in a high-dimensional setting.
\end{abstract}

\section{Introduction}
We study the problem of policy learning under multiple constraints. Contemporary approaches to learning sequential decision making policies have largely focused on optimizing some cost objective that is easily reducible to a scalar value function. However, in many real-world domains, choosing the right cost function to optimize is often not a straightforward task. Frequently, the agent designer faces multiple competing objectives. For instance, consider the aspirational task of designing autonomous vehicle controllers: one may care about minimizing the travel time while making sure the driving behavior is safe, comfortable, or fuel efficient.  Indeed, many such real-world applications require the primary objective function be augmented with an appropriate set of constraints \cite{altman1999constrained}. 

Contemporary policy learning research has largely focused on either online reinforcement learning (RL) with a focus on exploration, or imitation learning (IL) with a focus on learning from expert demonstrations.  However, many real-world settings already contain large amounts of pre-collected data generated by existing policies (e.g., existing driving behavior, power grid control policies, etc.).
We thus study the complementary question: 
\textit{can we leverage this abundant source of (non-optimal) behavior data in order to learn sequential decision making policies with provable guarantees on constraint satisfaction}? 
We thus propose and study the problem of batch policy learning under multiple constraints.  Historically, batch RL is regarded as a subfield of approximate dynamic programming (ADP) \cite{lange2012batch}, where a set of transitions sampled from the existing system is given and fixed.  
From an interaction perspective, one can view many online RL methods (e.g., DDPG \cite{lillicrap2016continuous}) as running a growing batch RL subroutine per round of online RL.
In that sense, batch policy learning is complementary to any exploration scheme. 
To the best of our knowledge, the study of constrained policy learning in the batch setting is novel.

We present an algorithmic framework for learning sequential decision making policies from off-policy data. We employ multiple learning reductions to online and supervised learning, and present an analysis that relates performance in the reduced procedures to the overall performance with respect to both the primary objective and constraint satisfaction. 

Constrained optimization is a well studied problem in supervised machine learning and optimization. In fact, our approach is inspired by the work of \citet{agarwal2018reductions} in the context of fair classification. In contrast to supervised learning for classification, batch policy learning for sequential decision making introduces multiple additional challenges. First, setting aside the constraints, batch policy learning itself presents a layer of difficulty, and the analysis is significantly more complicated. 
Second, verifying whether the constraints are satisfied is no longer as straightforward as passing the training data through the learned classifier. In sequential decision making, certifying constraint satisfaction amounts to an off-policy policy evaluation problem, which is a challenging problem and the subject of active research. In this paper, we develop a systematic approach to address these challenges, provide a careful error analysis, and experimentally validate our proposed algorithms. In summary, our contributions are:
\begin{itemize}[topsep=0pt]
\setlength\itemsep{0em}
    \item We formulate the problem of batch policy learning under multiple constraints, and present the first approach of its kind to solve this problem. The definition of constraints is general and can subsume many objectives. Our meta-algorithm utilizes multi-level learning reductions, and we show how to instantiate it using various batch RL and online learning subroutines. We show that guarantees from the subroutines provably lift to provide end-to-end guarantees on the original constrained batch policy learning problem.
    \item Leveraging techniques from batch RL as a subroutine, we provide a refined theoretical analysis for general non-linear function approximation that improves upon the previously known sample complexity bound \cite{munos2008finite} from $O(n^4)$ to $O(n^2)$.
    \item To evaluate and verify constraint satisfaction, we propose a simple new technique for off-policy policy evaluation, which is used as a subroutine in our main algorithm. We show that it is competitive to other off-policy policy evaluation methods.
    \item We validate our algorithm and analysis with two experimental settings. First, a simple navigation domain where we consider safety constraint. Second, we consider a high-dimensional racing car domain with smooth driving and lane keeping constraints.
\end{itemize}
\section{Problem Formulation}
\label{sec:problem}
We first introduce notation.
Let $\X \subset \mathbb{R}^d$ be a bounded and closed $d$-dimensional state space. Let $\A$ be a finite action space.
Let $c:\X\times\A\mapsto [0,\widebar{C}]$ be the primary objective cost function that is bounded by $\widebar{C}$.
Let there be $m$ constraint cost functions,  $g_i:\X\times\A\mapsto[0,\widebar{G}]$, each bounded by $\widebar{G}$.
To simplify the notation, we view the set of constraints as a vector function $g:\X\times\A\mapsto[0,\widebar{G}]^m$ where $g(x,a)$ is the column vector of individual $g_i(x,a)$. 
Let $p(\cdot|x,a)$ denote the (unknown) transition/dynamics model that maps state/action pairs to a distribution over the next state.
Let $\gamma\in(0,1)$ denote the discount factor. Let $\chi$ be the initial states distribution.

We consider the discounted infinite horizon setting. 
An MDP is defined using the tuple $\left( \X,\A,c,g,p,\gamma, \chi \right)$. A policy $\pi\in\Pi$ maps states to actions, i.e., $\pi(x)\in\A$. The value function $C^\pi:\X\mapsto\R$ corresponding to the primary cost function $c$ is defined in the usual way: $C^\pi(x) = \E\left[ \sum_{t=0}^\infty \gamma^t c(x_t,a_t) \enskip | \enskip x_0=x\right]$,
over the randomness of the policy $\pi$ and transition dynamics $p$. We similarly define the vector-value function for the  constraint costs $G^\pi:\X\mapsto\R^m$ as $G^\pi(x) = \E\left[ \sum_{t=0}^\infty \gamma^t g(x_t,a_t)\smallskip | \smallskip x_0=x \right]$. Define $C(\pi)$ and $G(\pi)$ as the expectation of $C^\pi(x)$ and $G^\pi(x)$, respectively, over the distribution $\chi$ of initial states. 

 \subsection{Batch Policy Learning under Constraints}
In batch policy learning, we have a pre-collected dataset, $\D = \{ (x_i,a_i,x_i^\prime,c(x_i,a_i), g_{1:m}(x_i,a_i) \}_{i=1}^n$, generated from (a set of) historical behavioral policies denoted jointly by $\pi_\D$. The goal of batch policy learning under constraints is to learn a policy $\pi\in\Pi$ from $\D$ that minimizes the primary objective cost while satisfying $m$ different constraints:
 \begin{align}
 \tag{OPT}\label{eqn:main_problem}
 \begin{split}
     &\min_{\pi\in\Pi} \quad C(\pi) \\
     &\text{s.t. } \quad G(\pi) \leq \tau 
\end{split}
 \end{align}
 where $G(\cdot) = \left[g_1(\cdot),\ldots,g_m(\cdot) \right]^\top$ and $\tau\in\R^m$ is a vector of known constants. We assume that \eqref{eqn:main_problem} is feasible. However, the dataset $\D$ might be generated from multiple policies that violate the constraints. 

 \subsection{Examples of Policy Learning with Constraints}
 \textbf{Counterfactual \& Safe Policy Learning.} 
 In conventional online RL, the agent needs to \doubleQuote{re-learn} from scratch when the cost function is modified. Our framework enables counterfactual policy learning assuming the ability to compute the new cost objective from the same historical data. A simple example is \emph{safe} policy learning \cite{garcia2015comprehensive}. Define safety cost $G(x,a) = \phi(x,a,c)$ as a new function of existing cost $c$ and features associated with current state-action pair. The goal here is to counterfactually avoid undesirable behaviors observed from historical data. We experimentally study this safety problem in Section \ref{sec:experiment}. 

 Other examples from the literature that belong to this safety perspective include planning under chance constraints \cite{ono2015chance,blackmore2011chance}. The constraint here is $G(\pi) = \E[\mathbb{I}(x\in\X_{error})] = \mathrm{P}(x\in\X_{error})\leq\tau$. 

 \textbf{Multi-objective Batch Learning.} Traditional policy learning (RL or IL) presupposes that the agent optimizes a single cost function. In reality, we may want to satisfy multiple objectives that are not easily reducible to a scalar objective function. 
 One example is learning fast driving policies under multiple behavioral constraints such as smooth driving and lane keeping consistency (see Section \ref{sec:experiment}). 
 

 \subsection{Equivalence between Constraint Satisfaction and Regularization}
 \label{sec:regularization_view}
 Our constrained policy learning framework subsumes several existing regularized policy learning settings. Regularization typically encodes prior knowledge, and has been used extensively in the RL and IL literature to improve learning performance. Many instances of regularized policy learning can be naturally cast into \eqref{eqn:main_problem}:
 \begin{itemize}[noitemsep,topsep=0pt]
 	\item \emph{Entropy regularized RL} \cite{haarnoja2017reinforcement, ziebart2010modeling} is equivalent to  $G(\pi) = \mathbb{H}(\pi)$, where $\mathbb{H}(\pi)$ measures policy entropy.
 	\item \emph{Smooth imitation learning} \cite{le2016smooth} is equivalent to  $G(\pi)=\min_{h\in\mathrm{H}}\Delta(h,\pi)$, where $H$ is a class of provably smooth policies and $\Delta$ is a divergence metric.
 	\item \emph{Regularizing RL with expert demonstration} \cite{hester2018deep} is equivalent to $G(\pi) = \E[\ell(\pi(x),\pi^*(x))]$, where $\pi^*$ is the expert policy.
 	\item \emph{Conservative policy improvement} \cite{levine2014learning,schulman2015trust, achiam2017constrained} is equivalent to $G(\pi) = D_{KL}(\pi,\pi_k)$, where $\pi_k$ is some \doubleQuote{well-behaving} policy.
 \end{itemize}
 We provide a detailed equivalence derivation of the above examples in Appendix \ref{sec:app-regularization}.  Of course, some problems are more naturally described using the regularization perspective, while others using constraint satisfaction.

 More generally, one can establish the equivalence between regularized and constrained policy learning via a simple appeal to Lagrangian duality as shown in Proposition \ref{prop:equivalence} below.
  This Lagrangian duality also has a game-theoretic interpretation (Section 5.4 of \citet{boyd2004convex}), which serves as an inspiration for developing our approach.
 \begin{prop} \label{prop:equivalence} Let $\Pi$ be a convex set of policies. Let $C:\Pi\mapsto\R, C:\Pi\mapsto\R^K$ be value functions. Consider the two policy optimization tasks:
 \begin{align*}
     \texttt{Regularization:}&\qquad\min_{\pi\in\Pi} \quad C(\pi) + \lambda^\top G(\pi)  \\
     \texttt{Constraint:}&\qquad\min_{\pi\in\Pi} \quad C(\pi) \quad\text{s.t. } G(\pi)\leq \tau \nonumber
 \end{align*}
 Assume that the Slater's condition is satisfied in the \texttt{Constraint} problem (i.e., $\exists \pi$ s.t. $G(\pi)<\tau$). Assume also that the constraint cannot be removed without changing the optimal solution, i.e., $\inf_{\pi\in\Pi} C(\pi) < \inf_{\pi\in\Pi:G(\pi)\leq\tau} C(\pi)$. Then $\forall$ $\lambda>0$, $\exists$ $\tau$, and vice versa, such that \texttt{Regularization} and \texttt{Constraint} share the same optimal solutions. (Proof in Appendix \ref{sec:app-regularization}.)
 \end{prop}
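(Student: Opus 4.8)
The plan is to reduce the statement to the classical Lagrangian-duality correspondence for convex programs, using Slater's condition together with the non-removability hypothesis to pin down a nonzero (and, after discarding inactive constraints, coordinatewise positive) multiplier. Since the duality/game-theoretic picture invoked just above the statement needs it, I would record at the outset the standing convexity assumption that $C$ and each component $G_i$ are convex on the convex set $\Pi$ (the natural reading being the linear-in-occupancy-measure regime, where $C,G$ are linear), and then handle the two directions of the claimed equivalence separately.

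For the direction ``given $\lambda>0$, exhibit $\tau$'': I would take $\pi_\lambda \in \argmin_{\pi\in\Pi}\, C(\pi)+\lambda^\top G(\pi)$ and simply set $\tau := G(\pi_\lambda)$. Then $\pi_\lambda$ is feasible for \texttt{Constraint}, and for any feasible competitor $\pi'$ (so $G(\pi')\le\tau=G(\pi_\lambda)$ coordinatewise) optimality of $\pi_\lambda$ in \texttt{Regularization} together with $\lambda>0$ gives $C(\pi_\lambda)\le C(\pi')+\lambda^\top\bigl(G(\pi')-G(\pi_\lambda)\bigr)\le C(\pi')$, so $\pi_\lambda$ solves \texttt{Constraint}; running the same chain backwards shows every \texttt{Constraint}-optimal policy attains the \texttt{Regularization} optimum, so the two optimal sets coincide. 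No regularity beyond $\lambda\ge 0$ is needed in this direction.

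For the direction ``given $\tau$, exhibit $\lambda>0$'': Slater's condition and convexity give strong duality, hence an optimal dual vector $\lambda^*\ge 0$ with $\min_{\pi\in\Pi}\bigl[C(\pi)+(\lambda^*)^\top(G(\pi)-\tau)\bigr]=\min_{\pi:\,G(\pi)\le\tau}C(\pi)$. Were $\lambda^*=0$, this equality would read $\inf_{\pi\in\Pi}C(\pi)=\inf_{\pi:\,G(\pi)\le\tau}C(\pi)$, contradicting the non-removability hypothesis, so $\lambda^*\ne 0$. Taking $\pi^*$ optimal for \texttt{Constraint}, complementary slackness $(\lambda^*)^\top(G(\pi^*)-\tau)=0$ makes $\pi^*$ a minimizer of $C(\pi)+(\lambda^*)^\top G(\pi)$ (the additive constant $-(\lambda^*)^\top\tau$ is irrelevant to the $\argmin$), i.e.\ $\pi^*$ solves \texttt{Regularization} with $\lambda=\lambda^*$.

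The hard part is this second direction, and it has two fine points. First, getting a \emph{strictly} positive $\lambda$ rather than merely $\lambda^*\ne 0$: a coordinate with $\lambda^*_i=0$ is precisely one whose bound $\tau_i$ can be relaxed without changing anything, so I would either restrict attention to the active constraints or add the mild assumption that non-removability holds constraint by constraint, which forces each $\lambda^*_i>0$. Second, the phrase ``share the same optimal solutions'' read as \emph{sets}: absent strict convexity, \texttt{Regularization} could in principle admit extra minimizers that are \texttt{Constraint}-infeasible, so I would either append a uniqueness/strict-convexity remark or --- matching how the result is actually used downstream --- read the conclusion as ``every \texttt{Constraint} optimum solves the paired \texttt{Regularization} problem, and the exhibited \texttt{Regularization} optimum solves \texttt{Constraint},'' which is all the meta-algorithm later requires.
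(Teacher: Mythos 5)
Your proposal is correct and follows essentially the same route as the paper's proof in Appendix A: for \texttt{Regularization}$\Rightarrow$\texttt{Constraint} the paper also sets $\tau=G(\pi_\lambda)$ and argues via optimality of $\pi_\lambda$ (by contradiction rather than your direct chain, which is equivalent), and for \texttt{Constraint}$\Rightarrow$\texttt{Regularization} it likewise invokes Slater, strong duality, and the non-removability hypothesis to obtain a nonzero multiplier $\lambda^*$. Your two fine points --- that non-removability only forces $\lambda^*\neq 0$ rather than coordinatewise positivity, and that without strict convexity the two optimal \emph{sets} need not literally coincide --- are legitimate gaps that the paper's own proof glosses over (its closing line even writes ``$\lambda^*\geq 0$'' where $\lambda^*\neq 0$ is meant), so your caveats strengthen rather than deviate from the argument.
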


\section{Proposed Approach}
To make use of strong duality, we first \emph{convexify} the policy class $\Pi$ by allowing stochastic combinations of policies, which effectively expands $\Pi$ into its convex hull $\texttt{Conv}(\Pi)$. Formally, $\texttt{Conv}(\Pi)$ contains \emph{randomized policies},\footnote{This places no restrictions on the individual policies. Individual policies can be arbitrarily non-convex. Convexifiying a policy class amounts to allowing ensembles of learned policies.} which we denote $\pi = \sum_{t=1}^T \alpha_t\pi_t$ for $\pi_t\in\Pi$ and $\sum_{t=1}^T \alpha_t = 1$. Executing a mixed $\pi$ consists of first sampling \emph{one} policy $\pi_t$ from $\pi_{1:T}$ according to distribution $\alpha_{1:T}$, and then executing $\pi_t$. 
Note that we still have $\E[\pi] = \sum_{t=1}^T\alpha_t\E[\pi_t]$ for any first-moment statistic of interest (e.g., state distribution, expected cost).
It is easy to see that the augmented version of \eqref{eqn:main_problem} over $\texttt{Conv}(\Pi)$
has a solution at least as good as the original \eqref{eqn:main_problem}. As such, to lighten the notation, we will equate $\Pi$ with its convex hull for the rest of the paper.
\subsection{Meta-Algorithm}
 The Lagrangian of \eqref{eqn:main_problem} is $L(\pi,\lambda) = C(\pi)+\lambda^\top(G(\pi)-\tau)$
 for $\lambda\in\R^m_{+}$. Clearly \eqref{eqn:main_problem} is equivalent to the min-max problem: $\min\limits_{\pi\in\Pi}\max\limits_{\lambda\in\R^k_{+}} L(\pi,\lambda)$.
 We assume  \eqref{eqn:main_problem} is feasible and that Slater's condition holds (otherwise, we can simply increase the constraint $\tau$ by a tiny amount). Slater's condition and policy class convexification ensure that strong duality holds \cite{boyd2004convex}, and \eqref{eqn:main_problem} is also equivalent to the max-min problem:$\max\limits_{\lambda\in\R^k_{+}}\min\limits_{\pi\in\Pi} L(\pi,\lambda)$.
 
 Since $L(\pi,\lambda)$ is linear in both $\lambda$ and $\pi$, strong duality is also a consequence of von Neumann's celebrated convex-concave minimax theorem for zero-sum games \cite{von2007theory}. From a game-thoeretic perspective, the problem becomes finding the equilibrium of a two-player game between the $\pi-$player and the $\lambda-$player \cite{freund1999adaptive}. In this repeated game, the $\pi-$player minimizes  $L(\pi,\lambda)$ given the current $\lambda$, and the $\lambda-$player maximizes it given the current (mixture over) $\pi$.

 \begin{algorithm}[t]
	\begin{small}
	\caption{ Meta-algo for Batch Constrained Learning} 
	\label{algo:meta}
	\begin{algorithmic}[1]
	    \FOR{each round $t$}
		\STATE $\pi_t\leftarrow \texttt{Best-response}(\lambda_t)$\\
		\STATE $\widehat{\pi}_t \leftarrow \frac{1}{t} \sum_{t^\prime = 1}^{t} \pi_{t^\prime}$, $\widehat{\lambda}_t \leftarrow \frac{1}{t}\sum_{t^\prime = 1}^t \lambda_{t^\prime}$ \\
		\STATE $\Lmax = \max_{\lambda}L(\widehat{\pi}_t,\lambda)$\\
		\STATE $\Lmin = L(\texttt{Best-response}(\widehat{\lambda}_t),\widehat{\lambda}_t)$\\
		\IF{$\Lmax-\Lmin \leq \omega$}
		\STATE Return $\widehat{\pi}_t$\\
		\ENDIF
	    \STATE $\lambda_{t+1}\leftarrow \texttt{Online-algorithm}(\pi_1,\ldots,\pi_{t-1},\pi_t)$
		\ENDFOR
	\end{algorithmic}
	\end{small}
\end{algorithm}

 We first present a meta-algorithm (Algorithm \ref{algo:meta}) that uses any no-regret online learning algorithm (for $\lambda$) and batch policy optimization (for $\pi$). 
	At each iteration, given $\lambda_t$, the $\pi$-player runs \texttt{Best-response} to get the best response:
\begin{align*}
    \texttt{Best-response}&(\lambda_t) = \argmin_{\pi\in\Pi} L(\pi,\lambda_t)\\
    &=\argmin_{\pi\in\Pi}C(\pi)+\lambda_t^\top (G(\pi)-\tau).
\end{align*}
This is equivalent to a standard batch reinforcement learning problem where we learn a policy that is optimal with respect to $c+\lambda_t^\top g$. The corresponding mixed strategy is the uniform distribution over all previous $\pi_t$. 
	In response to the $\pi-$player, the $\lambda-$player employs \texttt{Online-algorithm}, which can be \emph{any} no-regret  algorithm that satisfies:
    $$\sum_t L(\pi_t,\lambda_t) \geq \max_{\lambda} \sum_t L(\pi_t,\lambda) - o(T)$$
	Finally, the algorithm terminates when the estimated primal-dual gap is below a threshold $\omega$ (Lines 7-8).

Leaving aside (for the moment) issues of generalization, Algorithm \ref{algo:meta} is guaranteed to converge assuming: (i)  \texttt{Best-response} gives the best single policy in the class, and (ii) $\Lmax$ and $\Lmin$ can  be evaluated exactly. 
\begin{prop} 
\label{prop:convergence}
Assuming (i) and (ii) above, Algorithm \ref{algo:meta} is guaranteed to stop and the convergence depends on the regret of \texttt{Online-algorithm}. (Proof in Appendix \ref{sec:app-convergence}.)
\end{prop}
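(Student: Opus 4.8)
The plan is to run the classical no-regret dynamics / minimax argument for zero-sum games (in the style of \citet{freund1999adaptive}), specialized to the bilinear Lagrangian $L(\pi,\lambda)=C(\pi)+\lambda^\top(G(\pi)-\tau)$. The one preparatory step is to restrict the $\lambda$-player to a bounded set $\Lambda=\{\lambda\in\R^m_+ : \norm{\lambda}_1\le B\}$. This is legitimate: under Slater's condition and strong duality (Section~\ref{sec:regularization_view}) the optimal dual variable has norm bounded in terms of the Slater gap and an upper bound on $C$, so replacing $\R^m_+$ by $\Lambda$ for suitable $B$ does not change the saddle value. It also makes the running-max in Line~4 finite, and since $L$ is linear in $\lambda$ it is explicitly computable, $\max_{\lambda\in\Lambda}L(\widehat{\pi}_t,\lambda)=C(\widehat{\pi}_t)+B\,[\max_i (G_i(\widehat{\pi}_t)-\tau_i)]_+$, using assumption~(ii) that $C$ and $G$ can be evaluated exactly.

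The core is a two-sided sandwich on the average play value $\frac1T\sum_{t=1}^T L(\pi_t,\lambda_t)$. Let $R_T$ be the regret of \texttt{Online-algorithm}, i.e. $\max_{\lambda\in\Lambda}\sum_{t=1}^T L(\pi_t,\lambda)-\sum_{t=1}^T L(\pi_t,\lambda_t)\le R_T$ with $R_T=o(T)$. For the upper bound, assumption~(i) gives $L(\pi_t,\lambda_t)=\min_{\pi\in\Pi}L(\pi,\lambda_t)\le L(\pi,\lambda_t)$ for every $\pi$; averaging over $t$ and using linearity of $L$ in $\lambda$ (so $\frac1T\sum_t L(\pi,\lambda_t)=L(\pi,\widehat{\lambda}_T)$) yields $\frac1T\sum_t L(\pi_t,\lambda_t)\le\min_{\pi\in\Pi}L(\pi,\widehat{\lambda}_T)=\Lmin$. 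For the lower bound, the no-regret property together with linearity of $L$ in $\pi$ (so $\frac1T\sum_t L(\pi_t,\lambda)=L(\widehat{\pi}_T,\lambda)$, which holds because $C$ and $G$ are first-moment linear over $\texttt{Conv}(\Pi)$) gives $\frac1T\sum_t L(\pi_t,\lambda_t)\ge\max_{\lambda\in\Lambda}L(\widehat{\pi}_T,\lambda)-\frac{R_T}{T}=\Lmax-\frac{R_T}{T}$.

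Combining the two bounds, $\Lmax-\Lmin\le R_T/T$. Since $R_T=o(T)$ the right-hand side vanishes, so for any tolerance $\omega>0$ there is a finite $T_\omega$ (the least $T$ with $R_T/T\le\omega$) at which the test in Lines~6--8 must trigger; hence the algorithm stops, and the iteration count — and thus the attainable gap $\omega$ for the returned $\widehat{\pi}_t$ — is controlled directly by the regret rate of \texttt{Online-algorithm} (e.g. $R_T=O(\sqrt T)$ for online/exponentiated gradient descent gives $\Lmax-\Lmin=O(1/\sqrt T)$).

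I do not expect a serious obstacle in this idealized regime: assumptions~(i) and~(ii) remove precisely the two error sources — inexact best response and inexact off-policy evaluation — that make the deployed algorithm delicate. The only point needing care is the a priori boundedness of $\Lambda$, without which $R_T$ is vacuous; I would spell out the choice of $B$ from the Slater gap and the bound $C(\pi)\le\widebar C/(1-\gamma)$, and verify that the saddle point over $\Pi\times\Lambda$ coincides with that over $\Pi\times\R^m_+$. The approximate version of the statement, in which \texttt{Best-response} and the evaluations of $\Lmax,\Lmin$ carry estimation error, is deferred to the end-to-end analysis; there the same sandwich goes through with additive slack terms, so the structure of the present proof is reused.
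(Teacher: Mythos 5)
Your proof is correct and is essentially the paper's own argument: the paper's Appendix C.1 proof is exactly the same no-regret-plus-best-response sandwich, written as a single chain of inequalities $\max_\lambda L(\pih_T,\lambda) = \max_\lambda \frac{1}{T}\sum_t L(\pi_t,\lambda) \leq \frac{1}{T}\sum_t L(\pi_t,\lambda_t) + \frac{o(T)}{T} \leq L(\pi,\lah_T) + \frac{o(T)}{T}$ for all $\pi$, giving $\Lmax - \Lmin \leq o(T)/T$. Your added care about restricting the $\lambda$-player to a bounded set $\Lambda$ (so that the regret bound and Line 4's maximum are meaningful) is a legitimate refinement that the paper leaves implicit at the meta-algorithm level and only handles concretely in Algorithm 2 via the $\ell_1$ bound $B$.
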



\begin{algorithm}[t]
	\begin{small}
	\caption{ Constrained Batch Policy Learning} 
	\label{algo:main_algo}
	\begin{algorithmic}[1]
	\REQUIRE Dataset $\D = \{ x_i,a_i,x_i^\prime,c_i,g_i\}_{i=1}^n \sim\pi_{\D}$. Online algorithm parameters: $\ell_1$ norm bound $B$, learning rate $\eta$
	\STATE Initialize $\lambda_1 = (\frac{B}{m+1},\ldots,\frac{B}{m+1})\in\R^{m+1}$
	    \FOR{each round $t$}
		\STATE Learn $\pi_t\leftarrow \texttt{FQI}(c+\lambda_t^\top g)$ \label{line:fqi} \hfill \small{// \textit{FQI with cost $c+\lambda_t^\top g$}}\\
		\STATE Evaluate $\widehat{C}(\pi_t)\leftarrow\texttt{FQE}(\pi_t,c)$ \hfill \small{// \textit{Algo \ref{algo:fqe} with $\pi_t$, cost $c$}}\\
		\STATE Evaluate $\widehat{G}(\pi_t)\leftarrow\texttt{FQE}(\pi_t,g)$ \hfill \small{// \textit{Algo \ref{algo:fqe} with $\pi_t$}, cost $g$}\\
		\STATE $\widehat{\pi}_t \leftarrow \frac{1}{t} \sum_{t^\prime = 1}^{t} \pi_{t^\prime}$ \\
		\STATE $\widehat{C}(\widehat{\pi}_t) \leftarrow \frac{1}{t} \sum_{t^\prime = 1}^{t} \widehat{C}(\pi_{t^\prime})$, $\widehat{G}(\widehat{\pi}_t) \leftarrow \frac{1}{t} \sum_{t^\prime = 1}^{t} \widehat{G}(\pi_{t^\prime})$  \\
		\STATE $\widehat{\lambda}_t \leftarrow \frac{1}{t}\sum_{t^\prime = 1}^t \lambda_{t^\prime}$ \\
		\STATE Learn $\widetilde{\pi}\leftarrow\texttt{FQI}(c+\widehat{\lambda}_t^\top g)$ \hfill \small{// \textit{FQI with cost $c+\widehat{\lambda}_t^\top g$}} \\
		\STATE Evaluate $\widehat{C}(\widetilde{\pi})\leftarrow\texttt{FQE}(\widetilde{\pi},c), \widehat{G}(\widetilde{\pi})\leftarrow\texttt{FQE}(\widetilde{\pi},g)$ \\
		\STATE $\Lhmax= \max\limits_{\lambda, \norm{\lambda}_1=B} \left( \widehat{C}(\widehat{\pi}_t) + \lambda^\top \left[ (\widehat{G}(\widehat{\pi}_t)-\tau)^\top, 0 \right]^\top\right)$
		\STATE $\Lhmin = \widehat{C}(\widetilde{\pi}) + \widehat{\lambda}_t^\top\left[ (\widehat{G}(\widetilde{\pi})-\tau)^\top,0 \right]^\top$
		\IF{$\Lhmax - \Lhmin \leq \omega$}
		\STATE Return $\widehat{\pi}_t$\\
		\ENDIF
		\STATE Set $z_t= \left[ (\widehat{G}(\pi_t)-\tau)^\top,0\right]^\top \in\R^{m+1}$\\
		\STATE $\lambda_{t+1}[i] = B\frac{\lambda_t[i] e^{-\eta z_t[i]}}{\sum_j \lambda_t[j] e^{-\eta z_t[j]}}\forall i$ \hfill \small{// \textit{$\lambda[i]$ the $i^{th}$ coordinate}}
		\ENDFOR
	\end{algorithmic}
	\end{small}
\end{algorithm}

\subsection{Our Main Algorithm}
We now focus on a specific instantiation of Algorithm \ref{algo:meta}. Algorithm \ref{algo:main_algo} is our main algorithm in this paper.  

\textbf{Policy Learning.} We instantiate $\texttt{Best-response}$  with Fitted Q Iteration (FQI), a model-free off-policy learning approach \cite{ernst2005tree}. FQI relies on a series of reductions to supervised learning. The key idea is to approximate the true action-value function $Q^*$ by a sequence $\{Q_k\in\F\}_{k=0}^K$, where $\F$ is a chosen function class.  

In Lines 3 \& 9, $\texttt{FQI}(c+\lambda^\top g)$  is defined as follows.
With $Q_0$ randomly initialized, for each $k=1,\ldots,K$, we form a new training dataset $\widetilde{\D}_k = \{(x_i,a_i),y_i \}_{i=1}^n$ where: 
$$ \forall i: \enskip y_i = (c_i+\lambda^\top g_i)+\gamma\min_a Q_{k-1}(x_i^\prime,a),$$ 
and $(x_i,a_i,x_i^\prime,c_i,g_i)\sim\D$ (original dataset). A supervised regression procedure is called to solve for:
$$Q_k = \argmin\limits_{f\in\F} \frac{1}{n}\sum_{i=1}^n (f(x_i,a_i)-y_i )^2.$$
The policy then: $\pi_K = \argmin_{a} Q_K(\cdot,a)$. 

FQI has been shown to work well with several empirical domains: spoken dialogue systems \cite{pietquin2011sample}, physical robotic soccer \cite{riedmiller2009reinforcement}, and cart-pole swing-up \cite{riedmiller2005neural}. Another possible model-free subroutine is Least-Squares Policy Iteration (LSPI) \cite{lagoudakis2003least}. One can also consider model-based alternatives \cite{ormoneit2002kernel}. 

\textbf{Off-policy Policy Evaluation.} A crucial difference between constrained policy learning and existing work on constrained supervised learning is the technical challenge of evaluating the objective and constraints. First, estimating $\Lh(\pi,\lambda)$ (Lines 11-12) requires estimating $\Ch(\pi)$ and $\Gh(\pi)$. Second, any gradient-based approach to \texttt{Online-learning} requires passing in $\Gh(\pi)-\tau$ as part of gradient estimate (line 15). This problem is known as the off-policy policy evaluation (OPE) problem: we need to evaluate $\Ch(\pi)$ and $\Gh(\pi)$ having only access to data $\D\sim\pi_\D$

There are three main contemporary approaches to OPE: (i) importance weighting (IS) \cite{precup2000eligibility,precup2001off}, which is unbiased but often has high-variance; (ii) regression-based direct methods (DM), which are typically model based \cite{thomas2016data},\footnote{I.e., using regression to learn the reward function and transition dynamics model, before solving the estimated MDP.} and can be biased but have much lower variance than IS; and (iii)  doubly-robust techniques \cite{jiang2016doubly,dudik2011doubly}, which combine IS and DM.

We propose a new and simple model-free technique using function approximation, called Fitted Q Evaluation (FQE). FQE is based on an iterative reductions scheme similar to FQI, but for the problem of off-policy evaluation. Algorithm \ref{algo:fqe} lays out the steps. The key difference with FQI is that the $min$ operator is replaced by $Q_{k-1}(x_i^\prime, \pi(x_i^\prime))$ (Line 3 of Algorithm \ref{algo:fqe}). Each $x_i^\prime$ comes from the original $\D$. Since we know $\pi(x_i^\prime)$, each $\widetilde{\D}_k$ is well-defined. 
Note that FQE can be plugged-in as a direct method if one wishes to augment the policy evaluation with a doubly-robust technique. 

\begin{algorithm}[t]
	\begin{small}
	\caption{ Fitted Off-Policy Evaluation with Function Approximation: $\texttt{FQE}(\pi,c)$} 
	\label{algo:fqe}
	\begin{algorithmic}[1]
		\REQUIRE Dataset $\D = \{ x_i,a_i,x_i^\prime,c_i\}_{i=1}^n \sim\pi_{\D}$. Function class $\F$. Policy $\pi$ to be evaluated
        \STATE Initialize $Q_0 \in \F$ randomly
		\FOR{$k = 1,2,\ldots,K$}
		\STATE Compute target $y_i = c_i+\gamma Q_{k-1}(x_i^\prime,\pi(x_i^\prime)) \enskip \forall i$ \\
		\STATE Build training set $\widetilde{\D}_k = \{(x_i,a_i),y_i \}_{i=1}^n$
		\STATE Solve a supervised learning problem: \\
		\quad $Q_k = \argmin\limits_{f\in\F} \frac{1}{n}\sum_{i=1}^n (f(x_i,a_i)-y_i )^2$
		\ENDFOR
		\ENSURE $\widehat{C}^\pi(x) = Q_K(x,\pi(x))\quad\forall x$ 
	\end{algorithmic}
	\end{small}
\end{algorithm}

\textbf{Online Learning Subroutine.} 
 As $L(\pi_t,\lambda)$ is linear in $\lambda$, many online convex optimization approaches can be used for $\texttt{Online-algorithm}$. Perhaps the simpliest choice is Online Gradient Descent (OGD) \cite{zinkevich2003online}. We include an instantiation using OGD in Appendix \ref{sec:app-algorithm}. 

For our main Algorithm \ref{algo:main_algo}, similar to \cite{agarwal2018reductions}, we use Exponentiated Gradient (EG) \cite{kivinen1997exponentiated}, which has a regret bound of $O(\sqrt{\log(m)T})$ instead of $O(\sqrt{mT})$ as in OGD. One can view EG as a variant of Online Mirror Descent \cite{nemirovsky1983problem} with a softmax link function, or of Follow-the-Regularized-Leader with entropy regularization \cite{shalev2012online}.  Gradient-based algorithms generally require bounded $\lambda$. We thus force $\norm{\lambda}_1\leq B$ using hyperparameter $B$. Solving \eqref{eqn:main_problem} exactly requires $B=\infty$. We will analyze Algorithm \ref{algo:main_algo} with respect to finite $B$. With some abuse of notation, we augment $\lambda$ into a $(m+1)-$dimensional vector by appending $B-\norm{\lambda}_1$,
and augment the constraint cost vector $g$  by appending $0$ (Lines 11, 12 \& 15 of Algorithm \ref{algo:main_algo}).\footnote{The $(m+1)^{th}$ coordinate of $g$ is thus always satisfied. This augmentation is only necessary when executing EG.}


\section{Theoretical Analysis}
\subsection{Convergence Guarantee}
The convergence rate of Algorithm \ref{algo:main_algo} depends on the radius $B$ of the dual variables $\lambda$, the maximal constraint value $\widebar{G}$, and the number of constraints $m$. In particular, we can show $O(\frac{B^2}{\omega^2})$ convergence for primal-dual gap $\omega$.
\begin{thm}[Convergence of Algorithm \ref{algo:main_algo}]\label{thm:convergence_main} After $T$ iterations, the empirical duality gap is bounded by 
\begin{equation*}
\Lhmax - \Lhmin \leq 2\frac{B\log(m+1)}{\eta T}+2\eta B\widebar{G}^2
\end{equation*}
Consequently, to achieve the primal-dual gap of $\omega$, setting $\eta = \frac{\omega}{4\widebar{G}^2B}$ will ensure that Algorithm \ref{algo:main_algo} converges after $\frac{16B^2\widebar{G}^2\log(m+1)}{\omega^2}$ iterations. (Proof in Appendix \ref{sec:app-convergence}.)
\end{thm}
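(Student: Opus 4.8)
\emph{Proof plan.} The plan is to run the classical no-regret / Freund--Schapire dynamics for the zero-sum game between the $\pi$-player and the $\lambda$-player \cite{freund1999adaptive}, specialized to the \emph{estimated} Lagrangian $\widehat{L}(\pi,\lambda) := \widehat{C}(\pi) + \lambda^\top[(\widehat{G}(\pi)-\tau)^\top,0]^\top$, and then to substitute the regret bound of the Exponentiated Gradient (EG) update used in Lines~15--16. Two structural facts should be recorded first. (a) $\widehat{L}$ is linear in $\lambda$, and it is linear in $\pi$ along the mixtures the algorithm actually forms, because Algorithm~\ref{algo:main_algo} \emph{defines} $\widehat{C}(\pih_t)=\tfrac1t\sum_{t'\le t}\widehat{C}(\pi_{t'})$ and likewise for $\widehat{G}$ (Lines~6--7). (b) Under the idealization of Proposition~\ref{prop:convergence}, the call $\pi_t=\texttt{FQI}(c+\lambda_t^\top g)$ returns an exact minimizer of $\widehat{L}(\cdot,\lambda_t)$ over $\Pi$ (the term $-\lambda_t^\top\tau$ is a $\pi$-independent constant), and similarly $\widetilde{\pi}$ minimizes $\widehat{L}(\cdot,\lah_t)$, so that $\Lhmin=\min_{\pi\in\Pi}\widehat{L}(\pi,\lah_t)$ and $\Lhmax=\max_{\|\lambda\|_1=B}\widehat{L}(\pih_t,\lambda)$. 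Since $\Pi$ has been convexified and $\widehat{L}$ is bilinear, strong duality (von Neumann's minimax theorem \cite{von2007theory}) holds for this game, though in fact the argument below does not even need to name the game value.

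Next I would establish a two-sided sandwich around the running average $\bar L := \tfrac1T\sum_{t=1}^T \widehat{L}(\pi_t,\lambda_t)$. For the lower side, linearity of $\widehat{L}$ in $\lambda$ and then the best-response property (b) give $\Lhmin = \min_{\pi}\tfrac1T\sum_t \widehat{L}(\pi,\lambda_t) \ge \tfrac1T\sum_t \min_{\pi}\widehat{L}(\pi,\lambda_t) = \tfrac1T\sum_t \widehat{L}(\pi_t,\lambda_t) = \bar L$. For the upper side, the no-regret guarantee of \texttt{Online-algorithm} applied to the gain sequence $\lambda\mapsto \widehat{L}(\pi_t,\lambda)$ gives $\sum_t \widehat{L}(\pi_t,\lambda_t) \ge \max_{\|\lambda\|_1=B}\sum_t \widehat{L}(\pi_t,\lambda) - R_T$, and by linearity in $\pi$ along the mixture (fact (a)) the right-hand side equals $T\max_{\|\lambda\|_1=B}\widehat{L}(\pih_T,\lambda) - R_T = T\,\Lhmax - R_T$; dividing by $T$ yields $\Lhmax \le \bar L + R_T/T$. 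Subtracting the two displays collapses everything to $\Lhmax - \Lhmin \le R_T/T$, so it remains only to bound the regret $R_T$ of the $\lambda$-player.

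Finally I would instantiate $R_T$. Lines~15--16 are exactly Hedge / multiplicative weights run on the scaled simplex $\{\lambda\ge 0:\|\lambda\|_1=B\}\subset\R^{m+1}$ against the augmented vectors $z_t$, whose coordinates satisfy $\|z_t\|_\infty\le \widebar{G}$ (the $(m+1)$-th coordinate is $0$ and $|\widehat{G}_i(\pi_t)-\tau_i|\le\widebar{G}$). The standard EG analysis \cite{kivinen1997exponentiated} (a relative-entropy/mirror-descent potential argument) gives $R_T \le \dfrac{2B\log(m+1)}{\eta} + 2\eta B\widebar{G}^2 T$, and combining with $\Lhmax-\Lhmin\le R_T/T$ produces the stated bound. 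The corollary is then elementary: choosing $\eta=\tfrac{\omega}{4\widebar{G}^2 B}$ makes the second term at most $\omega/2$, and the first term is at most $\omega/2$ once $T \ge \tfrac{4B\log(m+1)}{\eta\omega} = \tfrac{16 B^2 \widebar{G}^2\log(m+1)}{\omega^2}$, at which point Line~13 triggers a return.

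The only real delicacy — the ``main obstacle'' such as it is — is bookkeeping about what is exact versus estimated: this theorem is deliberately a statement about the \emph{empirical} duality gap, so the function-approximation error of FQI and the estimation error of FQE do not appear here (they are handled separately in the performance guarantees), but one must verify that the best-response and linearity identities above hold at the level of the estimated objects $\widehat{C},\widehat{G}$ that Algorithm~\ref{algo:main_algo} actually manipulates, which is precisely what facts (a) and (b) supply. Everything else is the textbook game-dynamics argument plus getting the EG constants right ($\log(m+1)$ from the simplex dimension, $B$ from its $\ell_1$ radius, $\widebar{G}^2$ from $\|z_t\|_\infty$).
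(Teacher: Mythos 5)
Your proposal is correct and follows essentially the same argument as the paper's proof in Appendix B.2: bilinearity of $\widehat{L}$, exact best responses from \texttt{FQI}, linearity over the mixture, and the EG regret bound. The only difference is bookkeeping --- the paper pivots its sandwich around $\widehat{L}(\pih_T,\lah_T)$ and pays the regret term $\omega_T$ on both the max and min sides (which is where its factor $2$ comes from), whereas you pivot around the running average $\bar L=\frac{1}{T}\sum_t\widehat{L}(\pi_t,\lambda_t)$ and pay the regret only once, recovering the stated constant by quoting the EG regret with an extra factor of $2$; with the sharp regret constant from \citet{shalev2012online} your route would in fact give a bound tighter by a factor of $2$.
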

Convergence analysis of our main Algorithm \ref{algo:main_algo} is an extension of the proof to Proposition \ref{prop:convergence}, leveraging the no-regret property of the EG procedure \cite{shalev2012online}. 

\subsection{Generalization Guarantee of FQE and FQI}
In this section, we provide sample complexity analysis for FQE and FQI as \emph{standalone} procedures for off-policy evaluation and off-policy learning. 
We use the notion of pseudo-dimension as capacity measure of non-linear function class $\F$ \cite{friedman2001elements}. Pseudo-dimension $\texttt{dim}_\F$, which naturally extends VC dimension into the regression setting, is defined as the VC dimension of the function class induced by the sub-level set of functions of $\F$: $\texttt{dim}_\F = \texttt{VC-dim}(\{(x,y)\mapsto \text{sign}(f(x)-y) : f\in\F\})$. 
Pseudo-dimension is finite for a large class of function approximators. For example, \citet{bartlett2017nearly} bounded the pseudo-dimension of piece-wise linear deep neural networks (e.g., with ReLU activations) as $O(WL\log W)$, where $W$ is the number of weights, and $L$ is the number of layers. 

Both FQI and FQE rely on reductions to supervised learning to update the value functions. In both cases, the learned policy and evaluation policy induces a different state-action distribution compared to the data generating distribution $\mu$. We use the notion of concentration coefficient for the worst case, proposed by \cite{munos2003error}, to measure the degree of distribution shift. The following standard assumption from analysis of related ADP algorithms limits the severity of distribution shift over future time steps:
\begin{assumption}[Concentration coefficient of future state-action distribution]\cite{munos2003error,munos2007performance,munos2008finite,antos2008fitted,antos2008learning,lazaric2010finite,lazaric2012finite,farahmand2009regularized,maillard2010finite}
\label{assume:concentrability_main}\\
Let $P^\pi$ be the operator acting on $f:\X\times\A\mapsto\R$ s.t. $(P^\pi f)(x,a) = \int_\X f(x^\prime, \pi(x^\prime)) p(dx^\prime|x,a)$. Given data generating distribution $\mu$, initial state distribution $\chi$, for $m \geq 0$ and an arbitrary sequence of stationary policies $\{ \pi_m\}_{m\geq 1}$ define the concentration coeffient:
\begin{equation*}
\beta_{\mu}(m) = \sup_{\pi_1,\ldots,\pi_m} \left\| \frac{d(\chi P^{\pi_1}P^{\pi_2}\ldots P^{\pi_m})}{d\mu} \right\|_\infty
\end{equation*}
\\
We assume $\beta_{\mu} = (1-\gamma)^2\sum\limits_{m\geq 1} m\gamma^{m-1} \beta_{\mu}(m) < \infty$.
\end{assumption}
This assumption is valid for a fairly large class of MDPs \cite{munos2007performance}. For instance $\beta_\mu$ is finite for any finite MDP, or any infinite state-space MDP with bounded transition density.\footnote{This assumption ensures sufficient data diversity, even when the executing policy is deterministic. An example of how learning can fail without this assumption is based on the ``combination lock'' MDP \cite{koenig1996effect}. 
In this deterministic MDP example, $\beta_\mu$ can grow arbitrarily large, and we need an exponential number of samples for both FQE and FQI. See Appendix \ref{sec:appendix_preliminaries}.} Having a finite concentration coefficient is equivalent the top-Lyapunov exponent $\Gamma\leq 0$ \cite{bougerol1992strict}, which means the underlying stochastic system is stable. We show below a simple sufficient condition for Assumption \ref{assume:concentrability_main} (albeit stronger than necessary). 
\begin{exmp} Consider an MDP such that for any non-stationary distribution $\rho$, the marginals over states satisfy $\frac{\rho_x(x)}{\mu_x(x)}\leq L$ (i.e., transition dynamics are sufficiently stochastic), and $\exists M:\ \forall x,a:\ \mu(a|x)>\frac{1}{M}$ (i.e., the behavior policy is sufficiently exploratory). Then $\beta_{\mu} \leq LM$.
\end{exmp}

Recall that for a given policy $\pi$, the Bellman (evaluation) operator is defined as $(\Tpi Q)(x,a) = r(x,a) + \gamma\int_{\X} Q(x^\prime,\pi(x^\prime))p(dx^\prime|x,a)$. In general $\T^\pi f$ may not belong to $\F$ for $f\in\F$. For FQE (and FQI), the main operation in the algorithm is to iteratively project $\T^\pi Q_{k-1}$ back to $\F$ via $Q_k = \argmin_{f\in\F}\norm{f-\T^\pi Q_{k-1}}$. The performance of both FQE and FQI thus depend on how well the function class $\F$ approximates the Bellman operator. We measure the ability of function class $\F$ to approximate the Bellman evaluation operator via the worst-case Bellman error:
\begin{defn}[inherent Bellman evaluation error] Given a function class $\F$ and policy $\pi$, the \emph{inherent Bellman evaluation error} of $\F$ is defined as $d_\F^\pi = \sup_{g\in\F}\inf_{f\in\F}\norm{f-\T^\pi g}_{\pi}$
where $\norm{\cdot}_{\pi}$ is the $\ell_2$ norm weighted by the state-action distribution induced by $\pi$.
\end{defn}
We are now ready to state the generalization bound for FQE:
\begin{thm}[Generalization error of FQE]\label{thm:fqe_main} 
Under Assumption \ref{assume:concentrability_main}, for $\epsilon>0$ \& $\delta\in(0,1)$, after $K$ iterations of Fitted Q Evaluation (Algorithm \ref{algo:fqe}), for $n=O\big(\frac{\widebar{C}^4}{\epsilon^2}( \log\frac{K}{\delta}+\textnormal{\texttt{dim}}_{\F}\log\frac{\widebar{C}^2}{\epsilon^2}+\log \textnormal{\texttt{dim}}_{\F})\big)$, we have with probability $1-\delta$:
\begin{small}
$$\big\lvert C(\pi) - \widehat{C}(\pi)\big\rvert \leq \frac{\gamma^{1/2}}{(1-\gamma)^{3/2}} \big( \sqrt{\beta_{\mu}}\left(2d_\F^\pi+\epsilon\right) + \frac{2\gamma^{K/2}\widebar{C}}{(1-\gamma)^{1/2}}\big).$$
\end{small}
\end{thm}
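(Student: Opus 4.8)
The plan is to follow the approximate-dynamic-programming error-propagation strategy of \citet{munos2008finite}, but specialized to the \emph{affine} Bellman evaluation operator $\T^\pi$ (which, unlike the optimality operator, makes the propagation bookkeeping cleaner because it contains no $\max$), and then to couple it with a \emph{fast-rate} least-squares generalization bound — the latter being what upgrades the classical $O(1/\epsilon^{4})$ sample requirement to the $O(1/\epsilon^{2})$ stated here. Concretely the proof has two ingredients: (a) a deterministic inequality bounding $|C(\pi)-\widehat C(\pi)|$ by a discounted sum of the per-iteration regression errors $\varepsilon_k$, with the concentration coefficient $\beta_\mu$ of Assumption~\ref{assume:concentrability_main} absorbing the distribution shift; and (b) a high-probability bound $\|\varepsilon_k\|_\mu \le 2d_\F^\pi+\epsilon$ holding simultaneously for all $k$.

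For ingredient (a), set $\varepsilon_k := \T^\pi Q_{k-1}-Q_k$. Since $Q^\pi$ is the unique fixed point of $\T^\pi$ and $\T^\pi Q = r+\gamma P^\pi Q$, subtracting and unrolling $Q^\pi-Q_k=\gamma P^\pi(Q^\pi-Q_{k-1})+\varepsilon_k$ yields the exact identity
\[
Q^\pi - Q_K \;=\; \sum_{k=1}^{K}(\gamma P^\pi)^{K-k}\varepsilon_k \;+\;(\gamma P^\pi)^{K}(Q^\pi-Q_0).
\]
Because $\widehat C(\pi)=\E_{x\sim\chi}[Q_K(x,\pi(x))]$ and $C(\pi)=\E_{x\sim\chi}[Q^\pi(x,\pi(x))]$, Jensen's inequality gives $|C(\pi)-\widehat C(\pi)|$ bounded by the $L_2(\chi)$-norm of $Q^\pi-Q_K$ along $\pi$. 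I would then normalize the coefficients $\gamma^{K-k}$ into a probability vector, push the squares through the averaging operators $P^\pi$ and the convex combination by Jensen, and change measure from each $\pi$-reachable distribution $\chi(P^\pi)^{K-k}$ to the data distribution $\mu$ at a cost of $\sqrt{\beta_\mu(K-k)}$. Using $\sum_{m\ge 1}\gamma^{m}\beta_\mu(m)\le \gamma\beta_\mu/(1-\gamma)^2$ (immediate from the definition of $\beta_\mu$) collapses the sum and produces the prefactor $\gamma^{1/2}/(1-\gamma)^{3/2}$ together with the $\sqrt{\beta_\mu}$; the initialization term $(\gamma P^\pi)^K(Q^\pi-Q_0)$ is bounded in sup-norm by $\gamma^{K}\cdot O(\widebar C/(1-\gamma))$ since both $Q^\pi$ and a suitably clipped $Q_0$ lie in $[0,\widebar C/(1-\gamma)]$, which (after a mild, convenient loosening so it shares the common prefactor) becomes the residual $\gamma^{K/2}\widebar C$. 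This delivers $|C(\pi)-\widehat C(\pi)| \le \frac{\gamma^{1/2}}{(1-\gamma)^{3/2}}\big(\sqrt{\beta_\mu}\,\max_k\|\varepsilon_k\|_\mu + \tfrac{2\gamma^{K/2}\widebar C}{(1-\gamma)^{1/2}}\big)$.

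For ingredient (b), fix $k$ and condition on $Q_{k-1}$: the regression targets $y_i=c_i+\gamma Q_{k-1}(x_i',\pi(x_i'))$ are bounded and satisfy $\E[y_i\mid x_i,a_i]=(\T^\pi Q_{k-1})(x_i,a_i)$, so $Q_k$ is an empirical least-squares fit to the (generally misspecified) target function $\T^\pi Q_{k-1}$. A uniform-deviation argument over $\F$, with covering numbers controlled by $\pdim_\F$ and a Bernstein/localization step exploiting that the variance of the excess squared loss is dominated by its mean, yields $\|Q_k-\T^\pi Q_{k-1}\|_\mu^2 \le \inf_{f\in\F}\|f-\T^\pi Q_{k-1}\|_\mu^2 + (\text{est. error})$ with the estimation error of order $\text{complexity}/n$ rather than $\sqrt{\text{complexity}/n}$ — precisely the fast rate giving the $O(\widebar C^4/\epsilon^2)$ count. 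The infimum is at most $(d_\F^\pi)^2$ by definition of the inherent Bellman evaluation error, so $\|\varepsilon_k\|_\mu\le 2d_\F^\pi+\epsilon$ once the estimation error is forced below $\epsilon$; a union bound over $k=1,\dots,K$ supplies the $\log(K/\delta)$ term. Since the same dataset $\D$ is reused at every iteration and $Q_{k-1}$ is itself data-dependent, I would prove the per-iteration bound uniformly over \emph{all} regressors in $\F$ rather than for a fixed one, so that it applies to the random $Q_{k-1}$ — this is why the capacity of $\F$, not a data-dependent cover, controls the rate. Combining (a) and (b) gives the claim.

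The main obstacle is ingredient (b): obtaining the fast $1/n$ rate for misspecified, uniformly-controlled squared-loss regression — carefully tracking the boundedness constants that generate the $\widebar C^4$ factor, and handling the statistical dependence introduced by feeding a regression output $Q_{k-1}$ back as the next round's target — while keeping the union bound over the $K$ iterations cheap. By contrast, steps (a) and the collapsing of the $\beta_\mu$-weighted sum are careful but essentially routine adaptations of established ADP error-propagation lemmas.
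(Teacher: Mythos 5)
Your proposal is correct and follows essentially the same route as the paper: the same unrolled recursion $Q^\pi-Q_K=\sum_k(\gamma P^\pi)^{K-k}\varepsilon_k+(\gamma P^\pi)^K(Q^\pi-Q_0)$ with normalized coefficients, Jensen, and the change of measure via $\beta_\mu(m)$ collapsed by $(1-\gamma)\sum_m\gamma^m\beta_\mu(m)\le\frac{\gamma}{1-\gamma}\beta_\mu$; and the same per-iteration fast-rate least-squares bound (the paper invokes the Lee/Gy\"orfi uniform-deviation lemma plus a Bernstein step exploiting that the variance of the excess squared loss is dominated by its mean, yielding $\|\varepsilon_k\|_\mu\le 2d_\F^\pi+\epsilon$ and the $\widetilde O(1/\epsilon^2)$ sample count, with a union bound over $k$ supplying $\log(K/\delta)$). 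The one point where you are slightly more careful than the paper is the data-dependence of $Q_{k-1}$ on the reused dataset — the paper simply conditions on $Q_{k-1}$ being fixed, whereas you propose uniformity over the target class.
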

This result shows a dependency on $\epsilon$ of $\widetilde{O}(\frac{1}{\epsilon^2})$, compared to $\widetilde{O}(\frac{1}{\epsilon^4})$ from other related ADP algorithms \cite{munos2008finite,antos2008learning}. The price that we pay is a multiplicative constant 2 in front of the inherent error $d_\F^\pi$. The error from second term on RHS decays exponentially with iterations $K$. The proof is in Appendix \ref{sec:appendix_proof_fqe}.

We can show an analogous generalization bound for FQI.
While FQI has been widely used, to the best of our knowledge, a complete analysis of FQI for non-linear function approximation has not been previously reported.\footnote{FQI for continuous action space from \cite{antos2008fitted} is a variant of fitted policy iteration and not the version of FQI under consideration. The appendix of \cite{lazaric2011transfer} contains a proof of FQI but for linear functions.} 
\begin{defn}[inherent Bellman optimality error]\cite{munos2008finite} Recall that the Bellman optimality operator is defined as $(\T Q)(x,a) = r(x,a) + \gamma\int_{\X}\min_{a^\prime\in\A} Q(x^\prime,a^\prime)p(dx^\prime|x,a)$. Given a function class $\F$, the \emph{inherent Bellman error} is defined as $d_\F = \sup_{g\in\F}\inf_{f\in\F}\norm{f-\T g}_{\mu}$, where $\norm{\cdot}_\mu$ is the $\ell_2$ norm weighted by $\mu$, the state-action distribution induced by $\pi_\D$.
\end{defn}
\begin{thm}[Generalization error of FQI]\label{thm:fqi_main} 
Under Assumption \ref{assume:concentrability_main}, for $\epsilon>0$ \& $\delta\in(0,1)$, after $K$ iterations of Fitted Q Iteration, for $n=O\big(\frac{\widebar{C}^4}{\epsilon^2}( \log\frac{K}{\delta}+\pdim_{\F}\log\frac{\widebar{C}^2}{\epsilon^2}+\log \pdim_{\F})\big)$, we have with probability $1-\delta$:
$$\big\lvert C^*-C(\pi_K)\big\rvert \leq \frac{2\gamma}{(1-\gamma)^3}\big( \sqrt{\beta_{\mu}}\left(2d_\F+\epsilon\right) + 2\gamma^{K/2} \widebar{C}\big)$$
where $\pi_K$ is the policy acting greedy with respect to the returned function $Q_K$.  (Proof in Appendix \ref{sec:proof-fqi}.)
\end{thm}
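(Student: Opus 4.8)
The plan is to mirror the structure of the FQE analysis (Theorem \ref{thm:fqe_main}), with two modifications: the Bellman evaluation operator $\Tpi$ is replaced by the Bellman optimality operator $\T$, and at the end we must pass from an error bound on the returned $Q_K$ to a bound on the suboptimality of the greedy policy $\pi_K$. Write $\varepsilon_k = \T Q_{k-1} - Q_k$ for the one-step fitting error at iteration $k$, so that $Q_k = \T Q_{k-1} - \varepsilon_k$. The argument decomposes into (a) a per-iteration regression bound controlling $\norm{\varepsilon_k}_{\mu}$, (b) an error-propagation step expressing $Q^* - Q^{\pi_K}$ in terms of the $\varepsilon_k$'s and the concentration coefficient $\beta_{\mu}$, and (c) bookkeeping to choose $K$ and $\epsilon$ and to take a union bound over the $K$ iterations.

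First I would establish the per-iteration bound. At iteration $k$, $Q_k$ is the least-squares fit over $\F$ to targets $y_i = r(x_i,a_i) + \gamma \min_a Q_{k-1}(x_i^\prime,a)$, whose conditional mean given $(x_i,a_i)$ is exactly $(\T Q_{k-1})(x_i,a_i)$. The regression function $\T Q_{k-1}$ need not lie in $\F$, so I would invoke a least-squares generalization bound of the ``agnostic'' type: with probability $1-\delta^\prime$,
$$\norm{Q_k - \T Q_{k-1}}_{\mu}^2 \;\le\; 2\inf_{f\in\F}\norm{f - \T Q_{k-1}}_{\mu}^2 \;+\; O\!\Big(\tfrac{\widebar{C}^{\,4}}{n}\big(\pdim_{\F}\log\tfrac{\widebar{C}^2}{\epsilon^2} + \log\tfrac{1}{\delta^\prime}\big)\Big).$$
The factor $2$ in front of the approximation term is the source of the multiplicative constant $2$ on $d_\F$ in the statement, and the $O(1/n)$ (rather than $O(1/\sqrt{n})$) rate of the estimation term is what yields $n = \widetilde{O}(1/\epsilon^2)$ instead of $\widetilde{O}(1/\epsilon^4)$. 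Since $Q_{k-1}$ is itself data-dependent, I would make this bound hold uniformly over all candidate functions $g\in\F$ playing the role of $Q_{k-1}$ (so that the coupling between $Q_{k-1}$ and the sample at step $k$ is harmless), using a covering-number bound for $\F$ controlled by $\pdim_\F$; boundedness of $c + \lambda^\top g$ by $\widebar{C}$ (after the usual rescaling) produces the $\widebar{C}^{\,4}$ dependence and the $\log\pdim_\F$ term is a lower-order artifact of the covering estimate. Using $\inf_{f\in\F}\norm{f-\T Q_{k-1}}_{\mu} \le d_\F$ by definition of the inherent Bellman optimality error, this gives $\norm{\varepsilon_k}_{\mu} \le 2 d_\F + \epsilon$ on the good event.

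Next, error propagation. Using that $\T$ is a $\gamma$-contraction in sup-norm and componentwise monotone, together with $\T^{\pi_K} Q_K = \T Q_K$ (since $\pi_K$ is greedy with respect to $Q_K$), I would unroll the recursion to bound $Q^* - Q^{\pi_K}$ pointwise by a discounted convex combination of propagated errors $(P^{\pi})^{j}\varepsilon_k$, plus a term scaling as $\gamma^{K}\widebar{C}/(1-\gamma)$ coming from the initialization $Q_0$, exactly as in the approximate value iteration analyses of \citet{munos2008finite,munos2007performance}. Integrating against $\chi$ and invoking Assumption \ref{assume:concentrability_main} converts each $\norm{\cdot}_{1,\chi}$ of a propagated error into $\sqrt{\beta_{\mu}}\,\norm{\cdot}_{\mu}$; combined with the $\frac{2\gamma}{(1-\gamma)^2}$ weight from the propagation, the normalization built into $\beta_{\mu}$, and the step passing from a $\chi$-weighted value-function error to $|C^*-C(\pi_K)|$, the $\frac{2\gamma}{(1-\gamma)^3}$ prefactor and the stated bound fall out. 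Finally I would union-bound the $K$ per-iteration events (hence $\log(K/\delta)$ in $n$) and choose $K$ large enough that $2\gamma^{K/2}\widebar{C}$ is as small as desired, which is possible since it decays geometrically.

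The main obstacle is step (a): obtaining the least-squares bound with the fast $1/n$ estimation rate while the regression target $\T Q_{k-1}$ is both misspecified and data-dependent. This requires a one-sided (Bernstein/Talagrand-type) deviation inequality applied uniformly over $\F$ via its pseudo-dimension, and it is precisely here that one trades the sharper $\epsilon$-dependence for the factor $2$ on $d_\F$. The non-linearity of $\T$ (the $\min_a$ inside the target) is only a minor complication in step (b): it affects how the greedy policy $\pi_K$ enters the recursion but does not change the shape of the propagation, because $\T$ remains a monotone $\gamma$-contraction, so the FQE-style analysis carries over essentially verbatim.
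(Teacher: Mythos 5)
Your proposal follows essentially the same route as the paper's proof: a per-iteration agnostic least-squares bound with the fast $\widetilde{O}(1/n)$ estimation rate (obtained via a one-sided Bernstein-type uniform deviation inequality over $\F$ controlled by $\pdim_\F$ and covering numbers, with the misspecification handled through the best-in-class comparator $f^*$), followed by the Munos--Szepesv\'ari-style error propagation for approximate value iteration using monotonicity, $\T^{\pi_K}Q_K=\T Q_K$, and the concentration coefficients, and finally a conversion from the $Q$-error to $C^*-C(\pi_K)$ (the paper does this last step via the performance difference lemma) together with a union bound over the $K$ iterations. The only minor bookkeeping discrepancy is that your per-iteration bound carries a factor $2$ on $\inf_{f\in\F}\norm{f-\T Q_{k-1}}_\mu^2$, which would yield $\sqrt{2}\,d_\F$ after taking square roots, whereas the paper's argument produces a factor $4$ on the squared term and hence exactly the $2d_\F$ appearing in the statement.
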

\subsection{End-to-End Generalization Guarantee}
We are ultimately interested in the test-time performance and constraint satisfaction of the returned policy from Algorithm \ref{algo:main_algo}. We now connect the previous analyses from Theorems \ref{thm:convergence_main}, \ref{thm:fqe_main} \& \ref{thm:fqi_main} into an end-to-end error analysis.

Since Algorithm \ref{algo:main_algo} uses FQI and FQE as subroutines, the inherent Bellman error terms $d_\F$ and $d_\F^\pi$ will enter our overall performance bound. Estimating the inherent Bellman error caused by function approximation is not possible in general (chapter 11 of \citet{sutton2018reinforcement}). Fortunately, a sufficiently expressive $\F$ can generally make $d_\F$ and $d_\F^\pi$ to arbitrarily small. 
To simplify our end-to-end analysis, we assume $d_\F=0$ and $d_\F^\pi=0$, i.e.,  the function class $\F$ is closed under applying the Bellman operator.\footnote{A similar assumption was made in \citet{cheng2019accelerating} on near-realizability of learning the model dynamics.}
\begin{assumption}
\label{assume:realizability_main}
We consider function classes $\F$ sufficiently rich so that $\forall f:\ \T f\in\F$ \& $\T^\pi f\in\F$ for the policies $\pi$ returned by Algorithm \ref{algo:main_algo}.
\end{assumption}
With Assumptions \ref{assume:concentrability_main} \& \ref{assume:realizability_main}, we have the following error bound:
\begin{thm}[Generalization guarantee of Algorithm \ref{algo:main_algo}] \label{thm:end_to_end_main}
Let $\pi^*$ be the optimal policy to \eqref{eqn:main_problem}. Denote $\widebar{V}=\widebar{C}+B\widebar{G}$. Let $K$ be the number of iterations of FQE and FQI. Let $\pih$ be the policy returned by Algorithm \ref{algo:main_algo}, with termination threshold $\omega$. For $\epsilon>0$ \& $\delta\in(0,1)$, when $n = O\big(\frac{\widebar{V}^4}{\epsilon^2}( \log\frac{K(m+1)}{\delta}+\textnormal{\texttt{dim}}_{\F}\log\frac{\widebar{V}^2}{\epsilon^2}+\log\textnormal{\texttt{dim}}_{\F})\big)$, we have with probability at least $1-\delta$:
$$C(\pih)\leq C(\pi^*) + \omega + \frac{(4+B)\gamma}{(1-\gamma)^3}\big( \sqrt{\beta_{\mu}}\epsilon + 2\gamma^{K/2} \widebar{V}\big),$$
and
$$G(\pih)\leq \tau + 2\frac{\widebar{V}+\omega}{B} +\frac{\gamma^{1/2}}{(1-\gamma)^{3/2}} \big( \sqrt{\beta_{\mu}}\epsilon + \frac{2\gamma^{K/2}\widebar{V}}{(1-\gamma)^{1/2}}\big).$$
\end{thm}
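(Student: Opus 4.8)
The plan is to combine three approximation errors: (i) the \emph{optimization} error from Theorem~\ref{thm:convergence_main}, controlling the empirical primal-dual gap $\Lhmax - \Lhmin \leq \omega$; (ii) the \emph{statistical/estimation} error from Theorems~\ref{thm:fqe_main} and~\ref{thm:fqi_main}, controlling how far $\Ch$, $\Gh$ are from the true $C$, $G$ and how suboptimal each \texttt{FQI} best-response is; and (iii) the \emph{duality} relationship between the primal-dual gap for a finite bound $B$ on $\norm{\lambda}_1$ and the original constrained problem \eqref{eqn:main_problem}. Under Assumption~\ref{assume:realizability_main} the inherent Bellman errors $d_\F$ and $d_\F^\pi$ vanish, so each FQE/FQI call contributes only the $\sqrt{\beta_\mu}\,\epsilon$ sampling term plus the $\gamma^{K/2}$ iteration-truncation term, now with the cost range $\widebar C$ replaced by $\widebar V = \widebar C + B\widebar G$ because \texttt{FQI} runs on the surrogate cost $c + \lambda^\top g$ with $\norm{\lambda}_1 \le B$. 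A union bound over the (at most) $T$ rounds and the $m+1$ evaluated cost components accounts for the $\log\frac{K(m+1)}{\delta}$ term in the sample complexity; I would fold the per-call accuracy $\epsilon$ into a single $\epsilon$ by rescaling, so that $\lvert \Ch(\pi) - C(\pi)\rvert$ and $\norm{\Gh(\pi)-G(\pi)}_\infty$ are each at most $\frac{\gamma^{1/2}}{(1-\gamma)^{3/2}}(\sqrt{\beta_\mu}\epsilon + \frac{2\gamma^{K/2}\widebar V}{(1-\gamma)^{1/2}})$ simultaneously for all policies produced by the algorithm, and likewise $\lvert C^* - C(\pi_t^{\mathrm{FQI}})\rvert \le \frac{2\gamma}{(1-\gamma)^3}(\cdots)$.

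For the \textbf{objective bound}, I would argue as follows. Let $\Lhat(\pi,\lambda)$ denote the empirical Lagrangian (with the augmented $(m+1)$-dimensional $\lambda$) and $L$ the true one. Termination gives $\Lhmax - \Lhmin \le \omega$. By definition $\Lhmax = \max_{\norm{\lambda}_1 = B}\Lhat(\pih,\lambda) \ge \Ch(\pih) \ge C(\pih) - \mathrm{err}_{\mathrm{eval}}$ (taking $\lambda$ supported on the $(m+1)$-th slack coordinate), where $\mathrm{err}_{\mathrm{eval}}$ is the FQE error above. For the lower side, $\Lhmin = \Lhat(\tilde\pi, \lah_t)$ where $\tilde\pi = \texttt{FQI}(c + \lah_t^\top g)$ is a near-best-response to $\lah_t$; hence $\Lhat(\tilde\pi,\lah_t) \le \Lhat(\pi^*, \lah_t) + \mathrm{err}_{\mathrm{FQI}} \le \Ch(\pi^*) + \lah_t^\top(\Ghat(\pi^*)-\tau)^+ + \mathrm{err}_{\mathrm{FQI}} \le C(\pi^*) + \mathrm{err}_{\mathrm{eval}} + \mathrm{err}_{\mathrm{FQI}}$, using feasibility $G(\pi^*)\le\tau$ together with $\lambda\ge 0$ and the augmented slack coordinate, plus the evaluation error to pass from $\Ghat$ to $G$. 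Chaining, $C(\pih) \le C(\pi^*) + \omega + 2\,\mathrm{err}_{\mathrm{eval}} + \mathrm{err}_{\mathrm{FQI}}$; substituting the explicit error terms, absorbing constants, and bounding $\gamma^{1/2}/(1-\gamma)^{3/2} \le \gamma/(1-\gamma)^3$ yields the stated $\frac{(4+B)\gamma}{(1-\gamma)^3}(\sqrt{\beta_\mu}\epsilon + 2\gamma^{K/2}\widebar V)$ form (the $B$ appearing because the FQI error term carries $\widebar V = \widebar C + B\widebar G$, and the coefficient $4+B$ collects the FQE and FQI prefactors).

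For the \textbf{constraint bound}, the mechanism is the standard "large multiplier forces feasibility" argument. Suppose for contradiction some coordinate $G_j(\pih) - \tau_j$ is large. Consider the dual play $\lambda = B e_j$ (all mass on constraint $j$); then $\Lhmax \ge \Ch(\pih) + B(\Ghat_j(\pih) - \tau_j) \ge C(\pih) + B(G_j(\pih)-\tau_j) - (1+B)\mathrm{err}_{\mathrm{eval}}$. On the other hand $\Lhmin \le \Lhat(\tilde\pi,\lah_t) \le \Ch(\tilde\pi) + \lah_t^\top(\Ghat(\tilde\pi)-\tau)^+ \le \widebar C + B\widebar G = \widebar V$ (crudely bounding each empirical quantity by its range, or more carefully by $C(\pi^*)+\mathrm{err}$ as above). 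Combining with $\Lhmax - \Lhmin \le \omega$ and $C(\pih)\ge 0$: $B(G_j(\pih)-\tau_j) \le \widebar V + \omega + (1+B)\mathrm{err}_{\mathrm{eval}}$, so $G_j(\pih) - \tau_j \le \frac{\widebar V+\omega}{B} + \frac{1+B}{B}\mathrm{err}_{\mathrm{eval}} \le 2\frac{\widebar V+\omega}{B} + 2\,\mathrm{err}_{\mathrm{eval}}$; plugging in $\mathrm{err}_{\mathrm{eval}}$ gives exactly the claimed inequality with the $\frac{\gamma^{1/2}}{(1-\gamma)^{3/2}}(\sqrt{\beta_\mu}\epsilon + \frac{2\gamma^{K/2}\widebar V}{(1-\gamma)^{1/2}})$ term. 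I would need to be a little careful about which "$\max_\lambda$" set is used (the simplex of radius $B$ vs.\ individual corners) and about the augmented slack coordinate, but these only affect constants.

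The main obstacle I anticipate is \textbf{bookkeeping the propagation of a single uniform $\epsilon$ through all the nested approximations} — specifically, ensuring the FQE/FQI generalization bounds (which are stated per-policy and per-cost) hold \emph{simultaneously} for every one of the $O(T)$ policies $\pi_t$ and $\tilde\pi$ the algorithm generates and for all $m+1$ cost components, via a union bound, without the $T$-dependence leaking into the final rate; and then showing that the $\widebar C$ in Theorems~\ref{thm:fqe_main}/\ref{thm:fqi_main} is correctly replaced by $\widebar V = \widebar C + B\widebar G$ when the evaluated/optimized cost is the composite $c + \lambda^\top g$ with $\norm\lambda_1 \le B$. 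The duality/feasibility steps themselves are short once the error terms are in hand; the delicate part is making the constants line up to produce the clean $(4+B)$ and $2\frac{\widebar V + \omega}{B}$ coefficients rather than something messier.
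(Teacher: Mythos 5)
Your proposal follows essentially the same route as the paper's proof: bound $C(\pih)$ by chaining the duality gap at termination with the FQI best-response guarantee against the feasible $\pi^*$ (whose feasibility kills the $\lambda^\top(G(\pi^*)-\tau)$ term) and the $(1+B)$-amplified FQE evaluation errors, and force approximate constraint satisfaction by playing the corner dual $\lambda[j]=B$, $\lambda[i]=0$ for $i\neq j$, against the stopping condition; the replacement of $\widebar{C}$ by $\widebar{V}=\widebar{C}+B\widebar{G}$ and the role of the augmented slack coordinate are also handled as in the paper. Two pieces of bookkeeping differ and matter for recovering the stated constants. First, the paper proves a purely \emph{empirical} constraint lemma, $\widehat{g}_i(\pih)-\tau_i\le 2(\widebar{C}+\omega)/B$, in which no estimation error appears (both sides of the stopping condition are already empirical quantities), and only afterwards converts $\Gh(\pih)$ to $G(\pih)$ at the cost of a single FQE error; your version injects $\frac{1+B}{B}\,\mathrm{err}_{\mathrm{eval}}$ into the corner-play inequality and ends with twice the FQE term, which does not literally yield the displayed bound. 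Second, the simultaneity over the $T$ rounds that you propose to settle by a union bound would introduce a $\log T$ factor absent from the stated sample complexity; the paper instead exploits the mixture structure of $\pih$, conditioning on which component $\pi_t$ is sampled at execution time so that each high-probability event need only hold for that single policy (only $K$ and $m+1$ enter the union bound). Both points are repairs of constants and probability accounting rather than of the underlying argument, which is sound.
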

The proof is in Appendix \ref{sec:app-generalization}. This result guarantees that, upon termination of Algorithm \ref{algo:main_algo}, the true performance on the main objective can be arbitrarily close to that of the optimal policy. At the same time, each constraint will be approximately satisfied with high probability, assuming sufficiently large  $B$ \& $K$, and sufficiently small $\epsilon$.  
\makeatletter
\newcommand{\customlabel}[3]{%
   \protected@write \@auxout {}{\string \newlabel {#1}{{\ref{#2}~(#3)}{\thepage}{Subfigure #2 (#3)\relax}{}{}} }%
   \protected@write \@auxout {}{\string \newlabel {sub@#1}{{#3}{\thepage}{Subfigure #2 (#3)\relax}{}{}} }%
}
\makeatother

\section{Empirical Analysis}
\label{sec:experiment}
We perform experiments on two different domains: a grid-world domain (from OpenAI's FrozenLake) under a safety constraint, and a challenging high-dimensional car racing domain (from OpenAI's CarRacing) under multiple behavior constraints. We seek to answer the following questions in our experiments: (i) whether the empirical convergence behavior of Algorithm \ref{algo:main_algo} is consistent with our theory; and (ii) how the returned policy performs with respect to the main objective and constraint satisfaction. Appendix \ref{sec:app-experiment} includes a more detailed discussion of our experiments. 


\subsection{Frozen Lake.}
\textbf{Environment \& Data Collection.} The environment is an 8x8 grid. The agent has 4 actions N,S,E,W at each state. The main goal is to navigate from a starting position to the goal. Each episode terminates when the agent reaches the goal or falls into a hole. The main cost function is defined as $c=-1$ if goal is reached, otherwise $c=0$ everywhere. We simulate a non-optimal data gathering policy $\pi_\D$ by adding random sub-optimal actions to the shortest path policy from any given state to goal. We run $\pi_D$ for 5000 trajectories to collect the behavior dataset $\D$ (with constraint cost measurement specified below).

\textbf{Counterfactual Safety Constraint.} We augment the main objective $c$ with safety constraint cost defined as $g=1$ if the agent steps into a hole, and $g=0$ otherwise. We set the constraint threshold $\tau=0.1$, roughly $75\%$ of the accumulated constraint cost of behavior policy $\pi_\D$. The threshold can be interpreted as a counterfactually acceptable probability that we allow the learned policy to fail. 

\textbf{Results.} The empirical primal dual gap $\Lhmax - \Lhmin$ in Figure~\ref{fig:lake_primal_dual_curves} quickly decreases toward the optimal gap of zero. The convergence is fast and monotonic, supporting the predicted behavior from our theory. The test-time performance in Figure~\ref{fig:lake_values} shows the safety constraint is always satisfied, while the main objective cost also smoothly converges to the optimal value achieved by an online RL baseline (DQN) trained without regard to the constraint. The returned policy significantly outperformed the data gathering policy $\pi_\D$ on both main and safety cost.

\begin{figure*}[t]
    \centering     
\hfill%
    \customlabel{fig:lake_primal_dual_curves}{fig:lake}{left}%
        \includegraphics[width=1.8in,valign=t]{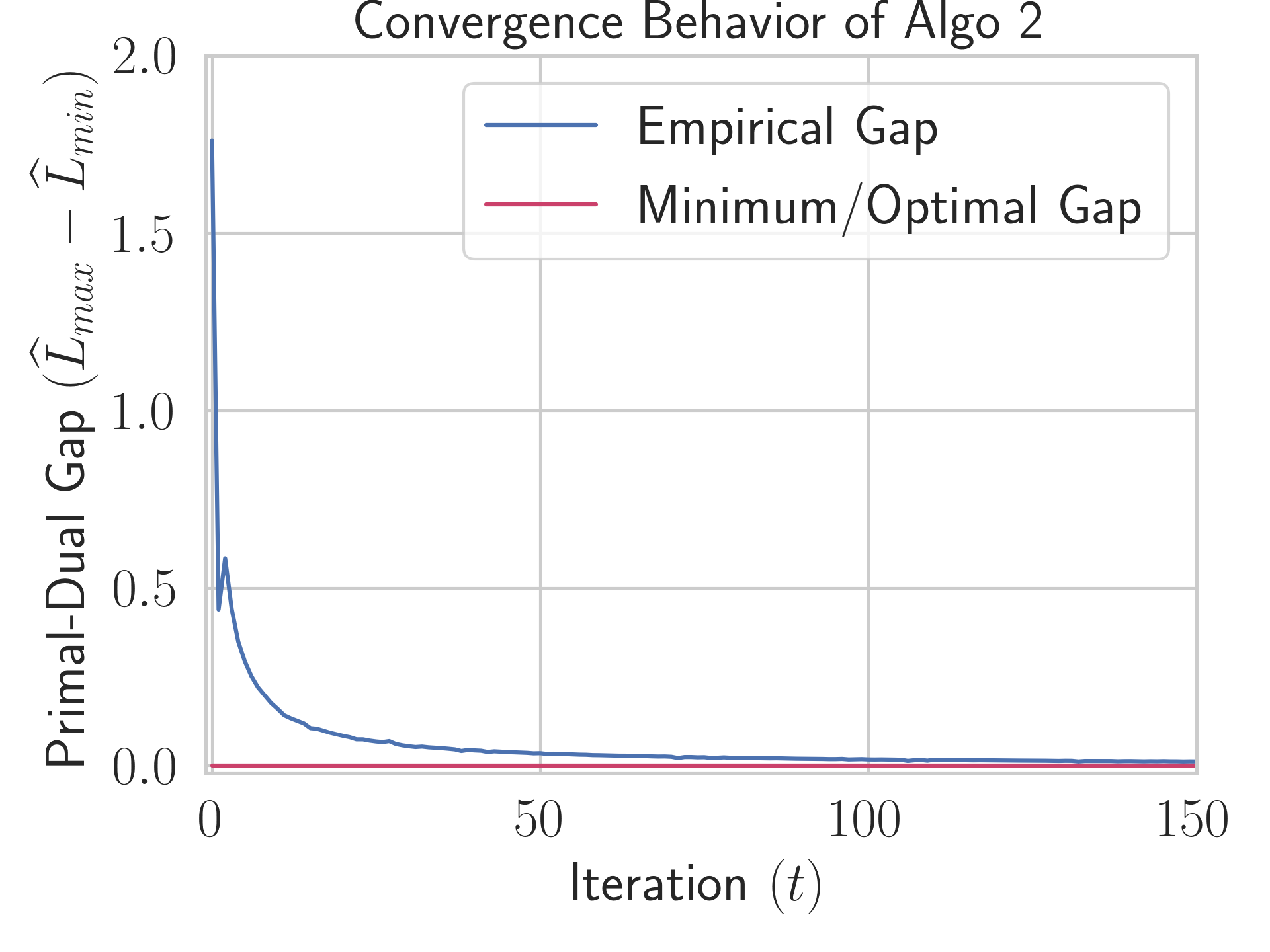}
\hfill%
    \customlabel{fig:lake_values}{fig:lake}{middle}%
        \includegraphics[width=1.8in,valign=t]{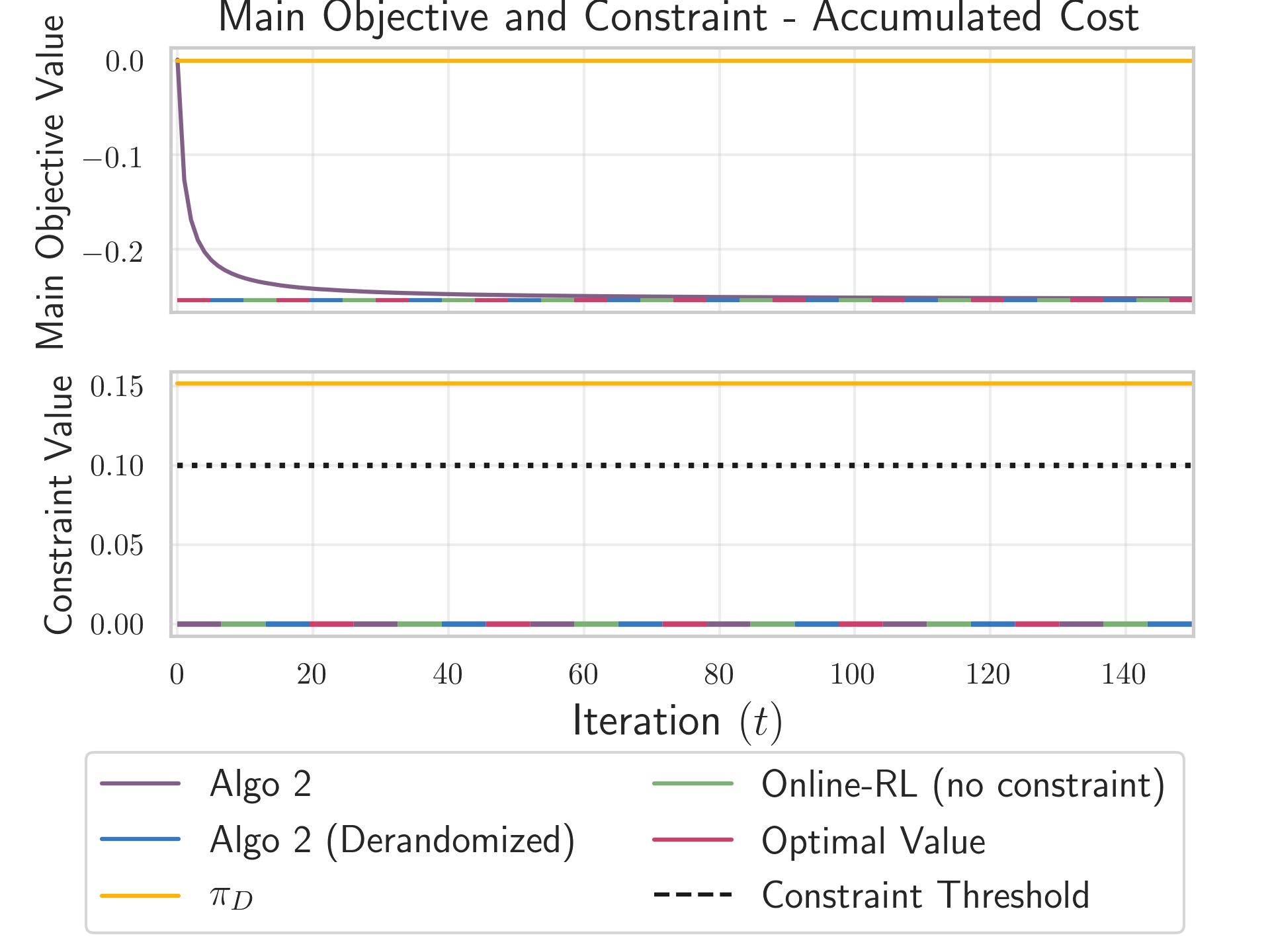}
\hfill%
    \customlabel{fig:lake_fqe}{fig:lake}{right}%
        \includegraphics[width=1.8in,valign=t]{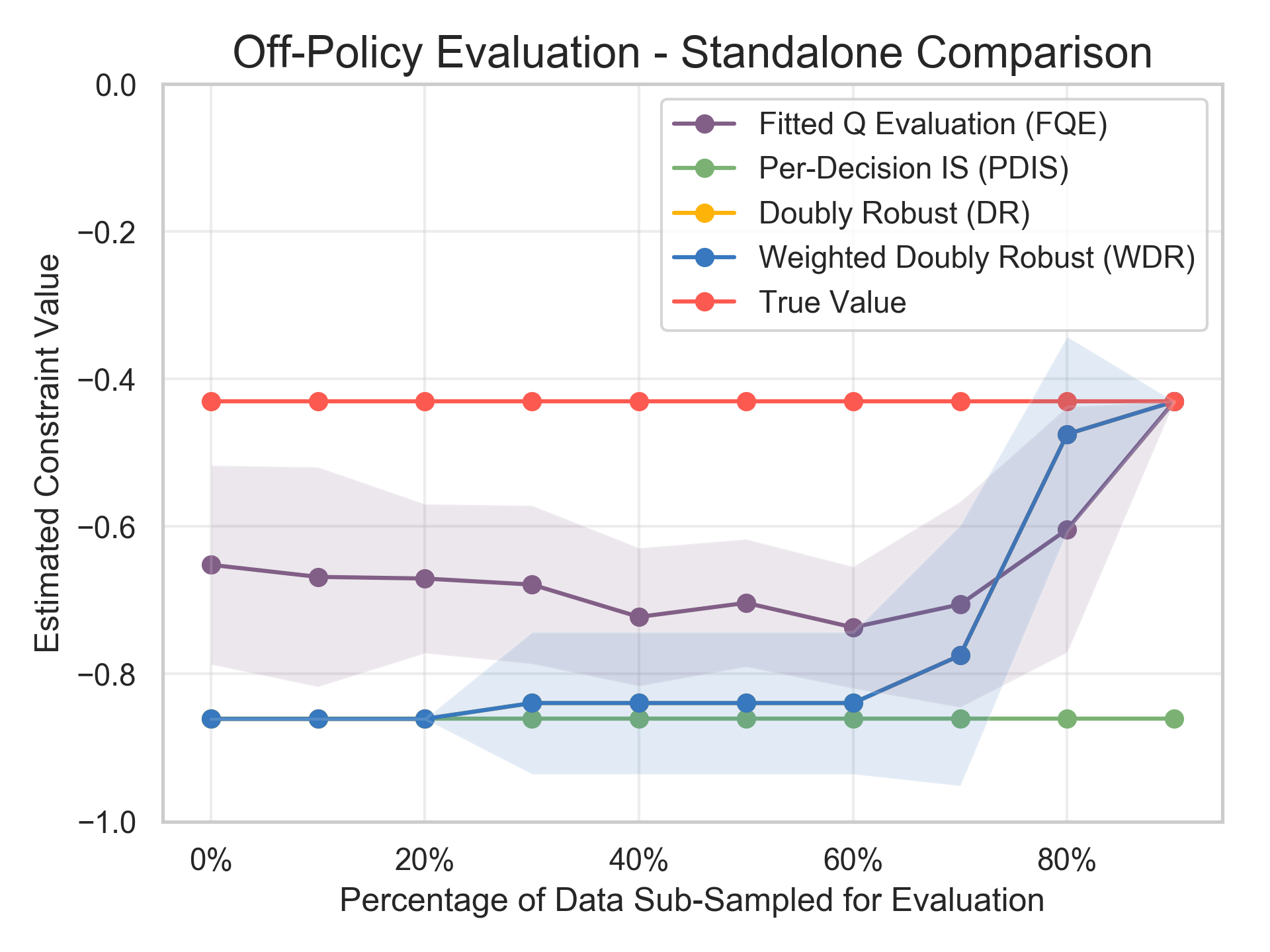}
        \vspace{-0.1in}
\caption{\emph{FrozenLake Results}.
             \emph{(Left)} Empirical duality gap of algorithm \ref{algo:main_algo} vs. optimal gap.
             \emph{(Middle)} Comparison of returned policy and others w.r.t. (top) main objective value and (bottom) safety constraint value.
             \emph{(Right)} FQE vs. other OPE methods on a standalone basis.
             }
\label{fig:lake}
\end{figure*}


\begin{figure*}[t]
    \centering     
    \customlabel{fig:car_main_values}{fig:car}{left \& middle}%
        \includegraphics[width=3.6in,valign = c]{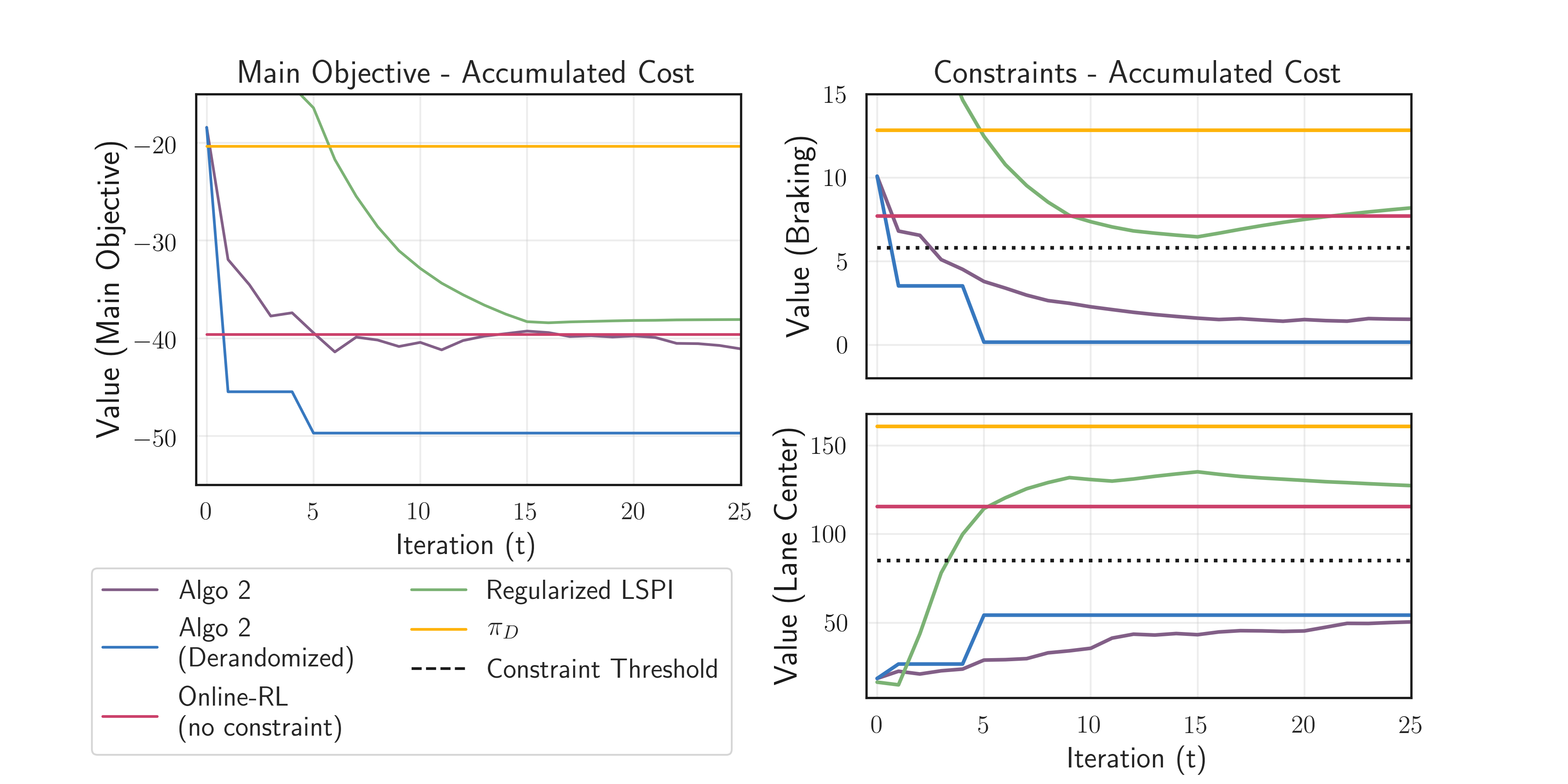}
    \customlabel{fig:car_fqe}{fig:car}{right}%
        \includegraphics[width=1.8in,valign=c]{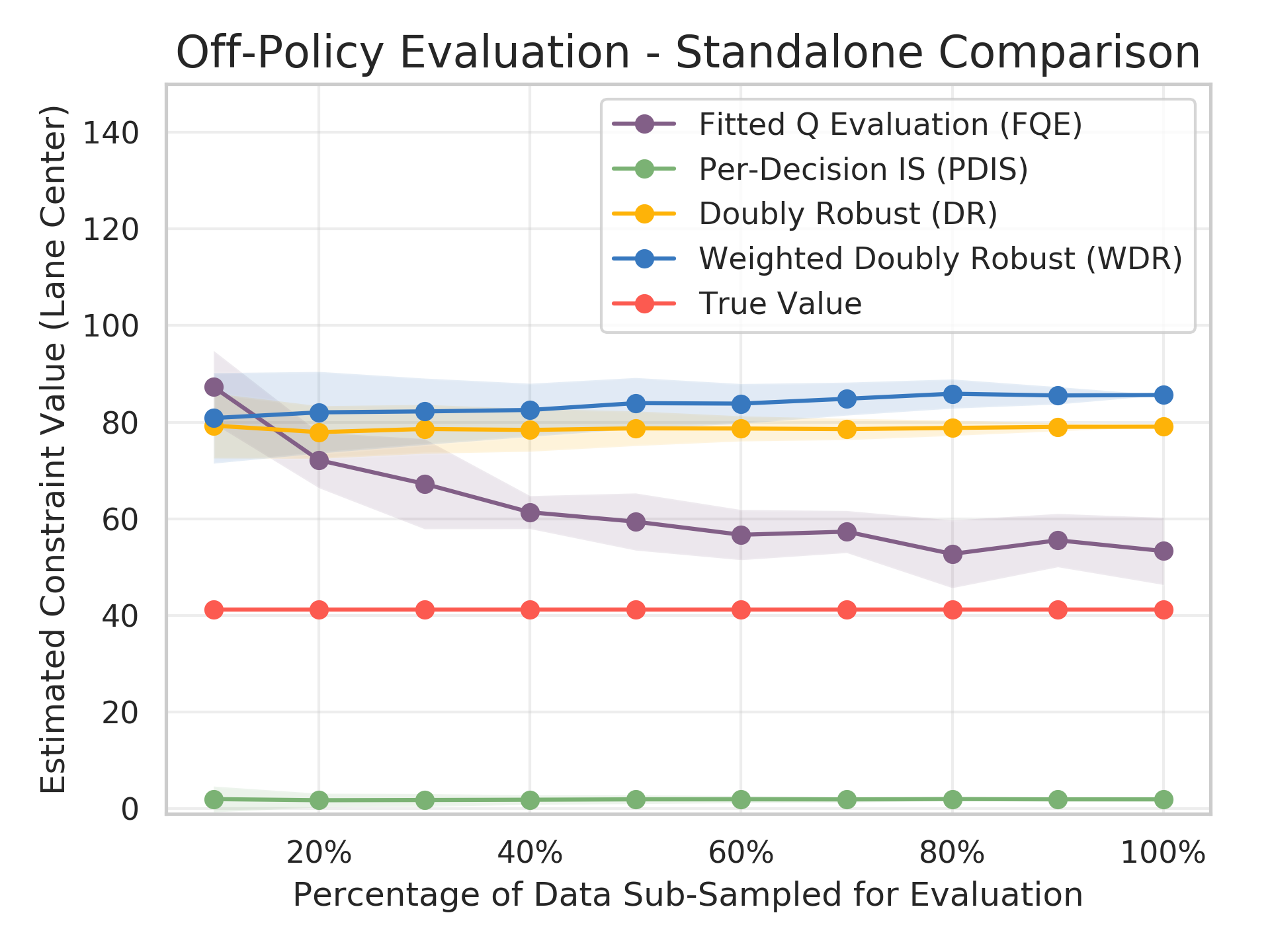}
        \vspace{-0.1in}
\caption{\emph{CarRacing Results}.
             \emph{(Left)} \& \emph{(Middle)} (Lower is better) Comparing our algorithm, regularized LSPI, online RL w/o constraints, behavior policy $\pi_D$ w.r.t. main cost objectives and two constraints.
             \emph{(Right)} FQE vs. other OPE methods on a standalone basis.
             }
\label{fig:car}
\end{figure*}

\subsection{Car Racing.}
\textbf{Environment \& Data Collection.} The car racing environment, seen in Figure~\ref{fig:car_screenshot}, is a high-dimensional domain where the state is a $96\times96\times3$ tensor of raw pixels. The action space $\A= \{\text{steering}\times\text{gas}\times\text{brake} \}$ takes 12 discretized values. 
The goal in this episodic task is to traverse over $95\%$ of the track, measured by a given number of \doubleQuote{tiles} as a proxy for distance coverage. The agent receives a reward (negative cost) for each unique tile crossed and no reward if the agent is off-track. A small positive cost applies at every time step, with maximum horizon of 1000 for each episode. With these costs given by the environment, one can train online RL agent using DDQN \cite{van2016deep}. We collect $\approx 1500$ trajectories from DDQN's randomization, resulting in data set $\D$ with $\approx 94000$ transition tuples. 


\textbf{Fast Driving under Behavioral Constraints.} We study the problem of minimizing environment cost while subject to two behavioral constraints: smooth driving and lane centering. The first constraint $G_0$ approximates smooth driving by $g_0(x,a) = 1$ if $a$ contains braking action, and $0$ otherwise. The second constraint cost $g_1$ measures the distance between the agent and center of lane at each time step. This is a highly challenging setup since three objectives and constraints are in direct conflict with one another, e.g., fast driving encourages the agent to cut corners and apply frequent brakes to make turns. Outside of this work, we are not aware of previous work in policy learning with 2 or more constraints in high-dimensional settings. 

\textbf{Baseline and Procedure.} As a na{\"i}ve baseline, DDQN achieves low cost, but exhibits \doubleQuote{non-smooth} driving behavior (see our supplementary videos). We set the threshold for each constraint to 75\% of the DDQN benchmark. We also compare against regularized batch RL algorithms \cite{farahmand2009regularized}, specifically regularized LSPI. We instantiate our subroutines, FQE and FQI, with multi-layered CNNs. We augment LSPI's linear policy with non-linear features derived from a well-performing FQI model. 

\textbf{Results.} The returned mixture policy from our algorithm achieves low main objective cost, comparable with online RL policy trained without regard to constraints. After several initial iterations violating the braking constraint, the returned policy - corresponding to the appropriate $\lambda$ trade-off - satisties both constraints, while improving the main objective. The improvement over data gathering policy is significant for both constraints and main objective. 

Regularized policy learning is an alternative approach to \eqref{eqn:main_problem} (section \ref{sec:problem}). We provide the regularized LSPI baseline the same set of $\lambda$ found by our algorithm for one-shot regularized learning (Figures \ref{fig:car_main_values}). While regularized LSPI obtains good performance for the main objective, it does not achieve acceptable constraint satisfaction. By default, regularized policy learning requires parameter tuning heuristics. In principle, one can perform a grid-search over a range of parameters to find the right combination - we include such an example for both regularized LSPI and FQI in Appendix \ref{sec:app-experiment}. However, since our objective and constraints are in conflict, main objective and constraint satisfaction of policies returned by one-shot regularized learning are sensitive to step changes in $\lambda$. In constrast, our approach is systematic, and is able to avoid the curse-of-dimensionality of brute-force search that comes with multiple constraints. 

In practice, one may wish to deterministically extract a single policy from the returned mixture for execution. A de-randomized policy can be obtained naturally from our algorithm by selecting the best policy from the existing FQE's estimates of individual \texttt{Best-response} policies. 

\subsection{Off-Policy Evaluation} 
The off-policy evaluation by FQE is critical for updating policies in our algorithm, and is ultimately responsible for certifying constraint satisfaction. While other OPE methods can also be used in place of FQE, we find that the estimates from popular methods are not sufficiently accurate in a high-dimensional setting. As a standalone comparison, we select an individual policy and compare FQE against PDIS \cite{precup2000eligibility}, DR \cite{jiang2016doubly} and WDR \cite{thomas2016data} with respect to the constraint cost evaluation. To compare both accuracy and data-efficiency, for each domain we randomly sample different subsets of dataset $\D$ (from 10\% to 100\% transitions, 30 trials each). Figure~\ref{fig:lake_fqe} and ~\ref{fig:car_fqe} illustrate the difference in quality. In the FrozenLake domain, FQE performs competitively with the top baseline method (DR and WDR), converging to the true value estimate as the data subsample grows close to 100\%. In the high-dimensional car domain, FQE signficantly outperforms other methods.

\section{Other Related Work}

\textbf{Constrained MDP (CMDP).} The CMDP is a well-studied problem \cite{altman1999constrained}. Among the most important techniques for solving CMDP are the Lagrangian approach and solving the dual LP program via occupation measure. However, these approaches only work when the MDP is completely specified, and the state dimension is small such that solving via an LP is tractable. More recently, the constrained policy optimization approach (CPO) by \cite{achiam2017constrained} learns a policy when the model is not initially known. The focus of CPO is on online safe exploration, and thus is not directly comparable to our setting.

\textbf{Multi-objective Reinforcement Learning.} Another related area is multi-objective reinforcement learning (MORL)\cite{van2014multi,roijers2013survey}. Generally, research in MORL has largely focused on approximating the Pareto frontier that trades-off competing objectives \cite{van2014multi,roijers2013survey}. The underlying approach to MORL frequently relies on linear or non-linear scalarization of rewards to heuristically turns the problem into a standard RL problem. Our proposed approach represents another systematic paradigm to solve MORL, whether in batch or online settings.  



\section{Discussion}
We have presented a systematic approach for batch policy learning under multiple constraints. Our problem formulation can accommodate general definition of constraints, as partly illustrated by our experiments. We provide guarantees for our algorithm for both the main objective and constraint satisfaction. Our strong empirical results show a promise of making constrained batch policy learning applicable for real-world domains, where behavior data is abundant. 

Our implementation complies with the steps laid out in Algorithm \ref{algo:main_algo}. In very large scale or high-dimensional problems, one could consider a noisy update version for both policy learning and evaluation. We leave the theorerical and practical exploration of this extension to future work. In addition, our proposed FQE algorithm for OPE problem achieves strong results, especially in a difficult domain with long horizons. Comparing the bias-variance characteristics of FQE with contemporary OPE methods is another interesting direction for research. 


\clearpage
\begin{small}
\nocite{bertsekas2011approximate,liu2018breaking,swaminathan2015batch,wang2017optimal}
\bibliography{icml_paper}
\bibliographystyle{icml2019}
\end{small}

\clearpage
\appendix
\onecolumn
\section{Equivalence between Regularization and Constraint Satisfaction}
\label{sec:app-regularization}
\subsection{Formulating Different Regularized Policy Learning Problems as Constrained Policy Learning}
In this section, we provide connections between regularized policy learning and our constrained formulation \eqref{eqn:main_problem}. Although the main paper focuses on batch policy learning, here we are agnostic between online and batch learning settings. 

\textbf{Entropy regularized RL.} The standard reinforcement learning objective, either in online or batch setting, is to find a policy $\pi^*_{\text{std}}$ that minimizes the long-term cost (equivalent to maximizing the accumuted rewards): $\pi^*_{\text{std}} = \argmin_{\pi}\sum_{t} \E_{(x_t,a_t)\sim\pi}[c(x_t,a_t)] = \argmin_{\pi}\E_{(x,a)\sim\mu_\pi}[c(x,a)]$. Maximum entropy reinforcement learning \cite{haarnoja2017reinforcement} augments the cost with an entropy term, such that the optimal policy maximizes its entropy at each visited state: $\pi^*_{\text{MaxEnt}} = \argmin_{\pi} \E_{(x,a)\sim\mu_\pi}[c(x,a)]-\lambda\mathbb{H}(\pi(\cdot|x))$. As discuseed by \cite{haarnoja2017reinforcement}, the goal is for the agent to maximize the entropy of the entire trajectory, and not greedily maximizing entropy at the current time step (i.e., Boltzmann exploration). Maximum entropy policy learning was first proposed by \cite{ziebart2008maximum,ziebart2010modeling} in the context of learning from expert demonstrations. Entropy regulazed RL/IL is equivalent to our problem \eqref{eqn:main_problem} by simply set $C(\pi) = \E_{(x_t,a_t)\sim\pi}[c(x_t,a_t)]$ (standard RL objective), and $g(x,a) = \pi(a|x)\log\pi(a|x)$, thus $G(\pi) = -\mathbb{H}(\pi) \leq \tau$

\textbf{Smooth imitation learning (\& Regularized imitation learning).} This is a constrained imitation learning problem studied by \cite{le2016smooth}: learning to mimic smooth behavior in continuous space from human desmonstrations. The data collected from human demonstrations is considered to be fixed and given a priori, thus the imitation learning task is also a batch policy learning problem. The proposed approach from \cite{le2016smooth} is to view policy learning as a function regularization problem: policy $\pi = (f,g)$ is a combination of functions $f$ and $h$, where $f$ belongs to some expressive function class $\mathrm{F}$ (e.g., decision trees, neural networks) and $h\in\mathrm{H}$ with certifiable smoothness property (e.g., linear models). Policy learning is the solution to the functional regularization problem $\pi = \argmin_{f,g}\E_{x\sim\mu_\pi} \norm{f(x)-\pi_E(x)} +\lambda \norm{h(x) - \pi_E(x)}$, where $\pi_E$ is the expert policy. This constrained imitation learning setting is equivalent to our problem \eqref{eqn:main_problem} as follows: $C(\pi) = C((f,h)) = \E_{x\sim\mu_\pi}\norm{f(x) - \pi_E(x)}$ and $G(\pi) = G((f,h))=\min_{h^\prime\in\mathrm{H}}\norm{h^\prime(x)-\pi_E(x)} \leq \tau$

\textbf{Regularizing RL with expert demonstrations / Learning from imperfect demonstrations.} Efficient exploration in RL is a well-known challenge. Expert demonstrations provide a way around online exploration to reduce the sample complexity for learning. However, the label budget for expert demonstrations may be limited, resulting in a sparse coverage of the state space compared to what the online RL agent can explore \cite{hester2018deep}. Additionally, expert demonstrations may be imperfect \cite{oh2018self}. Some recent work proposed to regularize standard RL objective with some deviation measure between the learning policy and (sparse) expert data \cite{hester2018deep,oh2018self,henaff2019model}. 

For clarity we focus on the regularized RL objective for Q-learning in \cite{hester2018deep}, which is defined as $J(\pi) = J_{DQ}(Q)+\lambda_1 J_n(Q)+\lambda_2 J_E(Q) + \lambda_3 J_{L2}(Q)$, where $J_{DQ}(Q)$ is the standard deep Q-learning loss, $J_n(Q)$ is the n-step return loss, $J_E(Q)$ is the imitation learning loss defined as $J_E(Q) = \max_{a\in\mathrm{A}}\left[ Q(x,a)+\ell(a_E,a) - Q(x,a_E)\right]$, and $J_{L2}(Q)$ is an L2 regularization loss applied to the Q-network to prevent overfitting to a small expert dataset. The regularization parameters $\lambda$'s are obtained by hyperparameter tuning. This approach provides a bridge between RL and IL, whose objective functions are fundamentally different (see AggreVate by \cite{ross2014reinforcement} for an alternative approach). 

We can cast this problem into \eqref{eqn:main_problem} as: $C(\pi) = C_{DQ}(Q) + \lambda_3 C_{L2}(Q)$ (standard RL objective), and two constraints: $g_1(\pi) = \E_{x\sim\mu_\pi}[\max_{a\in\mathrm{A}} Q(x,a) +\ell(a_E,a) - Q(x,a_E)]$, and $g_2(x,a) = \E_{x\sim\mu_\pi}[c_t+\gamma c_{t+1}+\ldots+\gamma^{n-1}c_{t+n-1}+\min_a^\prime \gamma^n Q(x_{t+n},a^\prime) - Q(x_t,a)]$. Here $g_1$ captures the loss w.r.t. expert demonstrations and $g_2$ reflects the n-step return constraint.

More generally, one can define the imitation learning constraint as $G(\pi) = \E_{x\sim\mu_\pi}\ell(\pi(x),\pi_E(x))$ for an appropriate divergence definition between $\pi(x)$ and $\pi_E(x)$ (defined at states where expert demonstrations are available).

\textbf{Conservative policy improvement.} Many policy search algorithms perform small policy update steps, requiring the new policy $\pi$ to stay within a neighborhood of the most recent policy iterate $\pi_k$ to ensure learning stability \cite{levine2014learning,schulman2015trust,montgomery2016guided,achiam2017constrained}. This simply corresponds to the definition of $G(\pi) = \texttt{distance}(\pi,\pi_k)\leq \tau$, where \texttt{distance} is typically $KL$-divergence or total variation distance between the distribution induced by $\pi$ and $\pi_k$. For $KL$-divergence, the single timestep cost $g(x,a) = -\pi(a|x)\log(\frac{\pi_k(a|x)}{\pi(a|x)})$
\subsection{Equivalence of Regularization and Constraint Viewpoint - Proof of Proposition \ref{prop:equivalence}}
 \texttt{Regularization}$\implies$\texttt{Constraint}: Let $\lambda>0$ and $\pi^*$ be optimal policy in $\texttt{Regularization}$. Set $\tau = G(\pi^*)$. Suppose that $\pi^*$ is not optimal in \texttt{Constraint}. Then $\exists \pi\in\Pi$ such that $G(\pi)\leq\tau$ and $C(\pi)<C(\pi^*)$. We then have
 $$C(\pi)+\lambda^\top G(\pi) < C(\pi^*)+\lambda^\top \tau = C(\pi^*)+\lambda^\top G(\pi^*)$$
 which contradicts the optimality of $\pi^*$ for \texttt{Regularization} problem. Thus $\pi^*$ is also the optimal solution of the \texttt{Constraint} problem.
 
 \texttt{Constraint}$\implies$\texttt{Regularization}: Given $\tau$ and let $\pi^*$ be the corresponding optimal solution of the \texttt{Constraint} problem. The Lagrangian of \texttt{Constraint} is given by $L(\pi,\lambda) = C(\pi)+\lambda^\top G(\pi), \lambda \geq 0$. We then have $\pi^* = \argmin\limits_{\pi\in\Pi} \max\limits_{\lambda\geq 0} L(\pi,\lambda)$. Let $$\lambda^* = \argmax\limits_{\lambda\geq 0}\min\limits_{\pi\in\Pi} L(\pi,\lambda)$$ Slater's condition implies strong duality. By strong duality and the strong max-min property \cite{boyd2004convex}, we can exchange the order of maximization and minimization. Thus $\pi^*$ is the optimal solution of $$\min_{\pi\in\Pi} \quad C(\pi) + (\lambda^*)^\top(G(\pi) - \tau)$$Removing the constaint $(\lambda^*)^\top \tau$, we have that $\pi^*$ is the optimal solution of the \texttt{Regularization} problem with $\lambda = \lambda^*$. And since $\pi^* \neq \argmin\limits_{\pi\in\Pi} C(\pi)$, we must have $\lambda^*\geq 0$.
\clearpage
\section{Convergence Proofs}
\label{sec:app-convergence}
\subsection{Convergence of Meta-algorithm - Proof of Proposition \ref{prop:convergence}}

Let us evaluate the empirical primal-dual gap of the Lagrangian after $T$ iterations:
\begin{align}
    \max_{\lambda} L(\pih_T, \lambda) &= \max_\lambda \frac{1}{T}\sum_t L(\pi_t,\lambda) \label{eqn:pi_hat} \\
    &\leq \frac{1}{T} \sum_t L(\pi_t,\lambda_t) + \frac{o(T)}{T} \label{eqn:no_regret_prop} \\
    &\leq \frac{1}{T} \sum_t L(\pi, \lambda_t) + \frac{o(T)}{T} \quad \forall \pi\in\Pi \label{eqn:best_response_prop} \\
    &= L(\pi, \widehat{\lambda}_T) + \frac{o(T)}{T} \quad \forall \pi \label{eqn:lambda_hat}
\end{align}
Equations (\ref{eqn:pi_hat}) and (\ref{eqn:lambda_hat}) are due to the definition of $\pih_T$ and $\widehat{\lambda}_T$ and linearity of $L(\pi,\lambda)$ wrt $\lambda$ and the distribution over policies in $\Pi$. Equation (\ref{eqn:no_regret_prop}) is due to the no-regret property of \texttt{Online-algorithm}. Equation (\ref{eqn:best_response_prop}) is true since $\pi_t$ is best response wrt $\lambda_t$. Since equation (\ref{eqn:lambda_hat}) holds for all $\pi$, we can conclude that for $T$ sufficiently large such that $\frac{o(T)}{T}\leq \omega$, we have $\max_{\lambda} L(\pih_T, \lambda) \leq \min_{\pi} L(\pi, \widehat{\lambda}_T) + \omega$
, which will terminate the algorithm. 

Note that we always have $\max_{\lambda} L(\pih_T, \lambda) \geq L(\pih_T,\widehat{\lambda}_T) \geq \min_{\pi} L(\pi, \widehat{\lambda}_T)$. Algorithm \ref{algo:meta}'s convergence rate depends on the regret bound of the \texttt{Online-algorithm} procedure. Multiple algorithms exist with regret scaling as $\Omega(\sqrt{T})$ (e.g., online gradient descent with regularizer, variants of online mirror descent). In that case, the algorithm will terminate after $O(\frac{1}{\omega^2})$ iterations.

\subsection{Empirical Convergence Analysis of Main Algorithm - Proof of Theorem \ref{thm:convergence_main}}
By choosing normalized exponentiated gradient as the online learning subroutine, we have the following regret bound after $T$ iterations of the main algorithm \ref{algo:main_algo} (chapter 2 of \cite{shalev2012online}) for any $\lambda\in\R_{+}^{m+1},\norm{\lambda}_1=B$:
\begin{equation}
\label{eqn:converge_no_regret}
    \frac{1}{T}\sum_{t=1}^T \Lh(\pi_t,\lambda) \leq \frac{1}{T}\sum_{t=1}^T \Lh(\pi_t,\lambda_t) +\frac{\frac{B\log(m+1)}{\eta}+\eta \widebar{G}^2BT}{T}
\end{equation}
Denote $\omega_T = \frac{\frac{B\log(m+1)}{\eta}+\eta \widebar{G}^2BT}{T}$ to simplify notations. By the linearity of $\Lh(\pi,\lambda)$ in both $\pi$ and $\lambda$, we have for any $\lambda$ that
\begin{equation*}
    \Lh(\pih_T,\lambda) \stackrel{\text{linearity}}{=} \frac{1}{T}\sum_{t=1}^T \Lh(\pi_t,\lambda) \stackrel{\text{eqn }(\ref{eqn:converge_no_regret})}{\leq} \frac{1}{T}\sum_{t=1}^T \Lh(\pi_t,\lambda_t) + \omega_T \stackrel{\text{best response }\pi_t}{\leq} \frac{1}{T}\sum_{t=1}^T \Lh(\pih_T,\lambda_t) + \omega_T \stackrel{\text{linearity}}{=}\Lh(\pih_T,\lah_T) + \omega_T
\end{equation*}
Since this is true for any $\lambda$, $\max_\lambda\Lh(\pih_T,\lambda)\leq \Lh(\pih_T,\lah_T) + \omega_T$. 

On the other hand, for any policy $\pi$, we also have
\begin{equation*}
    \Lh(\pi,\lah_T) \stackrel{\text{linearity}}{=} \frac{1}{T}\sum_{t=1}^T \Lh(\pi,\lambda_t)\stackrel{\text{best response } \pi_t}{\geq} \frac{1}{T}\sum_{t=1}^T \Lh(\pi_t,\lambda_t)
    \stackrel{\text{eqn } (\ref{eqn:converge_no_regret})}{\geq} \frac{1}{T}\sum_{t=1}^T \Lh(\pi_t,\lah_T)-\omega_T
    \stackrel{\text{linearity}}{=}\Lh(\pih_T,\lah_T)-\omega_T
\end{equation*}
Thus $\min_\pi\Lh(\pi,\lah_T) \geq \Lh(\pih_T,\lah_T)-\omega_T$, leading to
$$\max_\lambda\Lh(\pih_T,\lambda) - \min_\pi\Lh(\pi,\lah_T) \leq \Lh(\pih_T,\lah_T) + \omega_T - (\Lh(\pih_T,\lah_T)-\omega_T) = 2\omega_T$$
After $T$ iterations of the main algorithm \ref{algo:main_algo}, therefore, the empirical primal-dual gap is bounded by 
$$\max_\lambda\Lh(\pih_T,\lambda) - \min_\pi\Lh(\pi,\lah_T) \leq \frac{2\frac{B\log(m+1)}{\eta}+2\eta \widebar{G}^2BT}{T}$$
In particular, if we want the gap to fall below a desired threshold $\omega$, setting the online learning rate $\eta = \frac{\omega}{4\widebar{G}^2B}$ will ensure that the algorithm converges after $\frac{16B^2 \widebar{G}^2\log(m+1)}{\omega^2}$ iterations.

\clearpage
\section{End-to-end Generalization Analysis of Main Algorithm}
\label{sec:app-generalization}
In this section, we prove the following full statement of theorem \ref{thm:end_to_end_main} of the main paper. Note that to lessen notation, we define $\widebar{V} = \widebar{C}+B\widebar{G}$ to be the bound of value functions under considerations in algorithm \ref{algo:main_algo}.
\begin{thm}[Generalization bound of algorithm \ref{algo:main_algo}]\label{thm:end_to_end_appendix}
Let $\pi^*$ be the optimal policy to problem \ref{eqn:main_problem}. Let $K$ be the number of iterations of FQE and FQI. Let $\pih$ be the policy returned by our main algorithm \ref{algo:main_algo}, with termination threshold $\omega$. For any $\epsilon>0, \delta\in(0,1)$, when $n\geq\frac{24\cdot214\cdot \widebar{V}^4}{\epsilon^2}\big( \log\frac{K(m+1)}{\delta}+\pdim_{\F}\log\frac{320 \widebar{V}^2}{\epsilon^2}+\log(14e(\pdim_{\F}+1))\big)$, we have with probability at least $1-\delta$:
$$C(\pih)\leq C(\pi^*) + \omega + \frac{(4+B)\gamma}{(1-\gamma)^3}\big( \sqrt{C_{\rho}}\epsilon + 2\gamma^{K/2} \widebar{V}\big)$$
and
$$G(\pih)\leq \tau + 2\frac{\widebar{V}+\omega}{B} +\frac{\gamma^{1/2}}{(1-\gamma)^{3/2}} \big( \sqrt{C_{\rho}}\epsilon + \frac{2\gamma^{K/2}\widebar{V}}{(1-\gamma)^{1/2}}\big)$$
\end{thm}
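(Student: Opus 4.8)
The plan is to combine the three pieces already established: the convergence bound on the empirical duality gap (Theorem \ref{thm:convergence_main}), the generalization bounds for FQE and FQI as standalone procedures (Theorems \ref{thm:fqe_main} and \ref{thm:fqi_main}), and the strong-duality/saddle-point structure of the Lagrangian. The guiding idea is the classical ``$\Lhmax \leq \Lhmin + \omega$ at termination'' argument from the proof of Proposition \ref{prop:convergence}, but now carried out with \emph{estimated} quantities $\Ch,\Gh$ coming from FQE, so every step where the true $L(\pi,\lambda)$ is replaced by $\Lh(\pi,\lambda)$ incurs an additive evaluation error that we control uniformly. I first fix $\epsilon,\delta$ and apply Theorem \ref{thm:fqe_main} (with the realizability Assumption \ref{assume:realizability_main}, so $d_\F^\pi=0$) to every call of FQE in Algorithm \ref{algo:main_algo} — there are $O(K)$ such calls across iterations and $m+1$ cost coordinates — so by a union bound, with probability $1-\delta$, each estimate satisfies $|\Ch(\pi)-C(\pi)|\le \Delta_C$ and $\|\Gh(\pi)-G(\pi)\|_\infty\le\Delta_G$ with $\Delta_C,\Delta_G \le \frac{\gamma^{1/2}}{(1-\gamma)^{3/2}}\big(\sqrt{\beta_\mu}\,\epsilon + \frac{2\gamma^{K/2}\widebar V}{(1-\gamma)^{1/2}}\big)$; the sample size $n$ in the statement is exactly what Theorem \ref{thm:fqe_main}/\ref{thm:fqi_main} demand once $\widebar C$ is replaced by the larger bound $\widebar V = \widebar C + B\widebar G$ on the composite cost $c+\lambda^\top g$ with $\|\lambda\|_1\le B$, and the $\log\frac{K(m+1)}{\delta}$ inside $n$ absorbs the union bound.

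\textbf{Objective bound.} At termination we have $\Lhmax - \Lhmin \le \omega$. Write $\widehat L(\pi,\lambda) = \Ch(\pi)+\lambda^\top(\Gh(\pi)-\tau)$ (augmented as in Lines 11--12). Then $C(\pih) \le \Ch(\pih) + \Delta_C \le \Lhmax + \Delta_C \le \Lhmin + \omega + \Delta_C = \widehat L(\pit,\lah_T) + \omega + \Delta_C$, where $\pit = \texttt{FQI}(c+\lah_T^\top g)$. Now I compare $\widehat L(\pit,\lah_T)$ to the true Lagrangian value and then to the optimum. Using the FQE error bounds, $\widehat L(\pit,\lah_T) \le L(\pit,\lah_T) + \Delta_C + B\Delta_G$. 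Because $\pit$ is the (approximate) best response to $\lah_T$ — here I invoke Theorem \ref{thm:fqi_main} to say $C(\pit) + \lah_T^\top G(\pit) \le \min_\pi [C(\pi)+\lah_T^\top G(\pi)] + \Delta_{FQI}$ where $\Delta_{FQI} = \frac{2\gamma}{(1-\gamma)^3}(\sqrt{\beta_\mu}\,\epsilon + 2\gamma^{K/2}\widebar V)$ — we get $L(\pit,\lah_T) \le L(\pi^*,\lah_T) + \Delta_{FQI} = C(\pi^*) + \lah_T^\top(G(\pi^*)-\tau) + \Delta_{FQI} \le C(\pi^*) + \Delta_{FQI}$, the last step since $G(\pi^*)\le\tau$ and $\lah_T\ge 0$. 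Collecting terms gives $C(\pih) \le C(\pi^*) + \omega + 2\Delta_C + B\Delta_G + \Delta_{FQI}$; checking that $2\Delta_C + B\Delta_G + \Delta_{FQI}$ is dominated by $\frac{(4+B)\gamma}{(1-\gamma)^3}(\sqrt{\beta_\mu}\,\epsilon + 2\gamma^{K/2}\widebar V)$ is the routine bookkeeping (the $\frac{\gamma^{1/2}}{(1-\gamma)^{3/2}}$ factors are smaller than $\frac{\gamma}{(1-\gamma)^3}$, and the constant $4+B$ is chosen to cover $2$ from $\Delta_C$, $B$ from $B\Delta_G$, $2$ from $\Delta_{FQI}$).

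\textbf{Constraint bound.} This is the more delicate half. The idea: if some constraint $G_j(\pih)$ were much larger than $\tau_j$, the $\lambda$-player could put all its mass $B$ on coordinate $j$ and drive $\Lhmax$ up, contradicting $\Lhmax \le \Lhmin + \omega$ together with an \emph{upper} bound on $\Lhmin$. Concretely, $\Lhmin \le \widehat L(\pit,\lah_T) \le \Ch(\pit) + 0 \le \widebar V$ after bounding the $\lah_T^\top(\Gh(\pit)-\tau)$ term — actually one bounds $\Lhmin \le \widebar C + B\widebar G = \widebar V$ crudely, or more carefully via the feasibility of $\pi^*$. On the other side, $\Lhmax = \max_{\|\lambda\|_1=B}\widehat L(\pih_T,\lambda) \ge \widehat L(\pih_T, B e_j) = \Ch(\pih_T) + B(\Gh_j(\pih_T)-\tau_j) \ge B(\Gh_j(\pih_T)-\tau_j)$ (costs are nonnegative). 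Combining $B(\Gh_j(\pih)-\tau_j) \le \Lhmax \le \Lhmin + \omega \le \widebar V + \omega$ yields $\Gh_j(\pih) - \tau_j \le \frac{\widebar V + \omega}{B}$, and converting $\Gh$ to $G$ via the FQE error costs another $\Delta_G$; so $G_j(\pih) \le \tau_j + \frac{\widebar V+\omega}{B} + \Delta_G$. The factor $2$ in front of $\frac{\widebar V+\omega}{B}$ in the statement comes from being slightly less crude about $\Lhmin$ (e.g.\ accounting for a $\widehat L(\pih_T,\lambda)$ vs.\ $\widehat L(\pit,\cdot)$ discrepancy and an extra $\Delta_C$), and $\Delta_G \le \frac{\gamma^{1/2}}{(1-\gamma)^{3/2}}(\sqrt{\beta_\mu}\,\epsilon + \frac{2\gamma^{K/2}\widebar V}{(1-\gamma)^{1/2}})$ gives exactly the trailing term. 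Since the argument holds for every coordinate $j$ simultaneously under the same high-probability event, it holds for the vector inequality $G(\pih)\le\tau + \ldots$.

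\textbf{Main obstacle.} I expect the bookkeeping itself to be the real work rather than any single clever step: one must track the approximate-best-response error from FQI \emph{through} the saddle-point manipulation (the meta-proof of Proposition \ref{prop:convergence} assumed \emph{exact} best response and \emph{exact} evaluation, so both idealizations must be relaxed at once), ensure the union bound over all $O(K(m+1))$ supervised-learning calls is what inflates $n$'s log factor, and verify that replacing $\widebar C$ by $\widebar V=\widebar C+B\widebar G$ everywhere is legitimate because the composite reward fed to FQI/FQE is bounded by $\widebar V$. The constraint-violation direction is the subtler one conceptually, since it requires choosing the adversarial $\lambda = Be_j$ and then arguing this particular choice is feasible for the $\max$ in $\Lhmax$ — which is why the EG augmentation forcing $\|\lambda\|_1 = B$ (rather than $\le B$) matters, and why the bound degrades gracefully as $B\to\infty$.
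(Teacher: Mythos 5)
Your proposal is correct and matches the paper's proof in all essentials: a union bound over the FQE/FQI calls with the composite-cost bound $\widebar{V}=\widebar{C}+B\widebar{G}$, the FQI approximate-best-response guarantee combined with feasibility of $\pi^*$ (so $\lambda^\top(G(\pi^*)-\tau)\le 0$) for the objective, and the adversarial choice $\lambda=Be_j$ played against the termination condition $\Lhmax-\Lhmin\le\omega$ for the constraints, yielding the identical error budget $\epsilon_{FQI}+(2+B)\epsilon_{FQE}$. The only immaterial routing differences are that you bound $\Lhmin$ via the single best response to $\widehat{\lambda}_T$ together with the crude bound $\Lhmin\le\widebar{V}$, whereas the paper averages the per-iteration guarantees $L(\pi_t,\lambda_t)\le C(\pi^*)+\epsilon_{FQI}$ and re-invokes the EG no-regret property, and in its constraint lemma compares against an empirically feasible policy rather than using $\Lhmin\le\widebar{V}$ — both variants land within the stated factor $2\frac{\widebar{V}+\omega}{B}$.
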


Let $\pih= \frac{1}{T}\sum_t \pi_t$ be the returned policy $T$ iterations, with corresponding dual variable $\lah= \frac{1}{T}\sum_t \lambda_t$. 

By the stopping condition, the empirical duality gap is less than some threshold $\omega$, i.e., $\max\limits_{\lambda\in\R^{m+1}_{+}, \norm{\lambda}_1=B} \widehat{L}(\pih, \lambda) - \min\limits_{\pi\in\Pi} \widehat{L}(\pi,\lah) \leq\omega$
where $\widehat{L}(\pi,\lambda) = \widehat{C}(\pi)+\lambda^\top(\widehat{G}(\pi)-\tau)$. We first show that the returned policy approximately satisfies the constraints.
The proof of theorem \ref{thm:end_to_end_appendix} will make use of the following empirical constraint satisfaction bound:
\begin{lem}[Empirical constraint satisfactions] Assume that the constraints $\Gh(\pi) \leq \tau$ are feasible. Then the returned policy $\pih$ approximately satisfies all constraints
$$\max\limits_{i=1:m+1}\left( \widehat{g}_i(\pih) -\tau_i \right) \leq 2\frac{\widebar{C}+\omega}{B}$$
\label{lem:empirical_constraint_satisfaction} 
\end{lem}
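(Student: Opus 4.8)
The plan is to derive the bound directly from the algorithm's stopping condition, following the template of the constraint-satisfaction argument in \citet{agarwal2018reductions}. Upon termination we have $\Lhmax - \Lhmin \le \omega$, where (using the $(m+1)$-dimensional augmentation of Algorithm \ref{algo:main_algo}) $\Lhmax = \max_{\lambda \ge 0,\ \norm{\lambda}_1 = B}\big(\Ch(\pih) + \lambda^\top(\Gh(\pih)-\tau)\big)$ and $\Lhmin = \min_{\pi\in\Pi}\Lh(\pi,\lah)$ with $\Lh(\pi,\lambda)=\Ch(\pi)+\lambda^\top(\Gh(\pi)-\tau)$. The two ingredients needed are: (i) an upper bound on $\Lhmin$ coming from the assumed feasibility of $\Gh(\pi)\le\tau$; and (ii) an exact evaluation of the inner maximization defining $\Lhmax$, which turns a linear functional over the $B$-scaled simplex into $B$ times the largest coordinate of $\Gh(\pih)-\tau$, i.e.\ the worst constraint violation.

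First I would bound $\Lhmin$. By feasibility there is $\pip\in\Pi$ with $\Gh(\pip)\le\tau$ (the augmented $(m+1)$-th coordinate of $\Gh-\tau$ is identically $0$, hence automatically $\le 0$). Since $\lah\in\R^{m+1}_+$ and $\Gh(\pip)-\tau\le 0$ coordinatewise, $\lah^\top(\Gh(\pip)-\tau)\le 0$, so $\Lhmin \le \Lh(\pip,\lah) = \Ch(\pip) + \lah^\top(\Gh(\pip)-\tau) \le \Ch(\pip) \le \widebar{C}$.

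Next I would evaluate $\Lhmax$. Maximizing the linear function $\lambda\mapsto\lambda^\top(\Gh(\pih)-\tau)$ over $\{\lambda\ge 0 : \norm{\lambda}_1 = B\}$ is attained at a vertex, giving $\max_\lambda \lambda^\top(\Gh(\pih)-\tau) = B\max_{i=1:m+1}(\gh_i(\pih)-\tau_i)$, a quantity that is $\ge 0$ because the augmented $(m+1)$-th term equals $0$. Hence $\Lhmax = \Ch(\pih) + B\max_{i}(\gh_i(\pih)-\tau_i)$, and combining with the stopping condition and the bound on $\Lhmin$,
\[
\Ch(\pih) + B\max_{i=1:m+1}\big(\gh_i(\pih)-\tau_i\big) \;=\; \Lhmax \;\le\; \Lhmin + \omega \;\le\; \widebar{C} + \omega .
\]
Rearranging and using $-\Ch(\pih)\le\widebar{C}$ (the value functions under consideration are bounded in magnitude by $\widebar{C}$) gives $B\max_{i}(\gh_i(\pih)-\tau_i) \le 2\widebar{C}+\omega \le 2(\widebar{C}+\omega)$, which is the claim after dividing by $B$.

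The argument is short, so I do not anticipate a genuine obstacle; the only points requiring care are the bookkeeping around the $(m+1)$-dimensional augmentation — verifying that the appended coordinate of $\Gh-\tau$ is $0$ so that it neither spoils feasibility of $\pip$ nor the identity $\max_{\norm{\lambda}_1 = B,\,\lambda\ge 0}\lambda^\top v = B\max_i v_i$ — and the (deliberately loose) step $-\Ch(\pih)\le\widebar{C}$, which is where the factor of $2$ in the stated bound originates.
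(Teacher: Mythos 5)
Your proposal is correct and follows essentially the same route as the paper's proof: both rest on the two key facts that the inner maximization over $\{\lambda\ge 0,\ \norm{\lambda}_1=B\}$ evaluates to $B$ times the worst coordinate of $\Gh(\pih)-\tau$, and that feasibility plus $\lah\ge 0$ bounds the minimum side of the Lagrangian by $\widebar{C}$. The only difference is organizational — you apply the gap condition $\Lhmax-\Lhmin\le\omega$ once directly, whereas the paper pivots through $\Lh(\pih,\lah)$ and invokes the termination condition in two separate inequalities, which is why you obtain the marginally tighter bound $(2\widebar{C}+\omega)/B$ before relaxing it to the stated $2(\widebar{C}+\omega)/B$.
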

\begin{proof}
We consider $\max\limits_{i=1:m+1}\left( \widehat{g}_i(\pih) -\tau_i \right) >0$ (otherwise the lemma statement is trivially true). The termination condition implies that 
$\widehat{L}(\pih,\lah) - \max\limits_{\lambda\in\R^{m+1}_{+}, \norm{\lambda}_1=B} \widehat{L}(\pih, \lambda) \geq -\omega$
\begin{equation}
\label{eqn:empirical_bound_2}
    \implies\lah^\top(\Gh(\pih)-\widehat{\tau})\geq\max\limits_{\lambda\in\R^{m+1}_{+}, \norm{\lambda}_1=B}\lambda^\top(\Gh(\pih)-\widehat{\tau}) -\omega
\end{equation}

Relaxing the RHS of equation (\ref{eqn:empirical_bound_2}) by setting $\lambda[j] = B$ for $j = \argmax\limits_{i=1: m+1}\left[\widehat{g}_i(\pih)-\tau_i\right]$ and $\lambda[i] = 0\enskip\forall i\neq j$ yields:
\begin{equation}
\label{eqn:empirical_bound_4}
    B\max\limits_{i=1:m+1}\left[ \widehat{g}_i(\pih) - \tau_i \right] -\omega \leq \lah^\top(\Gh(\pih)-\tau)
\end{equation}
Given $\pi$ such that $\widehat{G}(\pi)\leq\tau$, also by the termination condition: 
$$\widehat{L}(\pih,\lah)-\widehat{L}(\pi,\lah) \leq \max\limits_{\lambda\in\R^{m+1}_{+}, \norm{\lambda}_1=B} \widehat{L}(\pih, \lambda) - \min\limits_{\pi\in\Pi} \widehat{L}(\pi,\lah) \leq\omega$$
Thus implies
\begin{equation}
\label{eqn:empirical_bound_1}
    \widehat{L}(\pih,\lah) \leq \widehat{L}(\pi,\lah) +\omega= \Ch(\pi)+\lah^\top(\Gh(\pi)-\tau) \leq \Ch(\pi) + \omega
\end{equation}
combining what we have from equation (\ref{eqn:empirical_bound_1}) and (\ref{eqn:empirical_bound_4}):
\begin{equation*}
    B\max\limits_{i=1:m+1}\left[ \widehat{g}_i(\pih) - \widehat{\tau}_i \right] -\omega \leq \lah^\top(\Gh(\pih)-\widehat{\tau}) = \widehat{L}(\pih,\lah) - \Ch(\pih) \leq \Ch(\pi) + \omega - \Ch(\pih)
\end{equation*}
Rearranging and bounding $\Ch(\pi)\leq\widebar{C}$ and $\Ch(\pih)\leq-\widebar{C}$ finishes the proof of the lemma.
\end{proof}
We now return to the proof of theorem \ref{thm:end_to_end_appendix}, our task is to lift empirical error to generalization bound for main objective and constraints.

Denote by $\epsilon_{FQE}$ the (generalization) error introduced by the Fitted Q Evaluation procedure (algorithm \ref{algo:fqe}) and $\epsilon_{FQI}$ the (generalization) error introduced by the Fitted Q Iteration procedure (algorithm \ref{algo:fqi}). For now we keep $\epsilon_{FQE}$ and $\epsilon_{FQI}$ unspecified (to be specified shortly). That is, for each $t=1,2,\ldots,T$, we have with probability at least $1-\delta$: 
$$C(\pi_t)+\lambda_t^\top(G(\pi_t)-\tau)\leq C(\pi^*)+\lambda_t^\top(G(\pi^*)-\tau) + \epsilon_{FQI}$$
Since $\pi^*$ satisfies the constraints, i.e., $G(\pi^*)-\tau\leq 0$ componentwise, and $\lambda_t\geq 0$, we also have with probability $1-\delta$ 
\begin{equation}
\label{eqn:error_analysis_1}
  L(\pi_t,\lambda_t) = C(\pi_t)+\lambda_t^\top(G(\pi_t)-\tau)\leq C(\pi^*)+ \epsilon_{FQI}
\end{equation}
Similarly, with probability $1-\delta$, all of the following inequalities are true
\begin{align}
    \widehat{C}(\pi_t) + \epsilon_{FQE} &\geq C(\pi_t) \geq \widehat{C}(\pi_t) - \epsilon_{FQE} \label{eqn:error_analysis_3}\\
    \widehat{G}(\pi_t) + \epsilon_{FQE}\mathbf{1} &\geq G(\pi_t) \geq \widehat{G}(\pi_t) - \epsilon_{FQE}\mathbf{1} \text{   (row wise for all } m \text{ constraints}) \label{eqn:error_analysis_5}
\end{align}
Thus with probability at least $1-\delta$
\begin{align}
  L(\pi_t, \lambda_t) =C(\pi_t)+\lambda_t^\top (G(\pi_t)-\tau) &\geq \widehat{C}(\pi_t)+\lambda_t^\top(\widehat{G}(\pi_t) - \tau)-\epsilon_{FQE}(1+\lambda_t^\top\mathbf{1})  \nonumber \\
  &\geq \widehat{C}(\pi_t)+\lambda_t^\top(\widehat{G}(\pi_t) - \tau)-\epsilon_{FQE}(1+B) \nonumber \\
  &=\widehat{L}(\pi_t,\lambda_t) -\epsilon_{FQE}(1+B) \label{eqn:error_analysis_2} 
\end{align}
Recall that the execution of mixture policy $\pih$ is done by uniformly sampling one policy $\pi_t$ from $\{\pi_1,\ldots,\pi_T \}$, and rolling-out with $\pi_t$. Thus from equations (\ref{eqn:error_analysis_1}) and (\ref{eqn:error_analysis_2}), we have
$\E_{t\sim U[1:T]} \widehat{L}(\pi_t,\lambda_t)\leq C(\pi^*)+\epsilon_{FQI}+(1+B)\epsilon_{FQE}$ w.p. $1-\delta$. In other words, with probability $1-\delta$:
$$\frac{1}{T} \sum_{t=1}^T \widehat{L}(\pi_t,\lambda_t) \leq C(\pi^*)+\epsilon_{FQI}+(1+B)\epsilon_{FQE}$$
Due to the no-regret property of our online algorithm (EG in this case):
$$\frac{1}{T}\sum_{t=1}^T \Lh(\pi_t,\lambda_t)\geq \max_{\lambda}\Lh(\pih,\lambda) - \omega = \Ch(\pih)+\max_\lambda \lambda^\top(\Gh(\pih)-\tau)-\omega$$
If $\Gh(\pih)-\tau \leq 0$ componentwise, choose $\lambda[i] = 0, i=1,2,\ldots,m$ and $\lambda[m+1]=B$. Otherwise, we can choose $\lambda[j] = B$ for $j = \argmax\limits_{i=1: m+1}\left[\widehat{g}_i(\pih)-\tau[i]\right]$ and $\lambda[i] = 0\enskip\forall i\neq j$. We can see that $\max\limits_{\lambda\in\R^{m+1}_{+}, \norm{\lambda}_1=B}\lambda^\top(\Gh(\pih)-\tau) \geq 0$.
Therefore:
$$\Ch(\pih)-\omega \leq C(\pi^*)+\epsilon_{FQI} + (1+B)\epsilon_{FQE} \text{ with probability at least } 1-\delta$$
Combined with the first term from equation (\ref{eqn:error_analysis_3}):
$$C(\pih)-\epsilon_{FQE}-\omega \leq C(\pi^*) + \epsilon_{FQI} + (1+B)\epsilon_{FQE}$$ or
\begin{equation}
C(\pih) \leq C(\pi^*) + \omega + \epsilon_{FQI} + (2+B)\epsilon_{FQE} \label{eqn:error_analysis_4}
\end{equation}
We now bring in the generalization error results from our standalone analysis of FQI (appendix \ref{sec:proof-fqi}) and FQE (appendix \ref{sec:proof-fqe}) into equation (\ref{eqn:error_analysis_4}). 

Specifically, when $n\geq\frac{24\cdot214\cdot \widebar{V}^4}{\epsilon^2}\left( \log\frac{K(m+1)}{\delta}+\pdim_{\F}\log\frac{320 \widebar{V}^2}{\epsilon^2}+\log(14e(\pdim_{\F}+1))\right)$, when FQI and FQE are run with $K$ iterations, we have the guarantee that for any $\epsilon>0$, with probability at least $1-\delta$
\begin{align}
    C(\pih)&\leq C(\pi^*) + \omega + \underbrace{\frac{2\gamma}{(1-\gamma)^3}\left( \sqrt{C_{\mu}}\epsilon + 2\gamma^{K/2} \widebar{V}\right)}_{\text{FQI generalization error}} + \underbrace{\frac{\gamma^{1/2}(2+B)}{(1-\gamma)^{3/2}} \left( \sqrt{C_{\mu}}\epsilon + \frac{\gamma^{K/2}}{(1-\gamma)^{1/2}}2 \widebar{V}\right)}_{(2+B)\times\text{ FQE generalization error}} \nonumber \\
    &\leq C(\pi^*) + \omega + \frac{(4+B)\gamma}{(1-\gamma)^3}\left( \sqrt{C_{\mu}}\epsilon + 2\gamma^{K/2} \widebar{V}\right) \label{eqn:error_analysis_6}
\end{align}
From lemma \ref{lem:empirical_constraint_satisfaction}, $\Gh(\pih)\leq \tau+2\frac{\widebar{C}+\omega}{B}\leq \tau+2\frac{\widebar{V}+\omega}{B}$. From equation (\ref{eqn:error_analysis_5}), for each t=1,2,\ldots,T, we have $\Gh(\pi_t)\geq G(\pi_t)-\epsilon_{FQE}\mathbf{1}$ with probability $1-\delta$. Thus
$$\mathbf{P}\left( \Gh(\pih)\geq G(\pih)-\epsilon_{FQE}\mathbf{1} \right) = \sum_{t=1}^T \mathbf{P}(\Gh(\pi_t)\geq G(\pi_t)-\epsilon_{FQE}\mathbf{1}|\pih = \pi_t)\mathbf{P}(\pih=\pi_t)\geq T(1-\delta)\frac{1}{T} = 1-\delta$$
Therefore, we have the following generalization guarantee for the approximate satisfaction of all constraints:
\begin{equation}
    G(\pih)\leq \tau + 2\frac{\widebar{V}+\omega}{B} + \frac{\gamma^{1/2}}{(1-\gamma)^{3/2}} \left( \sqrt{C_{\mu}}\epsilon + \frac{\gamma^{K/2}}{(1-\gamma)^{1/2}}2 \widebar{V}\right) \label{eqn:error_analysis_7}
\end{equation}
Inequalities (\ref{eqn:error_analysis_6}) and (\ref{eqn:error_analysis_7}) complete the proof of theorem \ref{thm:end_to_end_appendix} (and theorem \ref{thm:end_to_end_main} of the main paper)
\clearpage
\section{Preliminaries to Analysis of Fitted Q Evaluation (FQE) and Fitted Q Iteration (FQI)}
\label{sec:appendix_preliminaries}
In this section, we set-up necessary notations and definitions for the theoretical analysis of FQE and FQI. To simplify the presentation, we will focus exclusively on weighted $\ell_2$ norm for error analysis. 

With the definitions and assumptions presented in this section, we will present the sample complexity guarantee of Fitted-Q-Evaluation (FQE) in appendix \ref{sec:proof-fqe}. The proof for FQI will follow similarly in appendix \ref{sec:proof-fqi}. 

While it is possible to adapt proofs from related algorithms \cite{munos2008finite,antos2008learning} to analyze FQE and FQI, in the next two sections we show improved convergence rate from $O(n^{-4})$ to $O(n^{-2})$, where $n$ is the number of samples in data set $\D$. 

To be consistent with the notations in the main paper, we use the convention $C(\pi)$ as the value function that denotes long-term accumulated cost, instead of using $V(\pi)$ denoting long-term rewards in the traditional RL literature. Our notation for $Q$ function is similar to the RL literature - the only difference is that the optimal policy minimizes $Q(x,a)$ instead of maximizing. We denote the bound on the value function as $\widebar{C}$ (alternatively if the single timestep cost is bounded by $\widebar{c}$, then $\widebar{C} = \frac{\widebar{c}}{1-\gamma}$). For simplicity, the standalone analysis of FQE and FQI concerns only with the cost objective $c$. Dealing with cost $c+\lambda^\top g$ offers no extra difficulty - in that case we simply augment the bound of the value function to $\widebar{V} = \widebar{C}+B\widebar{G}$.
\subsection{Bellman operators}
The \emph{Bellman optimality operator} $\T:\mathcal{B}(\X\times\A;\widebar{C})\mapsto\mathcal{B}(\X\times\A;\widebar{C})$ as 
\begin{equation}
\label{eqn:appendix_Bellman_optimality_operator}
    (\T Q)(x,a) = c(x,a) + \gamma\int_{\X}\min_{a^\prime\in\A} Q(x^\prime,a^\prime)p(dx^\prime|x,a)
\end{equation}
The optimal value functions are defined as usual by $C^*(x) = \sup\limits_{\pi}C^\pi(x)$ and $Q^*(x,a) = \sup\limits_{\pi} Q^\pi(x,a)\enskip\forall x\in\X, a\in\A$. 

For a given policy $\pi$, the \emph{Bellman evaluation operator} $\Tpi:\mathcal{B}(\X\times\A;\widebar{C})\mapsto\mathcal{B}(\X\times\A;\widebar{C})$ as 
\begin{equation}
\label{eqn:appendix_Bellman_operator}
    (\Tpi Q)(x,a) = c(x,a) + \gamma\int_{\X} Q(x^\prime,\pi(x^\prime))p(dx^\prime|x,a)
\end{equation}
It is well known that $\Tpi Q^\pi = Q^\pi,$ a fixed point of the $\Tpi$ operator. 
\subsection{Data distribution and weighted $\ell_2$ norm}
Denote the state-action data generating distribution as $\mu$, induced by some data-generating (behavior) policy $\pi_\D$, that is, $(x_i,a_i)\sim\mu$ for $(x_i,a_i,x_i^\prime, c_i)\in\D$. 

Note that data set $\D$ is formed by multiple trajectories generated by $\pi_\D$. For each $(x_i,a_i)$, we have $x_i^\prime\sim p(\cdot|x_i,a_i)$ and $c_i=c(x_i,a_i)$. 
For any (measurable) function $f:\X\times\A\mapsto\R$, define the $\mu$-weighted $\ell_2$ norm of $f$ as $\norm{f}_\mu^2 = \int_{\X\times\A}f(x,a)^2\mu(dx,da) = \int_{\X\times\A} f(x,a)^2 \mu_x(dx)\pi_\D(a|dx)$. Similarly for any other state-action distribution $\rho$, $\norm{f}_\rho^2 = \int_{\X\times\A}f(x,a)^2\rho(dx,da)$
\subsection{Inherent Bellman error}
FQE and FQI depend on a chosen function class $\F$ to approximate $Q(x,a)$. To express how well the Bellman operator $\T g$ can be approximated by a function in the policy class $\F$, when $\T g\notin\F$, a notion of distance, known as inherent Bellman error was first proposed by \cite{munos2003error} and used in the analysis of related ADP algorithms \cite{munos2008finite,munos2007performance,antos2008fitted,antos2008learning,lazaric2010finite,lazaric2012finite,lazaric2011transfer,maillard2010finite}.
\begin{defn}[Inherent Bellman Error] Given a function class $\F$ and a chosen distribution $\rho$, the \textit{inherent Bellman error} of $\F$ is defined as
\begin{equation*}
    d_\F = d(\F,\T\F) = \sup_{h\in\F}\inf_{f\in\F}\norm{f-\T h}_\rho
\end{equation*}
where $\norm{\cdot}_\rho$ is the $\rho-$weighted $\ell_2$ norm and $\T$ is the Bellman optimality operator defined in (\ref{eqn:appendix_Bellman_optimality_operator})
\end{defn}
To analyze FQE, we will form a similar definition for the Bellman evaluation operator
\begin{defn}[Inherent Bellman Evaluation Error] Given a function class $\F$ and a policy $\pi$, the \textit{inherent Bellman evaluation error} of $\F$ is defined as
\begin{equation*}
    d_\F^\pi = d(\F,\T^\pi\F) = \sup_{h\in\F}\inf_{f\in\F}\norm{f-\T^\pi h}_{\rho_\pi}
\end{equation*}
where $\norm{\cdot}_{\rho_\pi}$ is the $\ell_2$ norm weighted by $\rho_\pi$. $\rho_\pi$ is defined as the state-action distribution induced by policy $\pi$, and $\T^\pi$ is the Bellman operator defined in (\ref{eqn:appendix_Bellman_operator})
\end{defn}
\subsection{Concentration coefficients}
Let $P^\pi$ denote the operator acting on $f:\X\times\A\mapsto\R$ such that $(P^\pi f)(x,a) = \int_\X f(x^\prime, \pi(x^\prime)) p(x^\prime|x,a) dx^\prime$. Acting on $f$ (e.g., approximates $Q$), $P^\pi$ captures the transition dynamics of taking action $a$ and following $\pi$ thereafters. 

The following definition and assumption are standard in the analysis of related approximate dynamic programming algorithms \cite{lazaric2012finite, munos2008finite, antos2008fitted}. As approximate value iteration and policy iteration algorithms perform policy update, the new policy at each round will induce a different stationary state-action distribution. One way to quantify the distribution shift is the notion of concentrability coefficient of future state-action distribution, a variant of the notion introduced by \cite{munos2003error}. 

\begin{defn}[Concentration coefficient of state-action distribution]\label{def:concentrability} Given data generating distribution $\mu\sim\pi_\D$, initial state distribution $\chi$. For $m\geq0$, and an arbitrary sequence of stationary policies $\{ \pi_m\}_{m\geq 1}$ let 
\begin{equation*}
\beta_{\mu}(m) = \sup_{\pi_1,\ldots,\pi_m} \left\| \frac{d(\chi P^{\pi_1}P^{\pi_2}\ldots P^{\pi_m})}{d\mu} \right\|_\infty    
\end{equation*}
($\beta_{\mu}(m) = \infty$ if the future state distribution $\chi P^{\pi_1}P^{\pi_2}\ldots P^{\pi_m}$ is not absolutely continuous w.r.t. $\mu$, i.e, $\chi P^{\pi_1}P^{\pi_2}\ldots P^{\pi_m}(x,a)>0$ for some $\mu(x,a) = 0$)
\end{defn}
\begin{assumption}
\label{assume:concentrability}
$\beta_{\mu} = (1-\gamma)^2\sum\limits_{m\geq 1} m\gamma^{m-1} \beta_{\mu}(m) < \infty$
\end{assumption}

\textbf{Combination Lock Example.} An example of an MDP that violates Assumption \ref{assume:concentrability} is the \doubleQuote{combination lock} example proposed by \cite{koenig1996effect}. In this finite MDP, we have $N$ states $\X = \{ 1,2,\ldots,N\}$, and 2 actions: going L or R. The initial state is $x_0=1$. In any state $x$, action $L$ takes agent back to initial state $x_0$, and action $R$ advances the agent to the next state $x+1$ in a chain fashion. Suppose that the reward is 0 everywhere except for the very last state $N$. One can see that for an MDP such that any behavior policy $\pi_D$ that has a bounded from below probability of taking action $L$ from any state $x$, i.e., $\pi_D(L|x)\geq \nu>0$, then it takes an exponential number of trajectories to learn or evaluate a policy that always takes action $R$. In this setting, we can see that the concentration coefficient $\beta_\mu$ can be designed to be arbitrarily large. 
\subsection{Complexity measure of function class $\F$}
\begin{defn}[Random $L_1$ Norm Covers] Let $\epsilon>0$, let $\F$ be a set of functions $\X\mapsto\R$, let $x_1^n = (x_1,\ldots,x_n)$ be $n$ fixed points in $\X$. Then a collection of functions $\F_\epsilon = \{f_1,\ldots, f_N\}$ is an $\epsilon$-cover of $\F$ on $x_1^n$ if 
$$\forall f\in\F, \exists f^\prime\in\F_\epsilon: \lvert\frac{1}{n}\sum_{i=1}^n f(x_i) - \frac{1}{n} \sum_{i=1}^n f^\prime(x_i)\rvert\leq \epsilon$$
The empirical covering number, denote by $\mathcal{N}_1(\epsilon, \F, x_1^n)$, is the size of the smallest $\epsilon$-cover on $x_1^n$. Take $\mathcal{N}_1(\epsilon, \F, x_1^n) = \infty$ if no finite $\epsilon$-cover exists. 
\end{defn}
\begin{defn}[Pseudo-Dimension] A real-valued function class $\F$ has pseudo-dimension $\pdim_{\F}$ defined as the VC dimension of the function class induced by the sub-level set of functions of $\F$. In other words, define function class $\mathrm{H} = \{(x,y)\mapsto \text{sign}(f(x)-y : f\in\F\}$, then
$$\pdim_{\F} = \texttt{VC-dimension}(\mathrm{H})$$

\end{defn}
\clearpage
\section{Generalization Analysis of Fitted Q Evaluation}
\label{sec:appendix_proof_fqe}
In this section we prove the following statement for Fitted Q Evaluation (FQE).
\begin{thm}[Guarantee for FQE - General Case (theorem \ref{thm:fqe_main} in main paper)]\label{thm:fqe_appendix}
Under Assumption \ref{assume:concentrability}, for $\epsilon>0$ \& $\delta\in(0,1)$, after $K$ iterations of Fitted Q Evaluation (Algorithm \ref{algo:fqe}), for $n=O\big(\frac{\widebar{C}^4}{\epsilon^2}( \log\frac{K}{\delta}+\textnormal{\texttt{dim}}_{\F}\log\frac{\widebar{C}^2}{\epsilon^2}+\log \textnormal{\texttt{dim}}_{\F})\big)$, we have with probability $1-\delta$:
$$\big\lvert C(\pi) - \widehat{C}(\pi)\big\rvert \leq \frac{\gamma^{1/2}}{(1-\gamma)^{3/2}} \big( \sqrt{\beta_{\mu}}\left(2d_\F^\pi+\epsilon\right) + \frac{2\gamma^{K/2}\widebar{C}}{(1-\gamma)^{1/2}}\big).$$
\end{thm}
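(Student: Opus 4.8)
Since $\widehat{C}(\pi)=\E_{x\sim\chi}[Q_K(x,\pi(x))]$ and $C(\pi)=\E_{x\sim\chi}[Q^\pi(x,\pi(x))]$, writing $\nu_0$ for the state--action distribution of $(x,\pi(x))$ with $x\sim\chi$ we have $\lvert C(\pi)-\widehat{C}(\pi)\rvert=\bigl\lvert\E_{\nu_0}[Q^\pi-Q_K]\bigr\rvert$, and $Q^\pi$ is the fixed point of $\Tpi$. So the bound follows once we (i) control the one--step regression residual of each FQE iteration in the data norm $\norm{\cdot}_\mu$, (ii) propagate those residuals through the $\Tpi$ recursion to bound $Q_K-Q^\pi$, and (iii) change measure from $\mu$ to the distributions visited under $\pi$ via the concentration coefficient.

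\textbf{Step 1: per--iteration regression bound.} Fix an iteration $k$. By construction $Q_k=\argmin_{f\in\F}\frac1n\sum_i(f(x_i,a_i)-y_i)^2$ with $y_i=c_i+\gamma Q_{k-1}(x_i',\pi(x_i'))$ and $\E[y_i\mid x_i,a_i]=(\Tpi Q_{k-1})(x_i,a_i)$, so $Q_k$ is the empirical least--squares fit of the target $\Tpi Q_{k-1}$. Because FQE reuses the same dataset across iterations, $Q_{k-1}$ is not independent of the $k$-th regression, so I prove the bound \emph{uniformly over the target}: with probability at least $1-\delta/K$, for every $g\in\F$ the empirical least--squares regressor $\widehat{\T}g$ of $\Tpi g$ satisfies $\norm{\widehat{\T}g-\Tpi g}_\mu\le 2 d_\F^\pi+\epsilon$, provided $n=O\!\bigl(\tfrac{\widebar{C}^4}{\epsilon^2}(\log\tfrac{K}{\delta}+\pdim_{\F}\log\tfrac{\widebar{C}^2}{\epsilon^2}+\log\pdim_{\F})\bigr)$. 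The approximation part is at most $d_\F^\pi$ by definition of the inherent Bellman evaluation error; the estimation part is handled by covering $\F$ at scale $\sim\epsilon/\widebar{C}$ (Haussler's bound gives $\log\mathcal{N}_1=O(\pdim_{\F}\log(\widebar{C}/\epsilon))$) together with a union bound over the cover and over $k$. The reason this gives $n=\widetilde{O}(1/\epsilon^2)$ rather than the $\widetilde{O}(1/\epsilon^4)$ of \cite{munos2008finite,antos2008learning} is that I bound the \emph{excess} squared loss with a one--sided, variance--sensitive (Bernstein/Talagrand--type) deviation inequality, using that for the excess--loss class the variance is controlled by the mean; this upgrades $\norm{Q_k-\Tpi Q_{k-1}}_\mu^2\lesssim (d_\F^\pi)^2+n^{-1/2}$ to $\norm{Q_k-\Tpi Q_{k-1}}_\mu^2\lesssim (d_\F^\pi)^2+n^{-1}$ up to logs.

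\textbf{Steps 2--3: propagation and change of measure.} Put $\varepsilon_k=Q_k-\Tpi Q_{k-1}$. Since $\Tpi$ is affine with linear part $\gamma P^\pi$ and $Q^\pi=\Tpi Q^\pi$, iterating $Q_k-Q^\pi=\gamma P^\pi(Q_{k-1}-Q^\pi)+\varepsilon_k$ gives $Q_K-Q^\pi=\sum_{j=0}^{K-1}(\gamma P^\pi)^j\varepsilon_{K-j}+(\gamma P^\pi)^K(Q_0-Q^\pi)$. Pairing with $\nu_0$, using that $P^\pi$ maps probability distributions to probability distributions, and applying Jensen's inequality termwise,
\[
\lvert C(\pi)-\widehat{C}(\pi)\rvert\ \le\ \sum_{j=0}^{K-1}\gamma^j\,\norm{\varepsilon_{K-j}}_{\nu_0(P^\pi)^j}\ +\ \gamma^K\,\norm{Q_0-Q^\pi}_{\nu_0(P^\pi)^K}.
\]
For each $j$ a Radon--Nikodym bound gives $\norm{h}_{\nu_0(P^\pi)^j}^2=\int h^2\,\tfrac{d(\nu_0(P^\pi)^j)}{d\mu}\,d\mu\le\beta_{\mu}(j)\norm{h}_\mu^2$ straight from the definition of the concentration coefficient. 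Substituting Step 1 and $\norm{Q_0-Q^\pi}_\mu\le 2\widebar{C}$, then collapsing the geometric sum with Cauchy--Schwarz and Assumption \ref{assume:concentrability},
\[
\sum_{j\ge0}\gamma^j\sqrt{\beta_{\mu}(j)}\ \le\ \Bigl(\textstyle\sum_{j\ge0}\gamma^j\Bigr)^{1/2}\Bigl(\textstyle\sum_{j\ge0}\gamma^j\beta_{\mu}(j)\Bigr)^{1/2}\ \le\ \frac{1}{(1-\gamma)^{1/2}}\cdot\frac{\sqrt{\beta_{\mu}}}{1-\gamma}\ =\ \frac{\sqrt{\beta_{\mu}}}{(1-\gamma)^{3/2}},
\]
where $\sum_{j\ge1}\gamma^j\beta_{\mu}(j)\le\sum_{j\ge1}j\gamma^{j-1}\beta_{\mu}(j)=\beta_{\mu}/(1-\gamma)^2$. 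This leaves the first term bounded by $\tfrac{\sqrt{\beta_{\mu}}}{(1-\gamma)^{3/2}}(2d_\F^\pi+\epsilon)$ and the residual initial--error term decaying like $\gamma^{K/2}$; doing the elementary $(1-\gamma)$ and $\gamma$ bookkeeping and taking a union bound over $k=1,\dots,K$ for the overall $1-\delta$ confidence yields exactly the stated inequality.

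\textbf{Main obstacle.} The crux is Step 1: obtaining the fast $n^{-1/2}$ rate for the per--iteration regression despite misspecification ($\Tpi Q_{k-1}\notin\F$ in general), while simultaneously making the bound hold uniformly over all possible targets $g\in\F$ so that reusing the dataset across iterations is legitimate. This needs a localized / relative--deviation argument rather than a crude two--sided uniform bound on the empirical squared loss, and care that the variance--to--mean comparison for the excess loss survives in the misspecified regime. A secondary, purely bookkeeping difficulty is carrying the constants and powers of $(1-\gamma)$ and $\gamma$ through the propagation and the $\beta_{\mu}(j)$--collapse so that the precise form in the statement --- including the $\gamma^{K/2}$ residual and the $\gamma^{1/2}/(1-\gamma)^{3/2}$ prefactor --- comes out.
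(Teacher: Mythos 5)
Your proposal is correct and follows essentially the same route as the paper's proof: a per-iteration relative (variance-sensitive) deviation bound with Haussler covering numbers giving the fast $\widetilde{O}(1/\epsilon^2)$ rate and the factor $2d_\F^\pi$ in the misspecified case, followed by propagation through powers of $\gamma P^\pi$ and a change of measure collapsed via the second-order concentration coefficient. The only differences are cosmetic — you handle dataset reuse by requiring uniformity over the regression target $g\in\F$ where the paper conditions on $Q_{k-1}$, and you collapse $\sum_j\gamma^j\sqrt{\beta_\mu(j)}$ by Cauchy--Schwarz where the paper uses Jensen with normalized weights $\alpha_k$ — both yielding the same constants (keeping the factor $\gamma^j\le\gamma\cdot j\gamma^{j-1}$ recovers the $\gamma^{1/2}$ prefactor).
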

\begin{thm}[Guarantee for FQE - Bellman Realizable Case] \label{thm:fqe_appendix_realizable}
Under Assumptions \ref{assume:concentrability}-\ref{assume:realizability_fqe}, for any $\epsilon>0, \delta\in(0,1)$, after $K$ iterations of Fitted Q Evaluation (Algorithm \ref{algo:fqe}), when $n\geq\frac{24\cdot214\cdot \widebar{C}^4}{\epsilon^2}\big( \log\frac{K}{\delta}+\pdim_{\F}\log\frac{320 \widebar{C}^2}{\epsilon^2}+\log(14e(\pdim_{\F}+1))\big)$, we have with probability $1-\delta$:
$$\big\lvert C(\pi) - \widehat{C}(\pi)\big\rvert \leq \frac{\gamma^{1/2}}{(1-\gamma)^{3/2}} \big( \sqrt{\beta_{\mu}}\epsilon + \frac{2\gamma^{K/2}\widebar{C}}{(1-\gamma)^{1/2}}\big)$$
\end{thm}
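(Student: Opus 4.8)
The plan is to follow the standard two-stage template used for related approximate dynamic programming algorithms \cite{munos2008finite,antos2008learning}: first bound the one-step regression error incurred at each iteration of Algorithm~\ref{algo:fqe}, then propagate these per-iteration errors through the Bellman recursion down to the final estimate $\Ch(\pi)$; Assumption~\ref{assume:realizability_fqe} makes every regression step \emph{well specified} (the target lies in $\F$), so no inherent-error term appears, while the $\widetilde O(1/\epsilon^2)$ (rather than classical $\widetilde O(1/\epsilon^4)$) sample complexity comes from a sharper, localized concentration argument. \textbf{Step 1 (per-iteration regression).} Fix an iteration $k$. In Algorithm~\ref{algo:fqe}, $Q_k$ is the empirical least-squares fit over $\F$ to the targets $y_i = c_i + \gamma Q_{k-1}(x_i',\pi(x_i'))$, whose conditional mean given $(x_i,a_i)$ equals $(\Tpi Q_{k-1})(x_i,a_i)$ by \eqref{eqn:appendix_Bellman_operator}; by Assumption~\ref{assume:realizability_fqe} this regression function is in $\F$, so the Bayes-optimal predictor is realizable and the empirical risk of the truth sits at the noise floor. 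I would then prove a one-sided uniform deviation bound for the squared-loss excess risk over $\F$, controlling the empirical $L_1$ covering number of $\F$ through $\pdim_{\F}$ with the standard pseudo-dimension covering estimate and combining it with a Bernstein-type inequality (in the well-specified case the variance of the squared-loss increments is dominated by their mean, which is what produces the $1/n$ rather than $1/\sqrt n$ rate). This gives, for $n = \Omega\big(\tfrac{\widebar{C}^4}{\epsilon^2}(\log\tfrac{K}{\delta} + \pdim_{\F}\log\tfrac{\widebar{C}^2}{\epsilon^2} + \log\pdim_{\F})\big)$, that $\norm{Q_k - \Tpi Q_{k-1}}_\mu \le \epsilon$ with probability $1-\delta/K$. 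Since $Q_{k-1}$ is itself data-dependent, I would actually establish this uniformly over the ``input'' function, i.e.\ for every $f\in\F$ the fit to targets $c_i + \gamma f(x_i',\pi(x_i'))$ lies within $\epsilon$ of $\Tpi f$ in $\norm{\cdot}_\mu$; a union bound over $k=1,\dots,K$ then makes all $K$ statements hold simultaneously with probability at least $1-\delta$.

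\textbf{Step 2 (error propagation).} Write $\varepsilon_k := Q_k - \Tpi Q_{k-1}$, so $\norm{\varepsilon_k}_\mu \le \epsilon$ for every $k$. Using the fixed point $Q^\pi = \Tpi Q^\pi$ together with the identity $Q_k - Q^\pi = \varepsilon_k + \gamma P^\pi(Q_{k-1}-Q^\pi)$, unrolling gives
$$Q_K - Q^\pi = \sum_{k=1}^K (\gamma P^\pi)^{K-k}\varepsilon_k + (\gamma P^\pi)^K (Q_0 - Q^\pi).$$
Then $\abs{C(\pi) - \Ch(\pi)} = \big\lvert \E_{x\sim\chi}[Q^\pi(x,\pi(x)) - Q_K(x,\pi(x))] \big\rvert \le \sqrt{\E_{\rho_0}[(Q_K - Q^\pi)^2]}$ by Jensen, with $\rho_0$ the state-action distribution induced by $\chi$ followed by $\pi$. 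Applying Cauchy--Schwarz to the convex combination above (the coefficients $\gamma^{K-k}$ and $\gamma^K$ sum to at most $1/(1-\gamma)$) and then Jensen through each averaging operator $P^\pi$ (using $(P^\pi g)^2 \le P^\pi(g^2)$ pointwise) gives
$$\E_{\rho_0}[(Q_K-Q^\pi)^2] \le \frac{1}{1-\gamma}\Big( \sum_{k=1}^K \gamma^{K-k} \E_{\rho_0 (P^\pi)^{K-k}}[\varepsilon_k^2] + \gamma^K \E_{\rho_0 (P^\pi)^K}[(Q_0-Q^\pi)^2] \Big).$$
Changing measure with the concentration coefficients, $\E_{\rho_0 (P^\pi)^m}[h] \le \beta_\mu(m+1)\,\E_\mu[h]$ (up to an index shift), bounding $\norm{Q_0-Q^\pi}_\infty^2 \le 4\widebar{C}^2$, substituting $\E_\mu[\varepsilon_k^2] \le \epsilon^2$, and summing the series using Assumption~\ref{assume:concentrability} (so that $\sum_{m\ge1} m\gamma^{m-1}\beta_\mu(m) = \beta_\mu/(1-\gamma)^2$ dominates the weighted sums that appear), then taking square roots and collecting the powers of $\gamma$ and $(1-\gamma)$, reproduces the claimed bound $\tfrac{\gamma^{1/2}}{(1-\gamma)^{3/2}}\big(\sqrt{\beta_\mu}\,\epsilon + \tfrac{2\gamma^{K/2}\widebar{C}}{(1-\gamma)^{1/2}}\big)$.

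\textbf{Main obstacle.} The propagation algebra of Step~2 and the covering-number bookkeeping are routine, and the union bound over the $K$ iterations is standard. The delicate part is Step~1: squeezing the per-iteration squared error down to $O(\widebar{C}^2/n)$ instead of $O(\widebar{C}^2/\sqrt n)$ requires the localized, one-sided concentration argument that exploits well-specification (so that the loss variance is controlled by the excess risk and Bernstein's inequality applies), and this bound must be made uniform over the previous iterate $f\in\F$ so that it remains valid for the data-dependent $Q_{k-1}$ --- reusing the single batch $\D$ across all $K$ iterations creates exactly this dependence, and the uniformization is what handles it. Dropping Assumption~\ref{assume:realizability_fqe} does not change the $\widetilde O(1/\epsilon^2)$ rate but adds an additive $2d_\F^\pi$ inherent-error term to the bound (Theorem~\ref{thm:fqe_appendix}).
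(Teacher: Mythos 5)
Your proposal matches the paper's proof essentially step for step: the per-iteration bound is obtained from the same relative Bernstein-type deviation inequality for least squares (Lemma \ref{lem:bartlett}) combined with Haussler's pseudo-dimension covering bound, realizability is used exactly as you describe to make the empirical comparison term nonpositive, and the propagation argument (unrolling $Q^\pi - Q_K$ through the Bellman recursion, normalized Jensen, change of measure via $\beta_\mu(m)$, summing the series with Assumption \ref{assume:concentrability}, union bound over the $K$ iterations) is the same. The only divergence is that you propose to make the per-iteration bound uniform over the previous iterate $f\in\F$ to handle the data-dependence of $Q_{k-1}$ arising from reusing the single batch, whereas the paper's proof simply conditions on $Q_{k-1}$ being fixed; your version is the more careful treatment of that point but does not alter the overall approach.
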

We first focus on theorem \ref{thm:fqe_appendix_realizable}, analyzing FQE assuming a sufficiently rich function class $\F$ so that the Bellman evaluation update $\T^\pi$ is closed wrt $\F$ (thus inherent Bellman evaluation error is 0). We call this the \emph{Bellman evaluation realizability assumption}. This assumption simplifies the presentation of our bounds and also simplifies the final error analysis of Algo. \ref{algo:main_algo}. 

After analyzing FQE under this Bellman realizable setting, we will turn to error bound for general, non-realizable setting in theorem \ref{thm:fqe_appendix} (also theorem \ref{thm:fqe_main} in the main paper). The main difference in the non-realizable setting is the appearance of an extra term $d_\F^\pi$ our final bound.

\label{sec:proof-fqe}
\subsection{Error bound for single iteration - Bellman realizable case}
\label{subsec:fqe_single_iteration_realizable}
\begin{assumption}[Bellman evaluation realizability]
\label{assume:realizability_fqe}
We consider function classes $\F$ sufficiently rich so that $\forall f,\T^\pi f\in\F$.
\end{assumption}
We begin with the following result bounding the error for a single iteration of FQE, under \doubleQuote{training} distribution $\mu\sim\pi_D$
\begin{prop}[Error bound for single iteration]\label{lem:fqe_single_iteration_bound} 
Let the functions in $\F$ also be bounded by $\widebar{C}$, and let $\pdim_{\F}$ denote the pseudo-dimension of the function class $\F$. We have with probability at least $1-\delta$:
$$\norm{Q_k-\T^\pi Q_{k-1}}_\mu < \epsilon$$ 
when $n\geq\frac{24\cdot214\cdot \widebar{C}^4}{\epsilon^2}\left( \log\frac{1}{\delta}+\pdim_{\F}\log\frac{320 \widebar{C}^2}{\epsilon^2}+\log(14e(\pdim_{\F}+1))\right)$
\end{prop}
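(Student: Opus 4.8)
The plan is to read the $k$-th iteration of FQE as an ordinary least-squares regression problem over $\F$ that becomes \emph{well-specified} under Assumption \ref{assume:realizability_fqe}, and then to apply a fast-rate generalization bound for empirical risk minimization in that setting.

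First, fix $Q_{k-1}$ and regard the tuples $(x_i,a_i,x_i',c_i)$ as i.i.d.\ draws with $(x_i,a_i)\sim\mu$ and $x_i'\sim p(\cdot\mid x_i,a_i)$. The FQE targets $y_i = c_i + \gamma Q_{k-1}(x_i',\pi(x_i'))$ then have conditional mean $\E[y_i\mid x_i,a_i] = c(x_i,a_i) + \gamma\int_\X Q_{k-1}(x',\pi(x'))\,p(dx'\mid x_i,a_i) = (\T^\pi Q_{k-1})(x_i,a_i)$, by the definition of $\T^\pi$ in (\ref{eqn:appendix_Bellman_operator}). Hence the population regression function for this problem is $\T^\pi Q_{k-1}$, which lies in $\F$ by Assumption \ref{assume:realizability_fqe}; moreover, with the normalization $\widebar{C}=\widebar{c}/(1-\gamma)$, both the responses $y_i$ and every $f\in\F$ are bounded by $\widebar{C}$. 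Thus $Q_k=\argmin_{f\in\F}\frac1n\sum_i(f(x_i,a_i)-y_i)^2$ is precisely the least-squares ERM over a class containing the true regression function, with uniformly bounded responses and predictors.

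Second, I would invoke the standard sharp ERM bound for well-specified bounded least-squares (e.g.\ Theorem~11.2 in the monograph of Gy\"{o}rfi et al.\ (2002)). By optimality of $Q_k$, the empirical squared error $\frac1n\sum_i\big(Q_k(x_i,a_i)-\T^\pi Q_{k-1}(x_i,a_i)\big)^2$ is at most twice the empirical inner product of $Q_k-\T^\pi Q_{k-1}$ with the noise $y_i-\T^\pi Q_{k-1}(x_i,a_i)$; because the noise is centered and its cross term has variance controlled by $\norm{Q_k-\T^\pi Q_{k-1}}_\mu^2$ itself, one can use a localized (Bernstein/peeling) uniform deviation inequality over $\F$ instead of a crude Hoeffding bound, and Pollard/Haussler chaining then yields a tail $\mathbf{P}\big(\norm{Q_k-\T^\pi Q_{k-1}}_\mu^2 > t\big)\le c_1\,\mathcal{N}_1\!\big(c_2 t/\widebar{C},\F,x_1^{2n}\big)\exp\!\big(-c_3 nt/\widebar{C}^4\big)$ for absolute constants $c_1,c_2,c_3$. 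Bounding the random $L_1$-cover via the pseudo-dimension with Haussler's inequality $\mathcal{N}_1(\eta,\F,x_1^n)\le e(\pdim_{\F}+1)(2e\widebar{C}/\eta)^{\pdim_{\F}}$, applying the tail with $t=\epsilon^2$ (since we want $\norm{Q_k-\T^\pi Q_{k-1}}_\mu<\epsilon$), setting the right-hand side to $\delta$ and solving for $n$ gives the stated threshold: the exponential factor contributes the $\frac{\widebar{C}^4}{\epsilon^2}\log\frac1\delta$ term and $\log\mathcal{N}_1(c_2\epsilon^2/\widebar{C},\F,x_1^{2n})$ contributes $\pdim_{\F}\log\frac{c\widebar{C}^2}{\epsilon^2}+\log(e(\pdim_{\F}+1))$, and carrying the absolute constants through this inversion reproduces $n\ge\frac{24\cdot214\,\widebar{C}^4}{\epsilon^2}\big(\log\frac1\delta+\pdim_{\F}\log\frac{320\widebar{C}^2}{\epsilon^2}+\log(14e(\pdim_{\F}+1))\big)$.

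The main obstacle is conceptual rather than computational: one must genuinely exploit realizability (Assumption \ref{assume:realizability_fqe}) to unlock the variance-aware ERM analysis, which is exactly what improves the $O(1/\epsilon^4)$ rate of Munos--Szepesv\'{a}ri and Antos et al.\ to the $O(1/\epsilon^2)$ rate claimed here; dropping the assumption reintroduces an irreducible inherent-Bellman-error term $d_\F^\pi$, which is why the general Theorem~\ref{thm:fqe_appendix} carries the extra $2d_\F^\pi$ (that case is handled by replacing the unreachable $\T^\pi Q_{k-1}$ with its best-in-class projection and absorbing the residual). A secondary technicality is that $Q_{k-1}$ is itself a function of $\D$, so the i.i.d.\ conditioning is not literally legitimate across iterations; the clean single-iteration statement above is meant with $Q_{k-1}$ fixed, and the later union over $k=1,\ldots,K$---which introduces the $\log(K/\delta)$ in Theorem~\ref{thm:fqe_main}---is where one either appeals to sample-splitting or, as is customary in this literature, strengthens the deviation inequality to hold uniformly over the input $g\in\F$ too, which the pseudo-dimension covering bound already accommodates.
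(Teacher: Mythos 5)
Your proposal matches the paper's own argument essentially step for step: cast the $k$-th FQE update as well-specified bounded least-squares regression with regression function $\T^\pi Q_{k-1}\in\F$, exploit the ERM optimality of $Q_k$ together with realizability to drop the empirical excess-risk term, apply the variance-aware uniform deviation bound of Lee et al.\ (the paper cites it as Theorem~11.4 of Gy\"{o}rfi et al., not 11.2, but it is the same localized/Bernstein-type result you describe), bound the $L_1$ cover via Haussler's pseudo-dimension inequality, substitute $t=\epsilon^2$, and invert for $n$. Your remarks on why realizability is what buys the $\widetilde{O}(1/\epsilon^2)$ rate and on the data-dependence of $Q_{k-1}$ are consistent with how the paper handles (or glosses) these points, so there is nothing further to add.
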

\begin{rem}
Note from proposition \ref{lem:fqe_single_iteration_bound} that the dependence of sample complexity $n$ here on $\epsilon$ is $\widetilde{O}(\frac{1}{\epsilon^2})$, which is better than previously known analysis for Fitted Value Iteration \cite{munos2008finite} and FittedPolicyQ (continuous version of Fitted Q Iteration \cite{antos2008fitted}) dependence of $\widetilde{O}(\frac{1}{\epsilon^4})$. The finite sample analysis of LSTD \cite{lazaric2010finite} showed an $\widetilde{O}(\frac{1}{\epsilon^2})$ dependence using linear function approximation. Here we prove similar convergence rate for general non-linear (bounded) function approximators.
\end{rem}
\emph{Proof of Proposition \ref{lem:fqe_single_iteration_bound}.} Recall the training target in round $k$ is $y_i = c_i+\gamma Q_{k-1}(x_i^\prime, \pi(x_i^\prime))$ for $i=1,2,\ldots,n$, and $Q_k\in\F$ is the solution to the following regression problem:
\begin{equation*}
    Q_k = \argmin_{f\in\F}\frac{1}{n}\sum_{i=1}^n (f(x_i,a_i)-y_i)^2
\end{equation*}
Consider random variables $(x,a)\sim\mu$ and $y=c(x,a)+\gamma Q_{k-1}(x^\prime,\pi(x^\prime))$ where $x^\prime\sim p(\cdot|x,a)$. 
By this definition, $\T^\pi Q_{k-1}$ is the \textit{regression function} that minimizes square loss $\min\limits_{h:\R^{X\times A}\mapsto\R}\E\abs{h(x,a)-y}^2$ out of all functions $h$ (not necessarily in $\F$). This is due to $(\T^\pi Q_{k-1})(\tilde{x},\tilde{a}) = \E\left[y|x=\tilde{x},a=\tilde{a} \right]$ by definition of the Bellman operator. Consider $Q_{k-1}$ fixed and we now want to relate the learned function $Q_k$ over finite set of $n$ samples with the regression function over the whole data distribution via uniform deviation bound. We use the following lemma:
\begin{lem}[\cite{gyorfi2006distribution}, theorem 11.4. Original version \cite{lee1996efficient}, theorem 3]\label{lem:bartlett} Consider random vector $(X,Y)$ and $n$ i.i.d samples $(X_i,Y_i)$. Let $m(x)$ be the (optimal) regression function under square loss $m(x) = \E[Y|X=x]$. Assume $\abs{Y}\leq B$ a.s. and $B\leq 1$. Let $\F$ be a set of function $f:\R^d\mapsto\R$ and let $\abs{f(x)}\leq B$. Then for each $n\geq 1$
\begin{align*}
    \mathbf{P}\bigg\{ \exists f\in\F: \E\abs{f(X)-Y}^2-\E\abs{m(X)-Y}^2 -\frac{1}{n}\sum_{i=1}^n\left( \abs{f(X_i)-Y_i}^2-\abs{m(X_i)-Y_i}^2\right) \geq \\ \epsilon\cdot\left(\alpha+\beta+\E\abs{f(X)-Y}^2-\E\abs{m(X)-Y}^2 \right) \bigg\} \\
    \leq 14\sup_{x_1^n}\mathcal{N}_1\left(\frac{\beta\epsilon}{20B},\F,x_1^n \right)\exp{\left(-\frac{\epsilon^2(1-\epsilon)\alpha n}{214(1+\epsilon)B^4}\right)}   
\end{align*}
where $\alpha,\beta>0$ and $0<\epsilon<1/2$
\end{lem}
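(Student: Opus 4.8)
The plan is to prove this as a relative (ratio-type) uniform deviation inequality for the excess square loss, following the symmetrization-plus-covering scheme of Lee--Bartlett--Williamson. Throughout write $\widehat{\E}_n g = \frac{1}{n}\sum_{i=1}^n g(X_i,Y_i)$ for the empirical average. First I would reduce everything to the single excess-loss class $\mathcal{G} = \{g_f : f \in \F\}$ with $g_f(x,y) = \abs{f(x)-y}^2 - \abs{m(x)-y}^2$, and record the two moment facts that drive the whole argument. Writing $g_f = (f-m)(f+m-2y)$ and using $m(x) = \E[Y\mid X=x]$ gives $\E[g_f] = \E[(f(X)-m(X))^2] \ge 0$, i.e. the excess risk is nonnegative; boundedness $\abs{f},\abs{m},\abs{Y}\le B$ then yields the crucial variance-to-mean control $\E[g_f^2] \le 16B^2\,\E[g_f]$ together with $\abs{g_f}\le 8B^2$. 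This variance bound is exactly what makes a ratio deviation of the form $\E g_f - \widehat{\E}_n g_f \ge \epsilon(\alpha+\beta+\E g_f)$ controllable: a small mean forces small fluctuations, so the additive term $\alpha$ is never dominated by variance.

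Second, I would carry out the relative-deviation symmetrization by a ghost sample $(X_i',Y_i')$. The delicate point is that the event contains the unobservable $\E g_f$; I would use Chebyshev's inequality together with the variance bound to argue that, conditioned on the bad event occurring on the first sample, the ghost empirical mean $\widehat{\E}'_n g_f$ lands within a controlled band of $\E g_f$ with probability at least $1/2$. This trades $\E g_f$ for empirical quantities and gives a bound of the schematic form $2\,\mathbf{P}\{\exists f : \widehat{\E}'_n g_f - \widehat{\E}_n g_f \ge \tfrac{\epsilon}{2}(\alpha+\beta) + \tfrac{\epsilon}{2}(\widehat{\E}'_n g_f + \widehat{\E}_n g_f)\}$, in which every quantity depends only on the $2n$ observed points. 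I would then insert Rademacher signs $\sigma_i$ on the paired differences $g_f(X_i',Y_i') - g_f(X_i,Y_i)$, which is distribution-preserving by exchangeability, so that after conditioning on the data the only remaining randomness is in the signs.

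Third, conditioned on the $2n$ points I would replace $\mathcal{G}$ by a finite cover: take a $\tfrac{\beta\epsilon}{20B}$-cover of $\F$ in empirical $L_1$ norm, and use the Lipschitz estimate $\abs{g_f - g_{f'}} \le 4B\,\abs{f-f'}$ to turn it into a cover of $\mathcal{G}$ at scale $\tfrac{\beta\epsilon}{5}$, fine enough that passing to a representative moves the relevant thresholds by a negligible fraction of $\beta\epsilon$. For each representative I would apply a one-sided Hoeffding/Bernstein bound to the signed sum, where the variance-to-mean control supplies the factor $\alpha$ in the numerator of the exponent and the $B^4$ in the denominator, and the $(1-\epsilon)/(1+\epsilon)$ factor comes from balancing the additive $\alpha$ against the multiplicative $\epsilon\,\E g_f$ term. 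A union bound over the cover then produces the factor $14\sup_{x_1^n}\mathcal{N}_1(\tfrac{\beta\epsilon}{20B},\F,x_1^n)$ and the tail $\exp\!\big(-\tfrac{\epsilon^2(1-\epsilon)\alpha n}{214(1+\epsilon)B^4}\big)$.

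The main obstacle I anticipate is the relative-deviation symmetrization in the second step: converting the additive $\E g_f$ inside the event into empirical quantities, with the right fractions, while keeping the variance--mean relationship intact so that it can later feed the exponential tail bound. Tracking the exact numerical constants ($20$, $214$, and the $(1\pm\epsilon)$ factors) through the Chebyshev step, the covering-radius step, and the Hoeffding step is routine but error-prone, and is precisely where the stated constants get pinned down.
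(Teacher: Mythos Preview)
The paper does not prove this lemma at all: it is quoted verbatim as a known result from \cite{gyorfi2006distribution} (Theorem~11.4) and \cite{lee1996efficient} (Theorem~3), and is then applied as a black box inside the proof of Proposition~\ref{lem:fqe_single_iteration_bound}. Your sketch correctly reconstructs the standard symmetrization-plus-covering argument from those sources---the variance-to-mean relation $\E g_f^2 \le 16B^2\E g_f$, ghost-sample symmetrization for the ratio-type deviation, passage to an $L_1$ cover of $\F$ via the $4B$-Lipschitz property of $f\mapsto g_f$, and a Bernstein-type tail on the Rademacher sum---so there is no disagreement in approach, only a difference in scope: you are supplying a proof where the paper supplies a citation.
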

To apply this lemma, first note that since $\T^\pi Q_{k-1}$ is the optimal regression function\footnote{It is easy to see that if $m(x) = \E[y|x]$ is the regression function then for any function $f(x)$, we have $\E\left[ (f(x)-m(x))(m(x)-y)=0\right]$}, we have
\begin{align*}
    \E_\mu\left[(Q_k(x,a)-y)^2\right] &= \E_\mu\left[\left(Q_k(x,a)-\T^\pi Q_{k-1}(x,a) + \T^\pi Q_{k-1}(x,a) -y\right)^2\right] \nonumber \\
    &=\E_\mu\left[\left(Q_k(x,a)-\T^\pi Q_{k-1}(x,a)\right)^2] + \E_\mu[\left(\T^\pi Q_{k-1}(x,a)-y\right)^2\right]
\end{align*}

thus $$\norm{Q_k-\T^\pi Q_{k-1}}_\mu^2 = \E\left[ (Q_k(x,a)-\T^\pi Q_{k-1}(x,a))^2\right] = \E\left[ (Q_k(x,a) - y)^2\right] - \E\left[ (\T^\pi Q_{k-1}(x,a)-y)^2\right]$$ where by definition
\begin{align*}
\E\left[ (Q_k(x,a)-\T^\pi Q_{k-1}(x,a))^2\right] &= \int\left(Q_k(x,a) - \T^\pi Q_{k-1}(x,a)\right)^2\mu(dx,da)    \\
&=\int (Q_k(x,a) - \T^\pi(x,a))^2 \mu_x(dx)\pi_\D(a|dx)
\end{align*}
Next, given a fixed data set $\widetilde{D}_k\sim\mu$
\begin{align}
    &\mathbf{P}\big\{ \norm{Q_k-\T^\pi Q_{k-1}}_\mu^2 > \epsilon \big\} = \mathbf{P}\bigg\{ \E\left[ (Q_k(x,a) - y)^2\right] - \E\left[ (\T^\pi Q_{k-1}(x,a)-y)^2\right] > \epsilon \bigg\} \nonumber\\
    & \leq \mathbf{P}\bigg\{ \E\left[ (Q_k(x,a) - y)^2\right] - \E\left[ (\T^\pi Q_{k-1}(x,a)-y)^2\right] \nonumber\\
    &\qquad\qquad- 2\cdot\left(\frac{1}{n}\sum_{i=1}^n (Q_k(x_i,a_i)-y_i)^2 - \frac{1}{n}\sum_{i=1}^n(\T^\pi Q_{k-1}(x_i,a_i)-y_i)^2 \right) >\epsilon \bigg\} \label{eqn:fqe_relax_best_in_class}\\
    &=\mathbf{P}\bigg\{\E\left[ (Q_k(x,a)-y)^2\right] -\E\left[ (\T^\pi Q_{k-1}(x,a)-y)^2\right] \nonumber\\
    &\qquad\qquad-\frac{1}{n}\sum_{i=1}^n\left[ (Q_k(x_i,a_i)-y_i)^2-(\T^\pi Q_{k-1}(x_i,a_i)-y_i)^2\right]\nonumber\\
    &\qquad\qquad\qquad>\frac{1}{2}(\epsilon+\E\left[(Q_k(x,a)-y)^2 \right] -\E\left[(\T^\pi Q_{k-1}(x,a)-y)^2 \right])\bigg\} \label{eqn:fqe_rearrange_11_4}\\
    &\leq\mathbf{P} \bigg\{ \exists f\in\F: \E\left[(f(x,a)-y)^2\right]-\E\left[ (\T^\pi Q_{k-1}(x,a)-y)^2\right] \nonumber\\
    &\qquad\qquad-\frac{1}{n}\sum_{i=1}^n\left[ (f(x_i,a_i)-y_i)^2-(\T^\pi Q_{k-1}(x_i,a_i)-y_i)^2\right]\nonumber \\
    &\qquad\qquad\qquad \geq \frac{1}{2}(\frac{\epsilon}{2}+\frac{\epsilon}{2}+\E\left[(f(x,a)-y)^2\right]-\E\left[(\T^\pi Q_{k-1}(x,a)-y)^2\right] )\bigg\} \nonumber\\
    &\leq 14\sup_{x_1^n}\mathcal{N}_1\left(\frac{\epsilon}{80\widebar{C}},\F,x_1^n\right)\cdot\exp{\left(-\frac{n\epsilon}{24\cdot214\widebar{C}^4}\right)}\label{eqn:fqe_cover_bound}
\end{align}
Equation (\ref{eqn:fqe_relax_best_in_class}) uses the definition of $Q_k = \argmin\limits_{f\in\F}\frac{1}{n}\sum_{i=1}^n (f(x_i,a_i)-y_i)^2$ and the fact that $\T^\pi Q_{k-1}\in\F$, thus making the extra term a positive addition. Equation (\ref{eqn:fqe_rearrange_11_4}) is due to rearranging the terms. Equation (\ref{eqn:fqe_cover_bound}) is an application of lemma \ref{lem:bartlett}. We can further bound the empirical covering number by invoking the following lemma due to Haussler \cite{haussler1995sphere}:
\begin{lem}[\cite{haussler1995sphere}, Corollary 3]\label{lem:haussler} For any set $X$, any points $x^{1:n}\in\X^n$, any class $\F$ of functions on $X$ taking values in $[0,\widebar{C}]$ with pseudo-dimension $\pdim_{\F}<\infty$, and any $\epsilon>0$
$$\mathcal{N}_1(\epsilon, \F, x_1^n)\leq e(\pdim_{\F}+1)\left(\frac{2e\widebar{C}}{\epsilon}\right)^{\pdim_{\F}}$$

\end{lem}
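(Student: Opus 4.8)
The plan is to prove the bound in three stages: reduce covering to packing, binarize the real-valued class using random thresholds so that the pseudo-dimension becomes an ordinary VC dimension, and then invoke Haussler's sphere-packing estimate for bounded-VC subsets of a discrete cube.

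First I would rescale. Since every $f\in\F$ takes values in $[0,\widebar{C}]$, dividing by $\widebar{C}$ maps $\F$ to a class valued in $[0,1]$ and replaces the scale $\epsilon$ by $\epsilon/\widebar{C}$; this is precisely what produces the ratio $\widebar{C}/\epsilon$ in the final bound, so it suffices to prove $\mathcal{N}_1(\epsilon,\F,x_1^n)\le e(\pdim_\F+1)(2e/\epsilon)^{\pdim_\F}$ for $[0,1]$-valued $\F$. Next, recall that a maximal $\epsilon$-separated set (pairwise empirical $L_1$ distance strictly greater than $\epsilon$) is automatically an $\epsilon$-cover, so $\mathcal{N}_1(\epsilon,\F,x_1^n)\le D(\epsilon)$, the $\epsilon$-packing number. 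Thus it is enough to show that any family $\{f_1,\dots,f_N\}$ with $\frac1n\sum_{i=1}^n\lvert f_j(x_i)-f_l(x_i)\rvert>\epsilon$ for all $j\ne l$ satisfies $N\le e(\pdim_\F+1)(2e/\epsilon)^{\pdim_\F}$.

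The key structural step is to convert the real problem into a binary one, exploiting the very definition of pseudo-dimension. Draw thresholds $t_1,\dots,t_n$ i.i.d.\ uniform on $[0,1]$ and encode each $f_j$ as the bit vector $b_j\in\{0,1\}^n$ with $b_j(i)=\mathbb{1}[f_j(x_i)>t_i]$. Then $\E\big[\frac1n\sum_i \mathbb{1}[b_j(i)\ne b_l(i)]\big]=\frac1n\sum_i\lvert f_j(x_i)-f_l(x_i)\rvert>\epsilon$, so the $L_1$ packing translates into an expected normalized Hamming packing. Crucially, the binarized family lives inside the subgraph class $\mathrm{H}=\{(x,y)\mapsto\mathrm{sign}(f(x)-y):f\in\F\}$, whose VC dimension is by definition $\pdim_\F$; hence the set of bit vectors realizable on any fixed set of $m$ coordinates is controlled by the Sauer--Shelah lemma, $\sum_{i=0}^{\pdim_\F}\binom{m}{i}\le (em/\pdim_\F)^{\pdim_\F}$.

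The core estimate then follows Haussler's probabilistic argument: subsample a random multiset of $m$ coordinates, on which (i) Sauer--Shelah caps the number of distinct bit patterns and (ii) the expected $\epsilon$-separation guarantees that, with controlled probability, distinct separated functions remain distinguishable, so that $N$ cannot exceed the pattern count by more than a controlled factor. Choosing the subsample size $m$ on the order of $\pdim_\F/\epsilon$ balances these two effects and produces both the polynomial factor $(2e/\epsilon)^{\pdim_\F}$ and the prefactor $e(\pdim_\F+1)$; undoing the rescaling reintroduces $\widebar{C}$. I expect the main obstacle to be exactly this last step --- Haussler's sphere-packing theorem for cube subsets of bounded VC dimension --- where the constants must be tracked through the optimization over $m$ and through the interplay between preserving separation under subsampling and keeping the realizable pattern count polynomial. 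In a paper of this kind the cleanest route is to invoke this theorem as an external result rather than reprove the constant-tracking optimization, which is the genuine technical heart of \cite{haussler1995sphere}.
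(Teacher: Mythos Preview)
The paper does not prove this lemma at all: it is stated as an external result and attributed directly to \cite{haussler1995sphere}, Corollary~3, with no argument given. So there is no paper proof to compare against.

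Your sketch is a reasonable outline of how Haussler's original argument goes --- reduce covering to packing, binarize via random thresholds so that pseudo-dimension becomes VC dimension of the subgraph class, then apply the sphere-packing bound for VC-bounded subsets of the Hamming cube --- and you even correctly anticipate in your final sentence that in a paper of this kind one would simply cite the result rather than reprove the constant-tracking. That is exactly what happens here: the lemma is invoked as a black box and immediately applied to equation~(\ref{eqn:fqe_cover_bound}) to obtain (\ref{eqn:fqe_single_bound_realize}). No further work is needed on your part beyond the citation.
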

Applying lemma \ref{lem:haussler} to equation (\ref{eqn:fqe_cover_bound}), we have the inequality
\begin{equation}
\label{eqn:fqe_single_bound_realize}
    \mathbf{P}\big\{ \norm{Q_k-\T^\pi Q_{k-1}}_\mu^2 > \epsilon \big\} \leq 14\cdot e\cdot (\pdim_{\F}+1)\left( \frac{320 \widebar{C}^2}{\epsilon}\right)^{\pdim_{\F}}\cdot\exp{\left(-\frac{n\epsilon}{24\cdot214\widebar{C}^4}\right)}
\end{equation}
We thus have that when $n\geq\frac{24\cdot214\cdot \widebar{C}^4}{\epsilon^2}\left( \log\frac{1}{\delta}+\pdim_{\F}\log\frac{320 \widebar{C}^2}{\epsilon^2}+\log(14e(\pdim_{\F}+1))\right)$:
$$\norm{Q_k-\T^\pi Q_{k-1}}_\rho < \epsilon$$ with probability at least $1-\delta$.
Notice that the dependence of sample complexity $n$ here on $\epsilon$ is $\widetilde{O}(\frac{1}{\epsilon^2})$, which is better than previously known analyses for other approximate dynamic programming algorithms such as Fitted Value Iteration \cite{munos2008finite}, FittedPolicyQ \cite{antos2008learning,antos2008fitted} with dependence of $O(\frac{1}{\epsilon^4})$.
\subsection{Error bound for single iteration - Bellman non-realizable case}
\label{subsec:fqe_single_iteration_nonrealizable}
We now give similar error bound for the general case, where Assumption \ref{assume:realizability_fqe} does not hold. Consider the decomposition
\begin{align}
    \norm{Q_k-\T^\pi Q_{k-1}}_\mu^2 &= \E\left[ (Q_k(x,a) - y)^2\right] - \E\left[ (\T^\pi Q_{k-1}(x,a)-y)^2\right] \nonumber\\
    &= \bigg\{ \E\left[ (Q_k(x,a) - y)^2\right] - \E\left[ (\T^\pi Q_{k-1}(x,a)-y)^2\right] \nonumber \\
    &\qquad\qquad- 2\cdot\left(\frac{1}{n}\sum_{i=1}^n (Q_k(x_i,a_i)-y_i)^2 - \frac{1}{n}\sum_{i=1}^n(\T^\pi Q_{k-1}(x_i,a_i)-y_i)^2 \right)\bigg\} \nonumber \\
    &\qquad\qquad+ \bigg\{ 2\cdot\left(\frac{1}{n}\sum_{i=1}^n (Q_k(x_i,a_i)-y_i)^2 - \frac{1}{n}\sum_{i=1}^n(\T^\pi Q_{k-1}(x_i,a_i)-y_i)^2 \right) \bigg\} \nonumber \\
    &=\texttt{component\_1} + \texttt{component\_2} \nonumber
\end{align}
Splitting the probability of error into two separate bounds.
We saw from the previous section (equation (\ref{eqn:fqe_single_bound_realize})) that 
\begin{equation}
    \mathbf{P}(\texttt{component\_1} >\epsilon/2) \leq 14\cdot e\cdot (\pdim_{\F}+1)\left( \frac{640 \widebar{C}^2}{\epsilon}\right)^{\pdim_{\F}}\cdot\exp{\left(-\frac{n\epsilon}{48\cdot214\widebar{C}^4}\right)} \label{eqn:fqe_non_realize_single_iteration_3}
\end{equation}
We no longer have $\texttt{component\_2} \leq 0$ since $\T^\pi Q_{k-1}\notin \F$. Let $f^* = \arginf\limits_{f\in\F}\norm{f-\T^\pi Q_{k-1}}_\mu^2$. Since $Q_k = \argmin\limits_{f\in\F} \frac{1}{n}\sum_{i=1}^n (f(x_i,a_i)-y_i )^2$, we can upper-bound \texttt{component\_2} by
$$\texttt{component\_2}\leq 2\cdot\left(\frac{1}{n}\sum_{i=1}^n (f^*(x_i,a_i)-y_i)^2 - \frac{1}{n}\sum_{i=1}^n(\T^\pi Q_{k-1}(x_i,a_i)-y_i)^2 \right)$$
We can treat $f^*$ as a fixed function, unlike random function $Q_k$, and use standard concentration inequalities to bound the empirical average from the expectation. Let random variable $z = ((x,a),y)$, $z_i = ((x_i,a_i),y_i), i=1,\ldots,n$ and let 
$$h(z) = (f^*(x,a)-y)^2 - (\T^\pi Q_{k-1}(x,a)-y)^2$$
We have $\abs{h(z)}\leq 4\widebar{C}^2$. We will derive a bound for $$\mathbf{P}\left(\frac{1}{n}\sum_{i=1}^n h(z_i)-\E h(z)>\frac{\epsilon}{4}+\E h(z) \right)$$using Bernstein inequality\cite{mohri2012foundations}. First, using the relationship $h(z) = (f^*(x,a)+\T^\pi Q_{k-1}(x,a) - 2y)(f^*(x,a)-\T^\pi Q_{k-1}(x,a))$, the variance of $h(z)$ can be bounded by a constant factor of $\E h(z)$, since
\begin{align}
    \mathbf{Var}(h(z))&\leq \E h(z)^2 \leq 16\widebar{C}^2\E\left[ (f^*(x,a) - \T^\pi Q_{k-1}(x,a))^2\right] \nonumber \\
    &=16 \widebar{C}^2\left(\E\left[(f^*(x,a)-y)^2 \right]-\E\left[(\T^\pi Q_{k-1}(x,a)-y)^2 \right] \right) \label{eqn:fqe_non_realize_single_iteration_1}\\
    &=16 \widebar{C}^2\E h(z) \label{eqn:fqe_non_realize_single_iteration_2}
\end{align}
Equation (\ref{eqn:fqe_non_realize_single_iteration_1}) stems from $\T^\pi Q_{k-1}$ being the optimal regression function. Now we can apply equation (\ref{eqn:fqe_non_realize_single_iteration_2}) and Bernstein inequality to obtain
\begin{align*}
    &\mathbf{P}\left(\frac{1}{n}\sum_{i=1}^n h(z_i)-\E h(z)>\frac{\epsilon}{4}+\E h(z) \right) \leq \mathbf{P} \left( \frac{1}{n}\sum_{i=1}^n h(z_i)-\E h(z)>\frac{\epsilon}{4}+ \frac{\mathbf{Var}(h(z))}{16\widebar{C}^2}\right) \leq \ldots\\
    &\leq \exp{\left( -\frac{n\left(\frac{\epsilon}{4} +\frac{\mathbf{Var}}{16\widebar{C}^2} \right)^2 }{2\mathbf{Var}+2\frac{4\widebar{C}^2}{3}\left( \frac{\epsilon}{4} + \frac{\mathbf{Var}}{16\widebar{C}^2}\right)  } \right)} \\
    &\leq \exp{\left( -\frac{n\left(\frac{\epsilon}{4} +\frac{\mathbf{Var}}{16\widebar{C}^2} \right)^2 }{ \left(32\widebar{C}^2+\frac{8\widebar{C}^2}{3}\right) \left( \frac{\epsilon}{4} + \frac{\mathbf{Var}}{16\widebar{C}^2}\right)  } \right)} = \exp{\left( -\frac{n\left(\frac{\epsilon}{4} +\frac{\mathbf{Var}}{16\widebar{C}^2} \right) }{ 32\widebar{C}^2+\frac{8\widebar{C}^2}{3}  } \right)} \leq \exp{\left(-\frac{1}{128+\frac{32}{3}}\cdot\frac{n\epsilon}{\widebar{C}^2} \right)}
\end{align*}

Thus
\begin{equation}
\mathbf{P}\left( 2\cdot\left[\frac{1}{n}\sum_{i=1}^n h(z_i) - 2\E h(z)\right] >\frac{\epsilon}{2}\right) \leq \exp{\left(-\frac{3}{416}\cdot\frac{n\epsilon}{\widebar{C}^2}\right)}
    \label{eqn:fqe_non_realize_single_iteration_4}
\end{equation}
Now we have $$\texttt{component\_2}\leq 2\cdot\frac{1}{n}\sum_{i=1}^n h(z_i) = 2\cdot\left[\frac{1}{n}\sum_{i=1}^n h(z_i) - 2\E h(z)\right] + 4\E h(z)$$
Using again the fact that $\T^\pi Q_{k-1}$ is the optimal regression function
\begin{align}
    \E h(z) &= \E_D\left[ (f^*(x,a)-y)^2\right]-\E_D\left[ (\T^\pi Q_{k-1}(x,a)-y)^2\right] = \E_D\left[ (f^*(x,a) - \T^\pi Q_{k-1}(x,a))^2\right] \nonumber \\
    &= \inf_{f\in\F}\norm{f-\T^\pi Q_{k-1}}_\mu^2 \label{eqn:fqe_non_realize_single_iteration_5}
\end{align}
Combining equations (\ref{eqn:fqe_non_realize_single_iteration_3}), (\ref{eqn:fqe_non_realize_single_iteration_4}) and (\ref{eqn:fqe_non_realize_single_iteration_5}), we can conclude that 
\begin{align}
    \mathbf{P}\big\{ \norm{Q_k-\T^\pi Q_{k-1}}_\mu^2-4\inf_{f\in\F}\norm{f-\T^\pi Q_{k-1}}_\mu^2 > \epsilon \big\} &\leq  14\cdot e\cdot (\pdim_\F+1)\left( \frac{640 \widebar{C}^2}{\epsilon}\right)^{\pdim_\F}\cdot\exp{\left(-\frac{n\epsilon}{48\cdot214\widebar{C}^4}\right)} \nonumber \\ 
    &\qquad +\exp{\left(-\frac{3}{416}\cdot\frac{n\epsilon}{\widebar{C}^2}\right)} \nonumber
\end{align}
thus implying
\begin{align}
    \mathbf{P}\big\{ \norm{Q_k-\T^\pi Q_{k-1}}_\mu-2\inf_{f\in\F}\norm{f-\T^\pi Q_{k-1}}_\mu > \epsilon \big\} &\leq  14\cdot e\cdot (\pdim_\F+1)\left( \frac{640 \widebar{C}^2}{\epsilon^2}\right)^{\pdim_\F}\cdot\exp{\left(-\frac{n\epsilon^2}{48\cdot214\widebar{C}^4}\right)} \nonumber \\ 
    &\qquad +\exp{\left(-\frac{3}{416}\cdot\frac{n\epsilon^2}{\widebar{C}^2}\right)} \label{eqn:fqe_non_realize_single_iteration_6}
\end{align}
We now can further upper-bound the term $2\inf_{f\in\F}\norm{f-\T^\pi Q_{k-1}}_\mu \leq 2\sup_{f^\prime\in\F}\inf_{f\in\F}\norm{f-\T^\pi f^\prime}_\mu = 2d_\F^\pi$ (the worst-case \textit{inherent Bellman evaluation error}), leading to the final bound for the Bellman non-realizable case. 

One may wish to further remove the inherent Bellman evaluation error from our error bound. However, counter-examples exist where the inherent Bellman error cannot generally be estimated using function approximation (see section 11.6 of \cite{sutton2018reinforcement}). Fortunately, inherent Bellman error can be driven to be small by choosing rich function class $\F$ (low bias), at the expense of more samples requirement (higher variance, through higher pseudo-dimension $\pdim_\F$).

While the bound in (\ref{eqn:fqe_non_realize_single_iteration_6}) looks more complicated than the Bellman realizable case in equation \ref{eqn:fqe_single_bound_realize}, note that the convergence rate will still be $O(\frac{1}{n^2})$.
\subsection{Bounding the error across iterations}
\label{subsec:fqe_error_propagation}
Previous sub-sections \ref{subsec:fqe_single_iteration_nonrealizable} and \ref{subsec:fqe_single_iteration_nonrealizable} have analyzed the error of FQE for a single iteration in Bellman realizable and non-realizable case. We now analyze how errors from different iterations flow through the FQE algorithm. The proof borrows the idea from lemma 3 and 4 of \cite{munos2008finite} for fitted value iteration (for value function $V$ instead of $Q$), with appropriate modifications for our off-policy evaluation context.

Recall that $C^\pi,Q^\pi$ denote the true value function and action-value function, respectively, under the evaluation policy $\pi$. And $C_K = \E[Q_K(x,\pi(x))]$ denote the value function associated with the returned function $Q_K$ from algorithm \ref{algo:fqe}. Our goal is to bound the difference $C^\pi-C_K$ between the true value function and the estimated value of the returned function $Q_K$. 

Let the unknown state-action distribution induced by the evaluation policy $\pi$ be $\rho$. We first bound the loss $\norm{Q^\pi-Q_K}_{\rho}$ under the \doubleQuote{test-time }distribution $\rho$ of $(x,a)$, which differs from the state-action $\mu$ induced by data-generating policy $\pi_\D$. We will then lift the loss bound from $Q_K$ to $C_K$.

\textbf{Step 1: Upper-bound the value estimation error}

Let $\epsilon_{k-1} = Q_k - \T^\pi Q_{k-1}\in\mathcal{\X\times\A,\widebar{C}}$. We have for every $k$ that
\begin{align}
    Q^\pi - Q_k &= \T^\pi Q^\pi - \T^\pi Q_{k-1}+\epsilon_{k-1} \quad(Q^\pi \text{ is fixed point of } T^\pi) \nonumber \\
    &= \gamma P^\pi(Q^\pi - Q_{k-1}) +\epsilon_{k-1} \nonumber
\end{align}
Thus by simple recursion
\begin{align}
    Q^\pi - Q_K &= \sum_{k=0}^{K-1} \gamma^{K-k-1} (P^\pi)^{K-k-1}\epsilon_k + \gamma^K(P^\pi)^K(Q^\pi - Q_0) \nonumber \\
    &=\frac{1-\gamma^{K+1}}{1-\gamma}\left[ \sum_{k=0}^{K-1}\frac{(1-\gamma)\gamma^{K-k-1}}{1-\gamma^{K+1}} (P^\pi)^{K-k-1}\epsilon_k +\frac{(1-\gamma)\gamma^K}{1-\gamma^{K+1}}(P^\pi)^K(Q^\pi - Q_0) \right] \nonumber \\
    &=\frac{1-\gamma^{K+1}}{1-\gamma}\left[\sum_{k=0}^{K-1}\alpha_k A_k \epsilon_k+\alpha_K A_K (Q^\pi-Q_0) \right] \label{eqn:fqe_propagation_1}
\end{align}
where for simplicity of notations, we denote
\begin{align*}
    \alpha_k &= \frac{(1-\gamma)\gamma^{K-k-1}}{1-\gamma^{K+1}} \text{ for }k<K, \alpha_K = \frac{(1-\gamma)\gamma^K}{1-\gamma^{K+1}} \\
    A_k &= (P^\pi)^{K-k-1}, A_K = (P^\pi)^K
\end{align*}
Note that $A_k$'s are probability kernels and $\alpha_k$'s are deliberately chosen such that $\sum_k \alpha_k = 1$.

We can apply point-wise absolute value on both sides of (\ref{eqn:fqe_propagation_1}) with $\abs{f}$ being the short-hand notation for $\abs{f(x,a)}$ and inequality holds point-wise. By triangle inequalities:
\begin{equation}
\label{eqn:fqe_propagation_2}
    \abs{Q^\pi-Q_K}\leq \frac{1-\gamma^{K+1}}{1-\gamma}\left[\sum_{k=0}^{K-1}\alpha_k A_k \abs{\epsilon_k}+\alpha_K A_K \abs{Q^\pi-Q_0} \right]
\end{equation}

\textbf{Step 2: Bounding $\norm{Q^\pi-Q_K}_\rho$ for any unknown distribution $\rho$}. To handle distribution shift from $\mu$ to $\rho$, we decompose the loss as follows:
\begin{align}
    \norm{Q^\pi-Q_K}_\rho^2&=\int\rho(dx,da)\left(Q^\pi(x,a)-Q_K(x,a) \right)^2 \nonumber \\
    &\leq \left[ \frac{1-\gamma^{K+1}}{1-\gamma}\right]^2\int\rho(dx,da)\left[\left(\sum_{k=0}^{K-1}\alpha_k A_k\abs{\epsilon_k}+\alpha_K A_K\abs{Q^\pi-Q_0}\right)(x,a)\right]^2 \text{ (from}(\ref{eqn:fqe_propagation_2}))\nonumber\\
    &\leq \left[ \frac{1-\gamma^{K+1}}{1-\gamma}\right]^2\int\rho(dx,da)\left[\sum_{k=0}^{K-1}\alpha_k (A_k \epsilon_k)^2+\alpha_K (A_K(Q^*-Q_0))^2 \right](x,a) \text{ (Jensen)}\nonumber\\
    &\leq \left[ \frac{1-\gamma^{K+1}}{1-\gamma}\right]^2\int\rho(dx,da)\left[\sum_{k=0}^{K-1}\alpha_k A_k \epsilon_k^2+\alpha_K A_K(Q^*-Q_0)^2 \right](x,a) \text{ (Jensen)}\nonumber
\end{align}
Using assumption \ref{assume:concentrability} (assumption \ref{assume:concentrability_main} of the main paper), we can bound each term $\rho A_k$ as 
\begin{equation*}
    \rho A_k = \rho(P^\pi)^{K-k-1} \leq \mu \beta_{\mu}(K-k-1) \text { (definition \ref{def:concentrability})}
\end{equation*}
Thus
$$\norm{Q^\pi-Q_K}_\rho^2 \leq \left[ \frac{1-\gamma^{K+1}}{1-\gamma}\right]^2 \left[\frac{1}{1-\gamma^{K+1}}\sum_{k=0}^{K-1} (1-\gamma)\gamma^{K-k-1} \beta_{\mu}(K-k-1)\norm{\epsilon_k}_\mu^2 +\alpha_K(2\widebar{C})^2 \right]$$
Assumption \ref{assume:concentrability} (stronger than necessary for proof of FQE) can be used to upper-bound the first order concentration coefficient:
$$(1-\gamma)\sum_{m\geq0}\gamma^m \beta_\mu(m) \leq \frac{\gamma}{1-\gamma}\left[ (1-\gamma)^2\sum_{m\geq 1} m\gamma^{m-1}\beta_\mu(m)\right] = \frac{\gamma}{1-\gamma} \beta_{\mu}$$
This gives the upper-bound for $\norm{Q^\pi - Q_K}_\rho^2$ as
\begin{align}
    \norm{Q^\pi - Q_K}_\rho^2 &\leq \left[ \frac{1-\gamma^{K+1}}{1-\gamma}\right]^2 \left[\frac{\gamma}{(1-\gamma)(1-\gamma^{K+1})} \beta_{\mu}\max_{k}\norm{\epsilon_k}_\mu^2+\frac{(1-\gamma)\gamma^K}{1-\gamma^{K+1}}(2\widebar{C})^2\right] \nonumber \\
    &\leq\frac{1-\gamma^{K+1}}{(1-\gamma)^2} \left[ \frac{\gamma}{1-\gamma}\beta_{\mu}\max_k\norm{\epsilon_k}_\mu^2 + (1-\gamma)\gamma^K(2\widebar{C})^2\right] \nonumber \\
    &\leq \frac{\gamma}{(1-\gamma)^3} \beta_{\mu}\max_k\norm{\epsilon_k}_\mu^2 + \frac{\gamma^K}{1-\gamma}(2\widebar{C})^2 \nonumber
\end{align}
Using $a^2+b^2\leq(a+b)^2$ for nonnegative $a,b$, we conclude that
\begin{equation}
\label{eqn:fqe_error_propagation_bound}
    \norm{Q^\pi - Q_K}_\rho \leq \frac{\gamma^{1/2}}{(1-\gamma)^{3/2}} \left( \sqrt{\beta_{\mu}}\max_k\norm{\epsilon_k}_\mu + \frac{\gamma^{K/2}}{(1-\gamma)^{1/2}}2 \widebar{C}\right)
\end{equation}
\textbf{Step 3: Turning error bound from $Q$ to $\abs{C^\pi-C_K}$} Now we can choose $\rho$ to be the state-action distribution by the evaluation policy $\pi$. The error bound on the value function $C$ follows simply by integrating inequality (\ref{eqn:fqe_error_propagation_bound}) over state-action pairs induced by $\pi$. The final error across iterations can be related to individual iteration error by 
\begin{equation}
    \abs{C^\pi - C_K} \leq \frac{\gamma^{1/2}}{(1-\gamma)^{3/2}} \left( \sqrt{\beta_{\mu}}\max_k\norm{\epsilon_k}_\mu + \frac{\gamma^{K/2}}{(1-\gamma)^{1/2}}2 \widebar{C}\right) \label{eqn:fqe_v_bound_propagation}
\end{equation}

\subsection{Finite-sample guarantees for Fitted Q Evaluation}
Combining results from (\ref{eqn:fqe_single_bound_realize}), (\ref{eqn:fqe_non_realize_single_iteration_6}) and (\ref{eqn:fqe_v_bound_propagation}), we have the final guarantees for FQE under both realizable and general cases.

\textbf{Realizable Case - Proof of theorem \ref{thm:fqe_appendix_realizable}.} From (\ref{eqn:fqe_single_bound_realize}),  when $n\geq\frac{24\cdot214\cdot \widebar{C}^4}{\epsilon^2}\left( \log\frac{K}{\delta}+\pdim_\F\log\frac{320 \widebar{C}^2}{\epsilon^2}+\log(14e(\pdim_\F+1))\right)$, we have
$\norm{\epsilon_k}_\mu < \epsilon$ with probability at least $1-\delta/K$ for any $0\leq k<K$. Thus we conclude that for any $\epsilon>0, 0<\delta<1$, after $K$ iterations of Fitted Q Evaluation, the value estimate returned by $Q_K$ satisfies:
$$\abs{C^\pi - C_K} \leq \frac{\gamma^{1/2}}{(1-\gamma)^{3/2}} \left( \sqrt{\beta_{\mu}}\epsilon + \frac{\gamma^{K/2}}{(1-\gamma)^{1/2}}2 \widebar{C}\right)$$
holds with probability $1-\delta$ when $n\geq\frac{24\cdot214\cdot \widebar{C}^4}{\epsilon^2}\left( \log\frac{K}{\delta}+\pdim_\F\log\frac{320 \widebar{C}^2}{\epsilon^2}+\log(14e(\pdim_\F+1))\right)$. This concludes the proof of theorem \ref{thm:fqe_appendix_realizable}.

\textbf{Non-realizable Case - Proof of theorem \ref{thm:fqe_appendix} and theorem \ref{thm:fqe_main} of main paper.} Similarly, from (\ref{eqn:fqe_non_realize_single_iteration_6}) we have
\begin{align}
    \mathbf{P}\big\{ \norm{Q_k-\T^\pi Q_{k-1}}_\mu-2\inf_{f\in\F}\norm{f-\T^\pi Q_{k-1}}_\mu > \epsilon \big\} &\leq  14\cdot e\cdot (\pdim_{\F}+1)\left( \frac{640 \widebar{C}^2}{\epsilon^2}\right)^{\pdim_{\F}}\cdot\exp{\left(-\frac{n\epsilon^2}{48\cdot214\widebar{C}^4}\right)} \nonumber \\ 
    &\qquad +\exp{\left(-\frac{3}{416}\cdot\frac{n\epsilon^2}{\widebar{C}^2}\right)} \nonumber
\end{align}
Since $\inf_{f\in\F}\norm{f-\T^\pi Q_{k-1}}_\mu \leq\sup_{h\in\F}\inf_{f\in\F}\norm{f-\T^\pi h}_\mu = d_\F^\pi$ (the \textit{inherent Bellman evaluation error}), similar arguments to the realizable case lead to the conclusion that for any $\epsilon>0, 0<\delta<1$, after $K$ iterations of FQE:
$$\abs{C^\pi - C_K} \leq \frac{\gamma^{1/2}}{(1-\gamma)^{3/2}} \left( \sqrt{\beta_{\mu}}(2d_\F^\pi+\epsilon) + \frac{\gamma^{K/2}}{(1-\gamma)^{1/2}}2 \widebar{C}\right)$$
holds with probability $1-\delta$ when $n=O\big(\frac{\widebar{C}^4}{\epsilon^2}( \log\frac{K}{\delta}+\textnormal{\texttt{dim}}_{\F}\log\frac{\widebar{C}^2}{\epsilon^2}+\log \textnormal{\texttt{dim}}_{\F})\big)$, thus finishes the proof of theorem \ref{thm:fqe_appendix}. 
 
 Note that in both cases, the $\widetilde{O}(\frac{1}{\epsilon^2})$ dependency of $n$ is significant improvement over previous finite-sample analysis of related approximate dynamic programming algorithms \cite{munos2008finite, antos2008learning, antos2008fitted}. This dependency matches that of previous analysis using linear function approximators from \cite{lazaric2012finite,lazaric2010finite} for LSTD and LSPI algorithms. Here our analysis, using similar assumptions, is applicable for general non-linear, bounded function classes. , which is an improvement over convergence rate of $O(\frac{1}{n^4})$ in related approximate dynamic programming algorithms \cite{antos2008fitted, antos2008learning, munos2008finite}. 
\clearpage
\section{Finite-Sample Analysis of Fitted Q Iteration (FQI)}
\label{sec:proof-fqi}
\subsection{Algorithm and Discussion}
\begin{algorithm}[H]
    \begin{small}
    \caption{ Fitted Q Iteration with Function Approximation: $\texttt{FQI}(c)$ \cite{ernst2005tree}}
    \label{algo:fqi}
    \begin{algorithmic}[1]
        \REQUIRE Collected data set $\D = \{ x_i,a_i,x_i^\prime,c_i\}_{i=1}^n$. Function class $\F$
        \STATE Initialize $Q_0 \in \F$ randomly
        \FOR{$k = 1,2,\ldots,K$}
        \STATE Compute target $y_i = c_i+\gamma\min_a Q_{k-1}(x_i^\prime,a) \enskip \forall i$ \\
        \STATE Build training set $\widetilde{\D}_k = \{(x_i,a_i),y_i \}_{i=1}^n$
        \STATE Solve a supervised learning problem: \\
        \quad $Q_k = \argmin\limits_{f\in\F} \frac{1}{n}\sum_{i=1}^n (f(x_i,a_i)-y_i )^2$
        \ENDFOR
        \ENSURE $\pi_K(\cdot) = \argmin\limits_{a} Q_K(\cdot,a)$ (greedy policy with respect to the returned function $Q_K$)
    \end{algorithmic}
    \end{small}
\end{algorithm}
The analysis of FQI (algorithm \ref{algo:fqi}) follows analogously from the analysis of FQE from the previous section (Appendix \ref{sec:proof-fqe}). For brevity, we skip certain detailed derivations, especially those that are largely identical to FQE's analysis. 

To the best of our knowledge, a finite-sample analysis of FQI with general non-linear function approximation has not been published (Continuous FQI from \cite{antos2008fitted} is in fact a Fitted Policy Iteration algorithm and is different from algo \ref{algo:fqi}). In principle, one can adapt existing analysis of fitted value iteration \cite{munos2008finite} and FittedPolicyQ \cite{antos2008learning,antos2008fitted} to show that under similar assumptions, among policies greedy w.r.t. functions in $\F$, FQI will find $\epsilon-$ optimal policy using $n = \widetilde{O}(\frac{1}{\epsilon^4})$ samples. We derive an improved analysis of FQI with general non-linear function approximations, with better sample complexity of $n = \widetilde{O}(\frac{1}{\epsilon^2})$. We note that the appendix of \cite{lazaric2011transfer} contains an analysis of LinearFQI showing similar rate to ours, albeit with linear function approximators. 


In this section, we prove the following statement:
\begin{thm}[Guarantee for FQI - General Case (theorem \ref{thm:fqi_main} in main paper)]
Under Assumption \ref{assume:concentrability}, for any $\epsilon>0, \delta\in(0,1)$, after $K$ iterations of Fitted Q Iteration (algorithm \ref{algo:fqi}), for $n=O\big(\frac{\widebar{C}^4}{\epsilon^2}( \log\frac{K}{\delta}+\pdim_{\F}\log\frac{\widebar{C}^2}{\epsilon^2}+\log \pdim_{\F})\big)$, we have with probability $1-\delta$:
$$C^*-C(\pi_K) \leq \frac{2\gamma}{(1-\gamma)^3}\big( \sqrt{\beta_{\mu}}\left(2d_\F+\epsilon\right) + 2\gamma^{K/2} \widebar{C}\big)$$
where $\pi_K$ is the policy greedy with respect to the returned function $Q_K$, and $C^*$ is the value of the optimal policy.
\end{thm}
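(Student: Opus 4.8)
The plan is to reuse the three-step template of the FQE analysis (Appendix~\ref{sec:appendix_proof_fqe}), since FQI differs only in that the Bellman operator is the optimality operator $\T$ (with its $\min_a$ nonlinearity) rather than $\T^\pi$, and that the object of interest is the performance loss $C^*-C(\pi_K)$ of the greedy policy rather than the value of the fitted $Q_K$, which forces one extra propagation layer. \emph{Step 1 (single-iteration regression error)} is essentially verbatim Appendices~\ref{subsec:fqe_single_iteration_realizable}--\ref{subsec:fqe_single_iteration_nonrealizable}: each iteration solves $Q_k=\argmin_{f\in\F}\frac1n\sum_i(f(x_i,a_i)-y_i)^2$ with target $y_i=c_i+\gamma\min_a Q_{k-1}(x_i',a)$ whose conditional mean given $(x_i,a_i)$ is $(\T Q_{k-1})(x_i,a_i)$, so I would invoke the uniform-deviation bound (Lemma~\ref{lem:bartlett}), Haussler's covering-number bound (Lemma~\ref{lem:haussler}), and the Bernstein variance argument in the non-realizable case, to get that for the stated $n$, with probability $1-\delta$ the per-step errors $\epsilon_k:=Q_{k+1}-\T Q_k$ satisfy $\norm{\epsilon_k}_\mu\le 2d_\F+\epsilon$ for all $k$.

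\emph{Step 2 (propagation across iterations).} The $\min$ in $\T$ breaks the affine recursion used for FQE, so I would introduce the greedy policies $\pi_k=\argmin_a Q_k(\cdot,a)$ and the optimal $\pi^*$, and use the two elementary facts $\T Q=\T^{\pi_Q}Q$ for $\pi_Q$ greedy w.r.t.\ $Q$, and $\T Q\le\T^{\pi'}Q$ pointwise for any $\pi'$. These give the pointwise sandwich $\gamma P^{\pi^*}(Q^*-Q_k)-\epsilon_k\le Q^*-Q_{k+1}\le\gamma P^{\pi_k}(Q^*-Q_k)-\epsilon_k$, and unrolling over $K$ steps writes $Q^*-Q_K$ (both directions) as $\frac{1-\gamma^{K+1}}{1-\gamma}\big[\sum_k\alpha_k A_k\epsilon_k+\alpha_K A_K(Q^*-Q_0)\big]$ with each $A_k$ a product of the stochastic kernels $P^{\pi^*},P^{\pi_j}$ and $\sum_k\alpha_k=1$ --- structurally identical to \eqref{eqn:fqe_propagation_1}. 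From there the FQE argument carries over verbatim: absolute values, two Jensen steps, $\rho A_k\le\mu\,\beta_{\mu}(\cdot)$ from Assumption~\ref{assume:concentrability}, and summing the geometric series gives $\norm{Q^*-Q_K}_\rho\le\frac{\gamma^{1/2}}{(1-\gamma)^{3/2}}\big(\sqrt{\beta_{\mu}}\max_k\norm{\epsilon_k}_\mu+\frac{2\gamma^{K/2}\widebar{C}}{(1-\gamma)^{1/2}}\big)$ for any $\rho$.

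\emph{Step 3 (from $Q_K$ to the greedy policy) and conclusion.} A short Bellman-operator computation using $\T^{\pi_K}Q_K=\T Q_K\le\T^{\pi^*}Q_K$, $Q^{\pi_K}=\T^{\pi_K}Q^{\pi_K}$, and $Q^*=\T^{\pi^*}Q^*$ yields $0\le Q^{\pi_K}-Q^*\le\gamma(I-\gamma P^{\pi_K})^{-1}(P^{\pi^*}-P^{\pi_K})(Q_K-Q^*)$. Taking a $\rho$-weighted norm, expanding $(I-\gamma P^{\pi_K})^{-1}=\sum_{j\ge0}\gamma^j(P^{\pi_K})^j$, and absorbing the distribution shift again through $\beta_{\mu}$, one bounds $\abs{C^*-C(\pi_K)}$ by an $O(\gamma/(1-\gamma))$ factor times $\norm{Q^*-Q_K}_\rho$ at a suitable $\rho$ (the optimal-policy state-action distribution); chaining this with Step~2 and $\max_k\norm{\epsilon_k}_\mu\le 2d_\F+\epsilon$ from Step~1 collapses to exactly $C^*-C(\pi_K)\le\frac{2\gamma}{(1-\gamma)^3}\big(\sqrt{\beta_{\mu}}(2d_\F+\epsilon)+2\gamma^{K/2}\widebar{C}\big)$.

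The hard part will be Steps~2--3: selecting the correct greedy policy on each side of the sandwich inequality while keeping every $A_k$ a genuine stochastic kernel so the $\alpha_k$ convex-combination trick still applies, and then tracking the powers of $(1-\gamma)$ and of $\beta_{\mu}$ through the additional performance-loss layer so that the constants assemble into the stated bound. Step~1 is routine given the FQE analysis.
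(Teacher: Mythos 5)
Your overall template is the right one and Steps~1--2 track the paper's Appendix~\ref{sec:proof-fqi} closely: the single-iteration regression bound is indeed a verbatim reuse of the FQE argument with $\T$ in place of $\T^\pi$, and your sandwich $\gamma P^{\pi^*}(Q^*-Q_k)-\epsilon_k\le Q^*-Q_{k+1}\le\gamma P^{\pi_k}(Q^*-Q_k)-\epsilon_k$ is exactly the paper's Steps~1--2 of the propagation analysis. The gap is in how you assemble Step~3. First, the operator inequality $Q^{\pi_K}-Q^*\le\gamma(I-\gamma P^{\pi_K})^{-1}(P^{\pi^*}-P^{\pi_K})(Q_K-Q^*)$ is a pointwise bound on $Q^{\pi_K}(x,a)-Q^*(x,a)$, whereas the target $C(\pi_K)-C^*=\E_\chi[Q^{\pi_K}(x,\pi_K(x))]-\E_\chi[Q^*(x,\pi^*(x))]$ compares the two $Q$-functions at \emph{different} actions; bridging this requires one more use of the greediness of $\pi_K$ with respect to $Q_K$, which the paper supplies via the performance difference lemma of \citet{kakade2002approximately} (its Step~5), picking up the final $\tfrac{1}{1-\gamma}$ and forcing the evaluation distribution to be the state distribution induced by $\pi_K$ paired with actions from both $\pi^*$ and $\pi_K$ --- not ``the optimal-policy state-action distribution'' as you propose. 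Your plan as written omits this step entirely.

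Second, your modular chaining --- finish the norm bound $\norm{Q^*-Q_K}_\rho\le\tfrac{\gamma^{1/2}}{(1-\gamma)^{3/2}}(\sqrt{\beta_{\mu}}\max_k\norm{\epsilon_k}_\mu+\tfrac{2\gamma^{K/2}\widebar{C}}{(1-\gamma)^{1/2}})$ first, then multiply by an $O(\gamma/(1-\gamma))$ factor while ``absorbing the distribution shift again'' --- does not collapse to the stated bound. The product of those two factors is $\tfrac{\gamma^{3/2}}{(1-\gamma)^{5/2}}$ with an extra $(1-\gamma)^{-1/2}$ attached to the $\gamma^{K/2}$ term, not $\tfrac{2\gamma}{(1-\gamma)^3}(\cdots+2\gamma^{K/2}\widebar{C})$; and more importantly, the intermediate $\sqrt{\beta_{\mu}}$ has already spent the second-order coefficient $\beta_{\mu}=(1-\gamma)^2\sum_m m\gamma^{m-1}\beta_{\mu}(m)$, so paying for the distribution shift of $(I-\gamma P^{\pi_K})^{-1}$ ``again'' on top of it double-counts and would not leave a single $\sqrt{\beta_{\mu}}$ in the final bound. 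The paper avoids this by never taking a norm of $Q^*-Q_K$ in isolation: it substitutes the pointwise upper and lower unrolled bounds into the two halves of $(I-\gamma P^{\pi_K})^{-1}(P^{\pi^*}-P^{\pi_K})(Q^*-Q_K)$, forms the composite stochastic kernels $A_k=\tfrac{1-\gamma}{2}(I-\gamma P^{\pi_K})^{-1}[(P^{\pi^*})^{K-k}+P^{\pi_K}\cdots P^{\pi_{k+1}}]$ as in \eqref{eqn:propagation_2}, and applies Definition~\ref{def:concentrability} once to the resulting double geometric sum $\sum_{m\ge0}\gamma^m\beta_{\mu}(m+K-k)$ --- the factor of $m$ in $\beta_{\mu}$ is precisely what pays for that double sum and yields the clean $\tfrac{2\gamma}{(1-\gamma)^2}$ prefactor of \eqref{eqn:fqi_error_propagation_bound}. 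To repair your proof you should either keep everything pointwise until after the $(I-\gamma P^{\pi_K})^{-1}$ layer, as the paper does, or carry the unexpanded sums $\sum_m\gamma^m\beta_{\mu}(m+\cdot)$ through both stages and only invoke Assumption~\ref{assume:concentrability} at the very end.
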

The key steps to the proof follow similar scheme to the proof of FQE. We first bound the error for each iteration, and then analyze how the errors flow through the algorithm. 
\subsection{Single iteration error bound $\norm{Q_k-\T Q_{k-1}}_\mu$}
Here $\mu$ is the state-action distribution induced by the data-generating policy $\pi_\D$. 

We begin with the decomposition:
\begin{align}
    &\norm{Q_k-\T Q_{k-1}}_\mu^2 = \E\left[ (Q_k(x,a) - y)^2\right] - \E\left[ (\T Q_{k-1}(x,a)-y)^2\right] \nonumber\\
    &= \bigg\{ \E\left[ (Q_k(x,a) - y)^2\right] - \E\left[ (\T Q_{k-1}(x,a)-y)^2\right] - 2\cdot\left(\frac{1}{n}\sum_{i=1}^n (Q_k(x_i,a_i)-y_i)^2 - \frac{1}{n}\sum_{i=1}^n(\T Q_{k-1}(x_i,a_i)-y_i)^2 \right)\bigg\} \nonumber \\
    &\qquad\qquad+ \bigg\{ 2\cdot\left(\frac{1}{n}\sum_{i=1}^n (Q_k(x_i,a_i)-y_i)^2 - \frac{1}{n}\sum_{i=1}^n(\T Q_{k-1}(x_i,a_i)-y_i)^2 \right) \bigg\} \nonumber \\
    &=\texttt{component\_1} + \texttt{component\_2} \nonumber
\end{align}
For $\T$ the Bellman (optimality) operator (equation \ref{eqn:appendix_Bellman_optimality_operator}),  $\T Q_{k-1}$ is the \textit{regression function} that minimizes square loss $\min\limits_{h:\R^{X\times A}\mapsto\R}\E\abs{h(x,a)-y}^2$, with the random variables $(x,a)\sim\mu$ and $y=c(x,a)+\gamma\min_{a^\prime}Q_{k-1}(x^\prime,a^\prime)$ where $x^\prime\sim p(x^\prime|x,a)$. Invoking lemma \ref{lem:bartlett} and following the steps similar to equations (\ref{eqn:fqe_relax_best_in_class}),(\ref{eqn:fqe_rearrange_11_4}),(\ref{eqn:fqe_cover_bound}) and (\ref{eqn:fqe_single_bound_realize}) from appendix \ref{sec:proof-fqe}, we can bound the first component as
\begin{equation}
    \mathbf{P}(\texttt{component\_1} >\epsilon/2) \leq 14\cdot e\cdot (\pdim_\F+1)\left( \frac{640 \widebar{C}^2}{\epsilon}\right)^{\pdim_\F}\cdot\exp{\left(-\frac{n\epsilon}{48\cdot214\widebar{C}^4}\right)} \label{eqn:fqi_non_realize_single_iteration_3}
\end{equation}
Let $f^* = \arginf\limits_{f\in\F}\norm{f-\T Q_{k-1}}_\mu^2$. Since $Q_k = \argmin\limits_{f\in\F} \frac{1}{n}\sum_{i=1}^n (f(x_i,a_i)-y_i )^2$, we can upper-bound \texttt{component\_2} by
$$\texttt{component\_2}\leq 2\cdot\left(\frac{1}{n}\sum_{i=1}^n (f^*(x_i,a_i)-y_i)^2 - \frac{1}{n}\sum_{i=1}^n(\T Q_{k-1}(x_i,a_i)-y_i)^2 \right)$$

Let random variable $z = ((x,a),y)$, $z_i = ((x_i,a_i),y_i), i=1,\ldots,n$ and let 
$$h(z) = (f^*(x,a)-y)^2 - (\T Q_{k-1}(x,a)-y)^2$$
We have $\abs{h(z)}\leq 4\widebar{C}^2$. We can derive a bound for $\mathbf{P}\left(\frac{1}{n}\sum_{i=1}^n h(z_i)-\E h(z)>\frac{\epsilon}{4}+\E h(z) \right)$ using Bernstein inequality, similar to equations (\ref{eqn:fqe_non_realize_single_iteration_1}) and (\ref{eqn:fqe_non_realize_single_iteration_2}) from appendix \ref{sec:proof-fqe} to obtain:
\begin{equation}
\mathbf{P}\left( 2\cdot\left[\frac{1}{n}\sum_{i=1}^n h(z_i) - 2\E h(z)\right] >\frac{\epsilon}{2}\right) \leq \exp{\left(-\frac{3}{416}\cdot\frac{n\epsilon}{\widebar{C}^2}\right)}
    \label{eqn:fqi_non_realize_single_iteration_4}
\end{equation}
Now we have $$\texttt{component\_2}\leq 2\cdot\frac{1}{n}\sum_{i=1}^n h(z_i) = 2\cdot\left[\frac{1}{n}\sum_{i=1}^n h(z_i) - 2\E h(z)\right] + 4\E h(z)$$
Since
\begin{align}
    \E h(z) &= \E_{\widetilde{D}_k}\left[ (f^*(x,a)-y)^2\right]-\E_{\widetilde{D}_k}\left[ (\T Q_{k-1}(x,a)-y)^2\right] = \E_{\widetilde{D}_k}\left[ (f^*(x,a) - \T Q_{k-1}(x,a))^2\right] \nonumber \\
    &= \inf_{f\in\F}\norm{f-\T Q_{k-1}}_\mu^2 \label{eqn:fqi_non_realize_single_iteration_5}
\end{align}

Combining equations (\ref{eqn:fqi_non_realize_single_iteration_3}), (\ref{eqn:fqi_non_realize_single_iteration_4}) and (\ref{eqn:fqi_non_realize_single_iteration_5}), we obtain that
\begin{align}
    \mathbf{P}\big\{ \norm{Q_k-\T Q_{k-1}}_\mu^2-4\inf_{f\in\F}\norm{f-\T Q_{k-1}}_\mu^2 > \epsilon \big\} &\leq  14\cdot e\cdot (\pdim_\F+1)\left( \frac{640 \widebar{C}^2}{\epsilon}\right)^{\pdim_\F}\cdot\exp{\left(-\frac{n\epsilon}{48\cdot214\widebar{C}^4}\right)} \nonumber \\ 
    &\qquad +\exp{\left(-\frac{3}{416}\cdot\frac{n\epsilon}{\widebar{C}^2}\right)} \label{eqn:fqi_non_realize_single_iteration_6}
\end{align}

\subsection{Propagation of error bound for $\norm{Q^*-Q^{\pi_K}}_{\rho}$}
The analysis of error propagation for FQI is more involved than that of FQE, but the proof largely follows the error propagation analysis in lemma 3 and 4 of \cite{munos2008finite} in the fitted value iteration context (for $V$ function). We include the $Q$ function's (slighly more complicated) derivation here for completeness. 

Recall that $\pi_K$ is greedy wrt the learned function $Q_K$ returned by FQI. We aim to bound the difference $C^*-C^{\pi_K}$ between the optimal value function and that $\pi_K$. For a (to-be-specified) distribution $\rho$ of state-action pairs (different from the data distribution $\mu$), we bound the generalization loss $\norm{Q^*-Q^{\pi_K}}_{\rho}$

\textbf{Step 1: Upper-bound the propagation error (value)}. Let $\epsilon_{k-1} = Q_k - \T Q_{k-1}$. We have that 
\begin{align}
    Q^* - Q_k = \T^{\pi^*}Q^*-\T^{\pi^*}Q_{k-1}+\T^{\pi^*}Q_{k-1}-\T Q_{k-1}+\epsilon_{k-1} &\leq \T^{\pi^*}Q^*-\T^{\pi^*}Q_{k-1} + \epsilon_{k-1} \textit{ (b/c } \T Q_{k-1}\geq \T^{\pi^*} Q_{k-1}\text{)} \nonumber\\
    &=\gamma P^{\pi^*}(Q^*-Q_{k-1})+\epsilon_{k-1}\nonumber
\end{align}
Thus by recursion $Q^*-Q_K\leq \sum_{k=0}^{K-1}\gamma^{K-k-1}(P^{\pi^*})^{K-k-1}\epsilon_k + \gamma^K (P^{\pi^*})^K(Q^*-Q_0)  $

\textbf{Step 2: Lower-bound the propagation error (value)}. Similarly
\begin{align}
    Q^*-Q_k = \T Q^*-\T^{\pi_{k-1}}Q^*+\T^{\pi_{k-1}}Q^*-\T Q_{k-1}+\epsilon_{k-1}
    &\geq \T^{\pi_{k-1}}Q^*-\T Q_{k-1} +\epsilon_{k-1} \text{ (as } \T Q^*\geq \T^{\pi_{k-1}} Q^*\text{)} \nonumber\\
    &\geq \T^{\pi_{k-1}}Q^*-\T^{\pi_{k-1}}Q_{k-1}+\epsilon_{k-1} \textit{ (b/c } \pi_{k-1}\textit{ greedy wrt }Q_{k-1}) \nonumber\\
    &=\gamma P^{\pi_{k-1}}(Q^*-Q_{k-1}) +\epsilon_{k-1}\nonumber
\end{align}
And by recursion $Q^*-Q_K\geq \sum_{k=0}^{K-1} \gamma^{K-k-1}(P^{\pi_{K-1}}P^{\pi_{K-2}}\ldots P^{\pi_{k+1}})\epsilon_k + \gamma^K(P^{\pi_{K-1}}P^{\pi_{K-2}}\ldots P^{\pi_0})(Q^*-Q_0)$

\textbf{Step 3: Upper-bound the propagation error (policy)}. Beginning with a decomposition of value wrt to policy $\pi_K$
\begin{align}
    Q^*-Q^{\pi_K} &= \T^{\pi^*}Q^*-\T^{\pi^*}Q_K+\T^{\pi^*}Q_K-\T^{\pi_K}Q_K+\T^{\pi_K}Q_K-\T^{\pi_K}Q^{\pi_K} \nonumber\\
    &\leq (\T^{\pi^*}Q^*-\T^{\pi^*}Q_K)+(\T^{\pi_K}Q_K-\T^{\pi_K}Q^{\pi_K}) \qquad (\text{ since }\T^{\pi^*}Q_K\leq \T Q_K = \T^{\pi_K}Q_K) \nonumber\\
    &=\gamma P^{\pi^*}(Q^*-Q_K)+\gamma P^{\pi_K}(Q_K-Q^{\pi_K}) \nonumber\\
    &=\gamma P^{\pi^*}(Q^*-Q_K)+\gamma P^{\pi_K}(Q_K-Q^*+Q^*-Q^{\pi_K})\nonumber
\end{align}
Thus leading to $(I-\gamma P^{\pi_K})(Q^*-Q^{\pi_K})\leq \gamma(P^{\pi^*}-P^{\pi_K})(Q^*-Q_K)$
The operator $(I-\gamma P^{\pi_K})$ is invertible and $(I-\gamma P^{\pi_K})^{-1} = \sum_{m\geq0}\gamma^m(P^{\pi_K})^m$ is monotonic. Thus
\begin{align}
    Q^*-Q^{\pi_K}&\leq \gamma(I-\gamma P^{\pi_K})^{-1}(P^{\pi^*}-P^{\pi_K})(Q^*-Q_K) \nonumber \\
    &=\gamma(I-\gamma P^{\pi_K})^{-1}P^{\pi^*}(Q^*-Q_K)- \gamma(I-\gamma P^{\pi_K})^{-1}P^{\pi_K}(Q^*-Q_K)\label{eqn:propagation_1}
\end{align}
Applying inequalities from Step 1 and Step 2 to the RHS of (\ref{eqn:propagation_1}), we have
\begin{align}
    Q^*-Q^{\pi_K}\leq (I-\gamma P^{\pi_K})^{-1}\bigg[&\sum_{k=0}^{K-1}\gamma^{K-k}\left( (P^{\pi^*})^{K-k}-P^{\pi_K}P^{\pi_{K-1}}\ldots P^{\pi_{k+1}} \right)\epsilon_k \nonumber \\
    &+\gamma^{K+1}\left( (P^{\pi^*})^{K+1} - (P^{\pi_K}P^{\pi_{K-1}}\ldots P^{\pi_0}) \right) (Q^*-Q_0)\bigg] \label{eqn:propagation_2}
\end{align}
Next we apply point-wise absolute value on RHS of (\ref{eqn:propagation_2}), with $\abs{\epsilon_k}$ being the short-hand notation for $\abs{\epsilon_k(x,a)}$ point-wise. Using triangle inequalities and rewriting (\ref{eqn:propagation_2}) in a more compact form (\cite{munos2008finite}):
\begin{equation*}
    Q^*-Q^{\pi_K}\leq \frac{2\gamma(1-\gamma^{K+1})}{(1-\gamma)^2}\left[\sum_{k=0}^{K-1}\alpha_k A_k \abs{\epsilon_k}+\alpha_K A_K \abs{Q^*-Q_0} \right]
\end{equation*}
where $\alpha_k = \frac{(1-\gamma)\gamma^{K-k-1}}{1-\gamma^{K+1}} \text{ for }k<K, \alpha_K = \frac{(1-\gamma)\gamma^K}{1-\gamma^{K+1}}$ and 
\begin{align*}
    A_k &= \frac{1-\gamma}{2}(I-\gamma P^{\pi_K})^{-1}\left[(P^{\pi^*})^{K-k}+ P^{\pi_K}P^{\pi_{K-1}}\ldots P^{\pi_{k+1}}\right] \text{ for } k<K \nonumber \\
    A_K &= \frac{1-\gamma}{2}(I-\gamma P^{\pi_K})^{-1}\left[(P^{\pi^*})^{K+1}+ P^{\pi_K}P^{\pi_{K-1}}\ldots P^{\pi_{0}}\right]\nonumber
\end{align*}
Note that $A_k$'s are probability kernels that combine the $P^{\pi_i}$ terms and $\alpha_k$'s are chosen such that $\sum_k \alpha_k = 1$.

\textbf{Step 4: Bounding $\norm{Q^*-Q^{\pi_K}}_\rho^2$ for any test distribution $\rho$}. 

This step handles distribution shift from $\mu$ to $\rho$ (similar to Step 2 from sub-section \ref{subsec:fqe_error_propagation} of appendix \ref{sec:proof-fqe})
\begin{equation*}
    \norm{Q^*-Q^{\pi_K}}_\rho^2 \leq \left[ \frac{2\gamma(1-\gamma^{K+1})}{(1-\gamma)^2}\right]^2\int\rho(dx,da)\left[\sum_{k=0}^{K-1}\alpha_k A_k \epsilon_k^2+\alpha_K A_K(Q^*-Q_0)^2 \right](x,a) \text{ (twice Jensen)}\nonumber
\end{equation*}
Using assumption \ref{assume:concentrability} (assumption \ref{assume:concentrability_main} in the main paper), each term $\rho A_k$ is bounded as 
\begin{align}
    \rho A_k &= \frac{1-\gamma}{2}\rho (I-\gamma P^{\pi_K})^{-1}\left[(P^{\pi^*})^{K-k}+P^{\pi_K}P^{\pi_{K-1}}\ldots P^{\pi_{k+1}} \right] \nonumber\\
    &=\frac{1-\gamma}{2}\sum_{m\geq0}\gamma^m \rho(P^{\pi_K})^m \left[(P^{\pi^*})^{K-k}+P^{\pi_K}P^{\pi_{K-1}}\ldots P^{\pi_{k+1}} \right] \leq (1-\gamma)\sum_{m\geq0} \gamma^m \beta_\mu(m+K-k)\mu \qquad \text { (def \ref{def:concentrability})} \nonumber
\end{align}
Thus
\begin{align}
    \norm{Q^*-Q^{\pi_K}}_\rho^2 &\leq \left[ \frac{2\gamma(1-\gamma^{K+1})}{(1-\gamma)^2}\right]^2 \left[\frac{1}{1-\gamma^{K+1}}\sum_{k=0}^{K-1} (1-\gamma)^2\sum_{m\geq0} \gamma^{m+K-k-1} \beta_\mu(m+K-k)\norm{\epsilon_k}_\mu^2 +\alpha_K(2\widebar{C})^2 \right] \nonumber \\
    &\leq \left[ \frac{2\gamma(1-\gamma^{K+1})}{(1-\gamma)^2}\right]^2 \left[\frac{1}{1-\gamma^{K+1}}\beta_\mu\max_k\norm{\epsilon_k}_\mu^2 + \frac{(1-\gamma)\gamma^K}{1-\gamma^{K+1}}(2\widebar{C})^2 \right] \qquad \text{(assumption \ref{assume:concentrability})}\nonumber \\
    &\leq \left[ \frac{2\gamma(1-\gamma^{K+1})}{(1-\gamma)^2}\right]^2 \left[\frac{1}{1-\gamma^{K+1}}\beta_\mu\max_k\norm{\epsilon_k}_\mu^2 + \frac{\gamma^K}{1-\gamma^{K+1}}(2\widebar{C})^2 \right] \nonumber \\
    &\leq\left[\frac{2\gamma}{(1-\gamma)^2} \right]^2 \left[\beta_\mu\max_{k}\norm{\epsilon_k}_\mu^2 + \gamma^K(2\widebar{C})^2 \right] \nonumber
\end{align}
Using $a^2+b^2\leq(a+b)^2$ for nonnegative $a,b$, we thus conclude that
\begin{equation}
\label{eqn:fqi_error_propagation_bound}
    \norm{Q^* - Q^{\pi_K}}_\rho \leq \frac{2\gamma}{(1-\gamma)^2} \left( \sqrt{\beta_\mu}\max_k\norm{\epsilon_k}_\mu + 2\gamma^{K/2} \widebar{C}\right)
\end{equation}
\textbf{Step 5: Bounding $C^*-C^{\pi_K}$}
Using the performance difference lemma (lemma 6.1 of \cite{kakade2002approximately}, which states that $C^*-C^{\pi_K} = -\frac{1}{1-\gamma}\E_{\substack{x\sim d_{\pi_K} \\ a\sim\pi_K}} A^*\left[x,a \right]$. We can upper-bound the performance difference of value function as
\begin{align}
    C^*-C^{\pi_K}  &= \frac{1}{1-\gamma}\E_{\substack{x\sim d_{\pi_K}\\a\sim \pi_K}} \left[ C^*(x) - Q^*(x,a)\right] = \frac{1}{1-\gamma}\E_{x~\sim d_{\pi_K}} \left[ C^*(x) - Q^*(x,\pi_K(x))\right] \nonumber \\
    &\leq \frac{1}{1-\gamma}\E_{x~\sim d_{\pi_K}} \left[ Q^*(x,\pi^*(x)) -Q_K(x,\pi^*(x)) + Q_K(x,\pi_K(x)- Q^*(x,\pi_K(x))\right] \text{(greedy)} \nonumber \\
    &\leq \frac{1}{1-\gamma}\E_{x~\sim d_{\pi_K}} \lvert Q^*(x,\pi^*(x)) -Q_K(x,\pi^*(x))\rvert + \lvert Q_K(x,\pi_K(x)- Q^*(x,\pi_K(x))\rvert \nonumber \\
    &\leq \frac{1}{1-\gamma} \left( \norm{Q^*-Q^{\pi_K}}_{d_{\pi_K}\times\pi^*}+\norm{Q^*-Q^{\pi_K}}_{d_{\pi_K}\times\pi_K} \right) \text{(upper-bound 1-norm by 2-norm)}\nonumber \\
    &\leq \frac{2\gamma}{(1-\gamma)^3}\left( \sqrt{\beta_\mu}\max_k\norm{\epsilon_k}_\mu + 2\gamma^{K/2} \widebar{C}\right) \label{eqn:fqi_v_bound_propagation}
\end{align}
Note that inequality (\ref{eqn:fqi_v_bound_propagation}) follows from (\ref{eqn:fqi_error_propagation_bound}) by specifying $\rho=\chi P^{\pi_K}P^{\pi^*}$ and $\rho=\chi P^{\pi_K}P^{\pi_K}$, respectively ($\chi$ is the initial state distribution). 
\subsection{Finite-sample guarantees for Fitted Q Iteration}
From (\ref{eqn:fqi_non_realize_single_iteration_6}) we have:
\begin{align}
    \mathbf{P}\big\{ \norm{Q_k-\T Q_{k-1}}_\mu-2\inf_{f\in\F}\norm{f-\T Q_{k-1}}_\mu > \epsilon \big\} &\leq  14\cdot e\cdot (\pdim_\F+1)\left( \frac{640 \widebar{C}^2}{\epsilon^2}\right)^{\pdim_\F}\cdot\exp{\left(-\frac{n\epsilon^2}{48\cdot214\widebar{C}^4}\right)} \nonumber \\ 
    &\qquad +\exp{\left(-\frac{3}{416}\cdot\frac{n\epsilon^2}{\widebar{C}^2}\right)} \nonumber
\end{align}
Note that $\inf_{f\in\F}\norm{f-\T Q_{k-1}}_\mu \leq\sup_{h\in\F}\inf_{f\in\F}\norm{f-\T h}_\mu = d_\F$ (the \textit{inherent Bellman error} from equation \ref{eqn:appendix_Bellman_optimality_operator}). Combining with equation (\ref{eqn:fqi_v_bound_propagation}), we have the conclusion that for any $\epsilon>0, 0<\delta<1$, after $K$ iterations of Fitted Q Iteration, and for $\pi_K$ the greedy policy wrt $Q_K$:
$$C^*-C^\pi_K \leq \frac{2\gamma}{(1-\gamma)^3}\left( \sqrt{\beta_\mu}(2d_\F+\epsilon) + 2\gamma^{K/2} \widebar{C}\right)$$
holds with probability $1-\delta$ when $n=O\big(\frac{\widebar{C}^4}{\epsilon^2}( \log\frac{K}{\delta}+\pdim_{\F}\log\frac{\widebar{C}^2}{\epsilon^2}+\log \pdim_{\F})\big)$.

Note that compared to the Fitted Value Iteration analysis of \cite{munos2008finite}, our error includes an extra factor $2$ for $d_\F$.
\subsection{Statement for the Bellman-realizable Case}
To facilitate the end-to-end generalization analysis of theorem \ref{thm:end_to_end_main} in the main paper, we include a version of FQI analysis under Bellman-realizable assumption in this section. The theorem is a consequence of previous analysis in this section.
\begin{assumption}[Bellman evaluation realizability]
\label{assume:realizability_fqi}
We consider function classes $\F$ sufficiently rich so that $\forall f,\T f\in\F$.
\end{assumption}
\begin{thm}[Guarantee for FQI - Bellman-realizable Case]
Under Assumption \ref{assume:concentrability} and \ref{assume:realizability_fqi}, for any $\epsilon>0, \delta\in(0,1)$, after $K$ iterations of Fitted Q Iteration, for $n\geq\frac{24\cdot214\cdot \widebar{C}^4}{\epsilon^2}\big( \log\frac{K}{\delta}+\pdim_{\F}\log\frac{320 \widebar{C}^2}{\epsilon^2}+\log(14e(\pdim_{\F}+1))\big)$, we have with probability $1-\delta$:
$$C^*-C(\pi_K) \leq \frac{2\gamma}{(1-\gamma)^3}\big( \sqrt{\beta_{\mu}}\epsilon + 2\gamma^{K/2} \widebar{C}\big)$$
where $\pi_K$ is the policy greedy with respect to the returned function $Q_K$, and $C^*$ is the value of the optimal policy.
\end{thm}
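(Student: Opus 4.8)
The plan is to obtain this statement as a direct specialization of the general (non-realizable) FQI analysis already developed in this section: under Assumption \ref{assume:realizability_fqi} the inherent Bellman error $d_\F$ vanishes, and moreover the per-iteration analysis simplifies because the ``excess'' term from comparing the empirical minimizer against the Bellman update is deterministically non-positive rather than merely small.

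\textbf{Step 1 (per-iteration bound under realizability).} Recall the decomposition $\norm{Q_k-\T Q_{k-1}}_\mu^2 = \texttt{component\_1}+\texttt{component\_2}$ used above, where $\T Q_{k-1}$ is the $\mu$-optimal regression function for the target $y=c(x,a)+\gamma\min_{a'}Q_{k-1}(x',a')$ since $(\T Q_{k-1})(x,a)=\E[y\mid x,a]$. Under Assumption \ref{assume:realizability_fqi} we have $\T Q_{k-1}\in\F$, so $\T Q_{k-1}$ is a feasible competitor in the empirical least-squares problem solved by $Q_k$; hence $\texttt{component\_2}\leq 0$ exactly, mirroring the realizable FQE argument (equations (\ref{eqn:fqe_relax_best_in_class})--(\ref{eqn:fqe_single_bound_realize})). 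It then suffices to control $\texttt{component\_1}$ via Lemma \ref{lem:bartlett} followed by Haussler's covering-number bound (Lemma \ref{lem:haussler}), which yields, for each fixed $k$, that $\norm{\epsilon_k}_\mu = \norm{Q_k-\T Q_{k-1}}_\mu < \epsilon$ holds with probability at least $1-\delta/K$ provided $n\geq\frac{24\cdot 214\,\widebar{C}^4}{\epsilon^2}\big(\log\frac{K}{\delta}+\pdim_\F\log\frac{320\,\widebar{C}^2}{\epsilon^2}+\log(14e(\pdim_\F+1))\big)$.

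\textbf{Step 2 (union bound and error propagation).} A union bound over $k=0,\ldots,K-1$ gives $\max_k\norm{\epsilon_k}_\mu<\epsilon$ with probability at least $1-\delta$ under the stated sample size. The realizability assumption plays no role in the error-propagation argument (Steps 1--5 of the propagation subsection): that argument uses only $\epsilon_k=Q_{k+1}-\T Q_k$, the greediness of $\pi_k$ with respect to $Q_k$, Assumption \ref{assume:concentrability} to bound the distribution shift $\rho A_k \leq (1-\gamma)\sum_{m\geq 0}\gamma^m\beta_\mu(m+K-k)\,\mu$, and the performance-difference lemma. Hence inequality (\ref{eqn:fqi_v_bound_propagation}) holds verbatim, $C^*-C(\pi_K)\leq \frac{2\gamma}{(1-\gamma)^3}\big(\sqrt{\beta_\mu}\max_k\norm{\epsilon_k}_\mu + 2\gamma^{K/2}\widebar{C}\big)$, and substituting $\max_k\norm{\epsilon_k}_\mu<\epsilon$ from Step 1 yields the claim.

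There is no substantial obstacle here, since the heavy lifting (the concentration inequality, the covering bound, and the telescoping error-propagation recursion) was done for the general case. The only point requiring care is the verification that realizability eliminates the $2d_\F$ term \emph{exactly}: this relies precisely on $\T Q_{k-1}\in\F$ and on $Q_k$ being the exact empirical minimizer, so that $\texttt{component\_2}\leq 0$ with no residual; if $Q_k$ were only an approximate minimizer (e.g., under a noisy update), an extra optimization-error term would reappear. Everything else is a routine re-use of the machinery already in place.
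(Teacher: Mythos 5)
Your proposal is correct and matches the paper's intended argument: the paper states this theorem as ``a consequence of previous analysis in this section,'' i.e., one reruns the single-iteration decomposition with $\texttt{component\_2}\leq 0$ (since $\T Q_{k-1}\in\F$ under realizability, exactly as in the realizable FQE case, which is where the stated constants $24\cdot 214$ and $320\,\widebar{C}^2/\epsilon^2$ come from), takes a union bound over the $K$ iterations, and feeds $\max_k\norm{\epsilon_k}_\mu<\epsilon$ into the unchanged error-propagation bound (\ref{eqn:fqi_v_bound_propagation}). Your remark that realizability is what makes the $2d_\F$ term vanish exactly, and that an approximate empirical minimizer would reintroduce an optimization-error term, is also consistent with the paper's treatment.
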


\clearpage
\section{Additional Instantiation of Meta-Algorithm (algorithm \ref{algo:meta})}
\label{sec:app-algorithm}
We provide an additional instantiation of the meta-algorithm described in the main paper, with Online Gradient Descent (OGD) \cite{zinkevich2003online} and Least-Squares Policy Iteration (LSPI) \cite{lagoudakis2003least} as subroutines. Using LSPI requires a feature map $\phi$ such that any state-action pair can be represented by $k$ features. The value function is linear in parameters represented by $\phi$. Policy representation is simplified to a weight vector $w\in\R^k$.

Similar to our main algorithm \ref{algo:main_algo}, OGD updates require bounded parameters $\lambda$. We thus introduce hyper-parameter $B$ as the bound of $\lambda$ in $\ell_2$ norm. The gradient update is projected to the $\ell_2$ ball when the norm of $\lambda$ exceeds $B$ (line 15 of algorithm \ref{algo:algo_lspi_ogd}).
\begin{algorithm}[h]
    \caption{ Batch Learning under Constraints using Online Gradient Descent and Least-Squares Policy Iteration} 
    \label{algo:algo_lspi_ogd}
    \begin{algorithmic}[1]
    \REQUIRE Dataset $\D = \{ x_i,a_i,x_i^\prime,c_i,g_i\}_{i=1}^n \sim\pi_{\D}$. Online algorithm parameters: $\ell_2$ norm bound $B$, learning rate $\eta$
    \REQUIRE Number of basis function $k$. Basis function $\phi$ (feature map for state-action pairs)
    \STATE Initialize $\lambda_1 = (0,\ldots,0)\in\R^{m}$
        \FOR{each round $t$}
        \STATE Learn $w_t\leftarrow \texttt{LSPI}(c+\lambda_t^\top g)$ \hfill // \textit{LSPI with cost $c+\lambda_t^\top g$}\\
        \STATE Evaluate $\widehat{C}(w_t)\leftarrow\texttt{LSTDQ}(w_t,c)$ \hfill // \textit{Algo \ref{algo:lstdq} with $\pi_t$, cost $c$}\\
        \STATE Evaluate $\widehat{G}(w_t)\leftarrow\texttt{LSTDQ}(w_t,g)$ \hfill // \textit{Algo \ref{algo:lstdq} with $\pi_t$}, cost $g$\\
        \STATE $\widehat{w}_t \leftarrow \frac{1}{t} \sum_{t^\prime = 1}^{t} w_{t^\prime}$ \\
        \STATE $\widehat{C}(\widehat{w}_t) \leftarrow \frac{1}{t} \sum_{t^\prime = 1}^{t} \widehat{C}(w_{t^\prime})$, $\widehat{G}(\widehat{w}_t) \leftarrow \frac{1}{t} \sum_{t^\prime = 1}^{t} \widehat{G}(w_{t^\prime})$  \\
        \STATE $\widehat{\lambda}_t \leftarrow \frac{1}{t}\sum_{t^\prime = 1}^t \lambda_{t^\prime}$ \\
        \STATE Learn $\widetilde{w}\leftarrow\texttt{LSPI}(c+\widehat{\lambda}_t^\top g)$ \hfill // \textit{LSPI with cost $c+\widehat{\lambda}_t^\top g$} \\
        \STATE Evaluate $\widehat{C}(\widetilde{w})\leftarrow\texttt{LSTDQ}(\widetilde{w},c), \widehat{G}(\widetilde{w})\leftarrow\texttt{LSTDQ}(\widetilde{w},g)$ \\
        \STATE $\Lhmax= \max\limits_{\lambda, \norm{\lambda}_2\leq B} \left( \widehat{C}(\widehat{w}_t) + \lambda^\top (\widehat{G}(\widehat{w}_t)-\tau)\right)$
        \STATE $\Lhmin = \widehat{C}(\widetilde{w}) + \widehat{\lambda}_t^\top(\widehat{G}(\widetilde{w})-\tau)$
        \IF{$\Lhmax - \Lhmin \leq \omega$}
        \STATE Return $\widehat{\pi}_t$ greedy w.r.t $\widehat{w}_t$ \big(i.e., $\widehat{\pi}_t(x) = \argmin_{a\in\A}\widehat{w}_t^\top\phi(x,a)\enskip\forall x$ \big)\\
        \ENDIF
        \STATE $\lambda_{t+1} = \mathcal{P}(\lambda_t - \eta (\widehat{G}(\pi_t) - \tau))$ where projection $\mathcal{P}(\lambda) = B\frac{\lambda}{\max\{ B, \norm{\lambda}_2\}}$
        \ENDFOR
    \end{algorithmic}
\end{algorithm}

\begin{algorithm}[H]
    \caption{ Least-Squares Policy Iteration: $\texttt{LSPI}(c)$ \cite{lagoudakis2003least} }
    \label{algo:lspi}
    \begin{algorithmic}[1]
        \REQUIRE Stopping criterion $\epsilon$
        \STATE Initialize $w^\prime \leftarrow w_0$
        \REPEAT
        \STATE $w\leftarrow w^\prime$ \\
        \STATE $w^\prime\leftarrow \texttt{LSTDQ}(w,c)$
        \UNTIL $\norm{w-w^\prime}\leq\epsilon$
        \ENSURE Policy weight $w$ \big(i.e., $\pi(x) = \argmin_{a\in\A}w^\top\phi(x,a)\enskip\forall x$\big)
    \end{algorithmic}
\end{algorithm}


\begin{algorithm}[H]
    \caption{$\texttt{LSTDQ}(w,c)$ \cite{lagoudakis2003least} }
    \label{algo:lstdq}
    \begin{algorithmic}[1]
        \STATE Initialize $\mathbf{\widetilde{A}}\leftarrow \mathbf{0}$\hfill // $k\times k$ matrix\\
        \STATE Initialize $\widetilde{b}\leftarrow \mathbf{0}$\hfill \small{// $k\times 1$ vector}\\
        \FOR{each $(x,a,x^\prime,c)\in\D$}
        \STATE $a^\prime = \argmin_{\widetilde{a}\in\A} w^\top\phi(x^\prime, \widetilde{a})$
        \STATE $\mathbf{\widetilde{A}}\leftarrow \mathbf{\widetilde{A}}+\phi(x,a)\big(\phi(x,a)-\gamma\phi(x^\prime,a^\prime) \big)^\top$ \\
        \STATE $\widetilde{b}\leftarrow \widetilde{b}+\phi(x,a)c$
        \ENDFOR
        \STATE $\widetilde{w}\leftarrow \mathbf{\widetilde{A}}^{-1}\widetilde{b}$
        \ENSURE $\widetilde{w}$
    \end{algorithmic}
\end{algorithm}

\clearpage
\section{Additional Experimental Details}
\label{sec:app-experiment}
\subsection{Environment Descriptions and Procedures}
\begin{figure}[h]
\centering
    \customlabel{fig:lake_ending_policy}{fig:screenshot}{left}
       \includegraphics[width=1.2in]{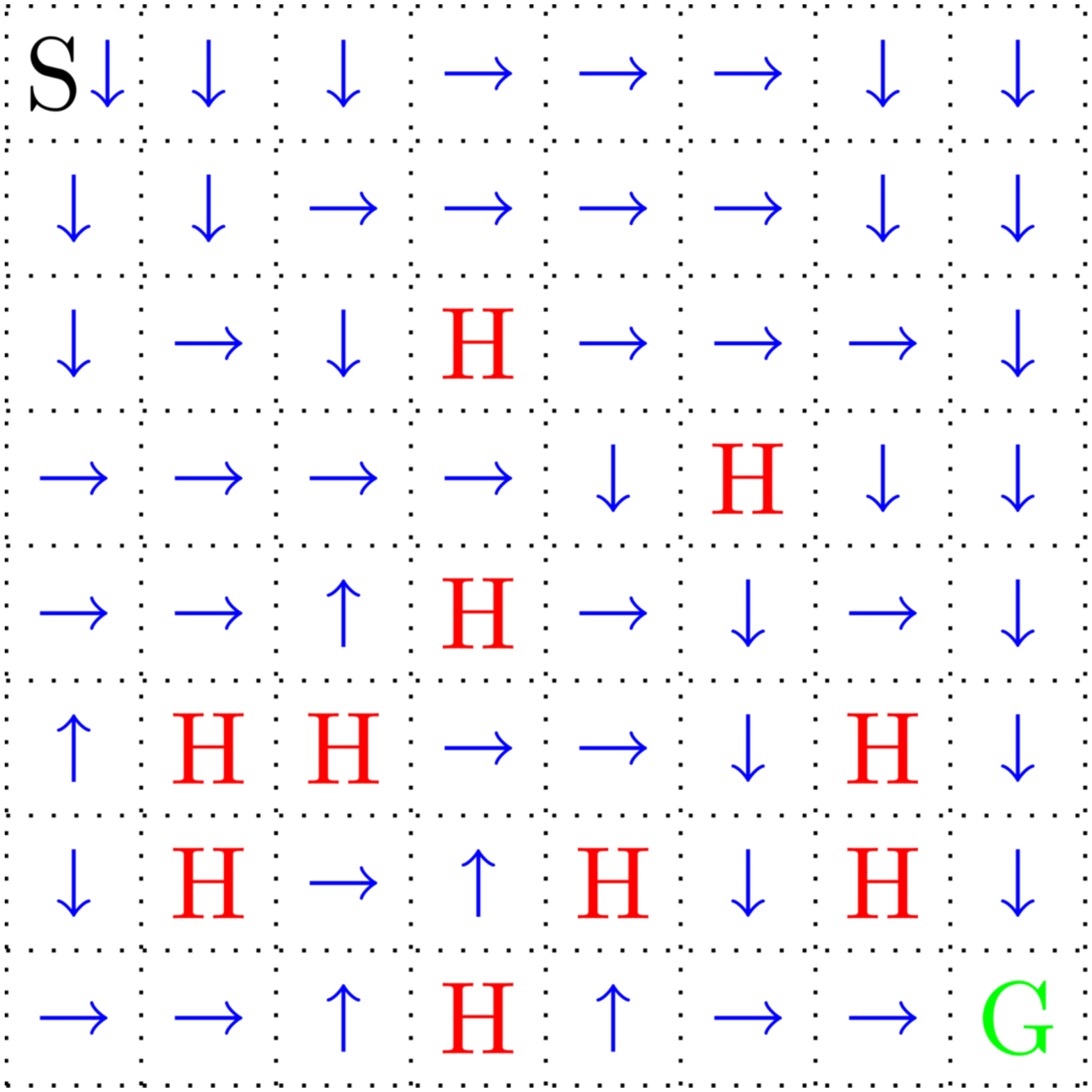}
       \ \ \ 
   \customlabel{fig:car_screenshot}{fig:screenshot}{right}%
         \includegraphics[width=1.8in]{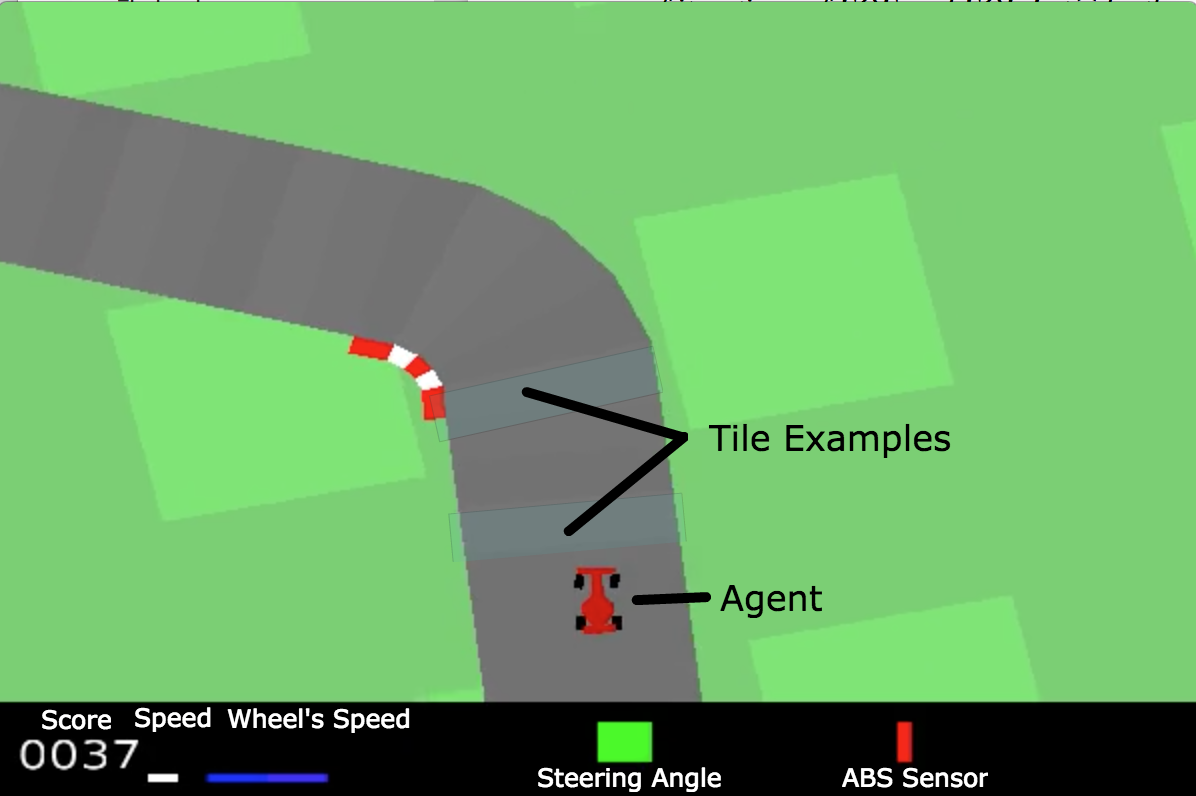}
                 \vspace{-0.1in}
\caption{Depicting the \emph{FrozenLake} and \emph{CarRacing} environments.\vspace{0.05in}}
\label{fig:screenshot}
\end{figure}

\textbf{Frozen Lake.} The environment is a 8x8 grid as seen in Figure~\ref{fig:lake_ending_policy}, based on OpenAi's FrozenLake-v0. In each episode, the agent starts from $S$ and traverse to goal $G$. While traversing the grid, the agent must avoid the pre-determined holes denoted by $H$. If the agent steps off of the grid, the agent returns to the same grid location. The episode terminates when the agent reaches the goal or falls into a hole. The arrows in Figure~\ref{fig:lake_ending_policy} is an example policy returned by our algorithm, showing an optimal route. 

 Denote $\X_{holes}$ as the set of all holes in the grid and $\X_{goal} = \{x_{goal}\}$ is a singleton set representing the goal in the grid. The contrained batch policy learning problem is:
 \begin{align}
 \begin{split}
     &\min_{\pi\in\Pi} \quad C(\pi) = \E[\mathbb{I}(x'\not\in\X_{goals})] = \mathrm{P}(x'\not\in\{x_{goal}\}) \\
     &\text{s.t. } \quad G(\pi) = \E[\mathbb{I}(x'\in\X_{holes})] = \mathrm{P}(x'\in\X_{holes}) \leq \tau 
\end{split}
 \end{align}


We collect $5000$ trajectories by selecting an action randomly with probability $.95$ and an action from a DDQN-trained model with probability $.05$.Furthermore we set $B = 30$ and $\eta = 50$, the hyperparameters of our Exponentiated Gradient subroutine. We set the threshold for the constraint $\tau = .1$.


\textbf{Car Racing.} The environment is a racetrack as seen in Figure~\ref{fig:car_screenshot}, modified from OpenAi's CarRacing-v0. In each state, given by the raw pixels, the agent has 12 actions: $a \in A= \{(i,j,k) | i\in\{-1,0,1\}, j\in\{0,1\}, k\in\{0,.2\} \}$. The action tuple $(i,j,k)$ cooresponds to steering angle, amount of gas applied and amount of brake applied, respectively. In each episode, the agent starts at the same point on the track and must traverse over $95\%$ of the track, given by a discretization of $281$ tiles. The agent recieves a reward of $+\frac{1000}{281}$ for each unique tile over which the agent drives. The agent receives a penalty of $-.1$ per-time step. Our collected dataset takes the form: $\D = \{(x_{t-6}, x_{t-3}, x_t), a_t,(x_{t-3}, x_{t}, x_{t+3}), c_t, g_{0,t}, g_{1,t}\}$ where $x_i$ denotes the image at timestep $i$ and $a_t$ is applied 3 times between $x_t$ and $x_{t+3}$. This frame-stacking option is common practice in online RL for Atari and video games.In our collected dataset $\D$, the maximum horizon is 469 time steps. 

The first constraint concerns accumulated number of brakes, a proxy for smooth driving or acceleration. The second constraint concerns how far the agent travels away from the center of the track, given by the Euclidean distance between the agent and the closest point on the center of the track. Let $N_t$ be the number of tiles that is collected by the agent in time $t$. The constrained batch policy learning problem is:
 \begin{align}
 \begin{split}
     &\min_{\pi\in\Pi} \quad \E[\sum_{t=0}^\infty \gamma^t (-\frac{1000}{281}N_t + .1)]  \\
     &\text{s.t. } \quad G_0(\pi) = \E[\sum_{t=0}^\infty \gamma^t \mathbb{I}(a_t\in\A_{braking})]  \leq \tau_0 \\
     &\quad\quad G_1(\pi) = \E[\sum_{t=0}^\infty \gamma^t d(u_t,v_t)]  \leq \tau_1
\end{split}
 \end{align}

We instatiate our subroutines, FQE and FQI, with multi-layered CNNs. Furthermore we set $B = 10$ and $\eta = .01$, the hyperparameters of our Exponentiated Gradient subroutine. We set the threshold for the constraint to be about 75\% of the value exhibited by online RL agent trained by DDQN \cite{van2016deep}.


\begin{figure}[h]
\centering
    \customlabel{fig:lspi_grid}{fig:grid_search}{left}
       \includegraphics[width=1.6in]{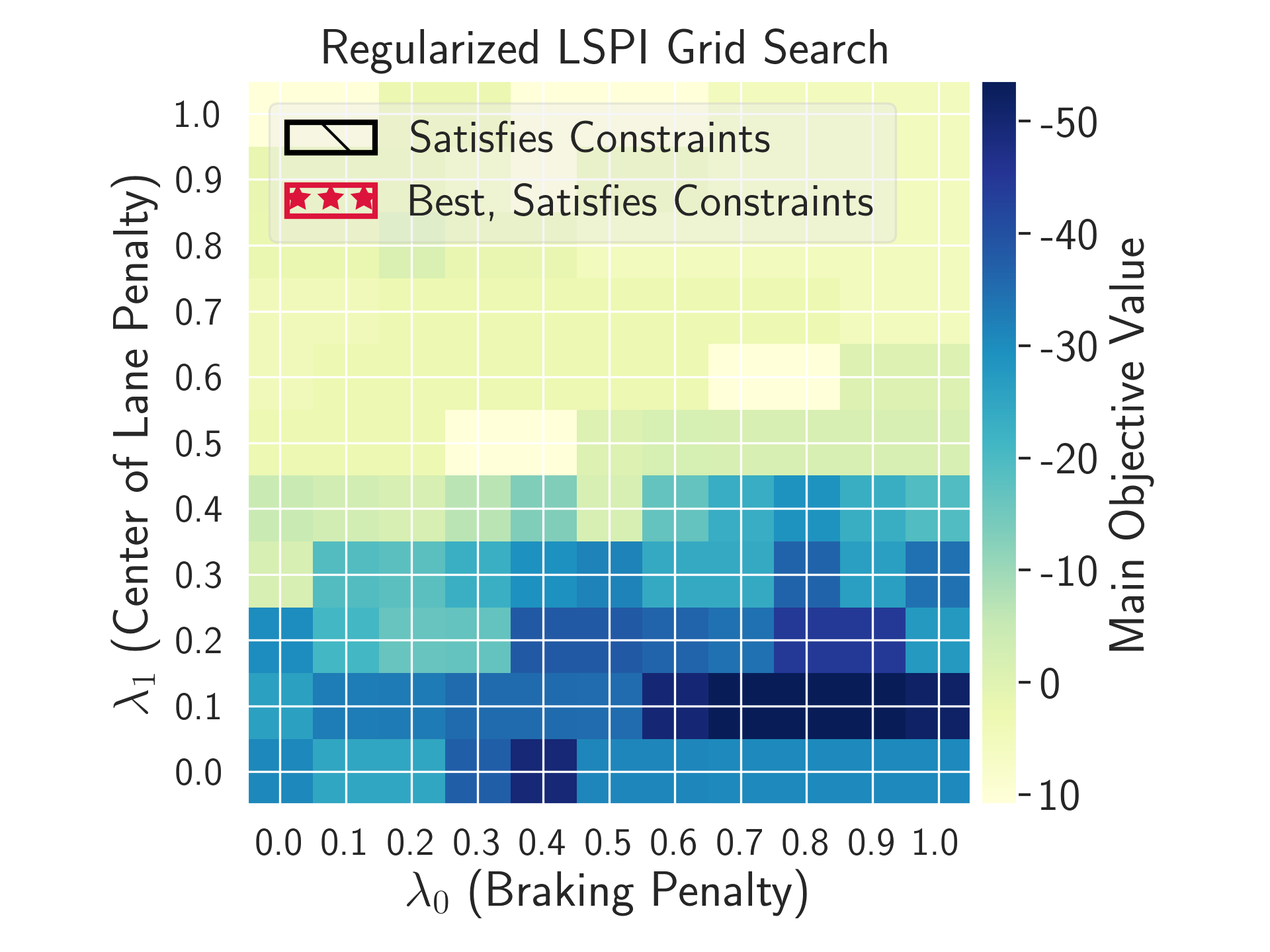}
       \ \ \ 
   \customlabel{fig:fqi_grid}{fig:grid_search}{right}%
         \includegraphics[width=1.6in]{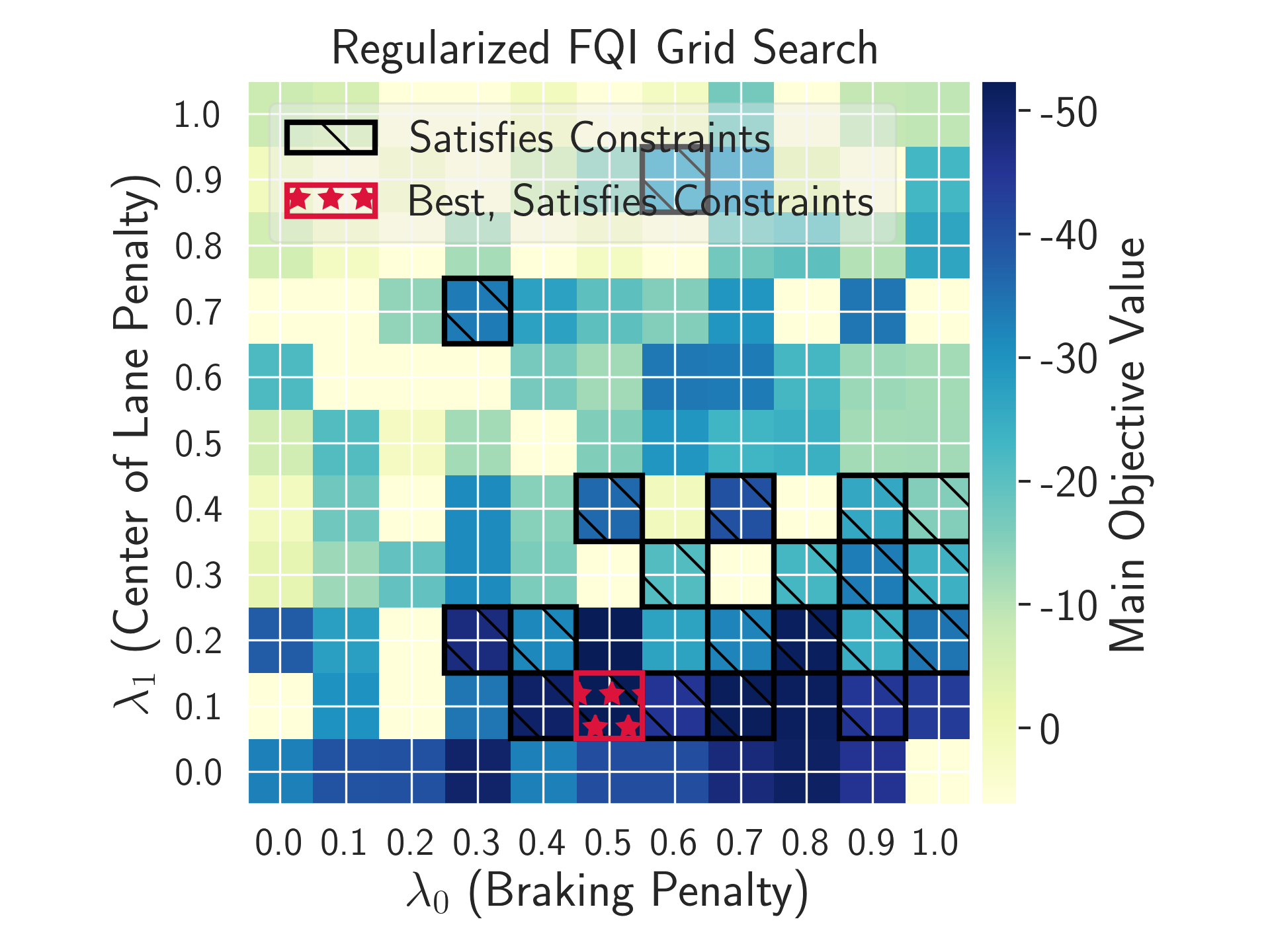}
    \customlabel{fig:band_main}{fig:band_value}{left}
       \includegraphics[width=1.6in]{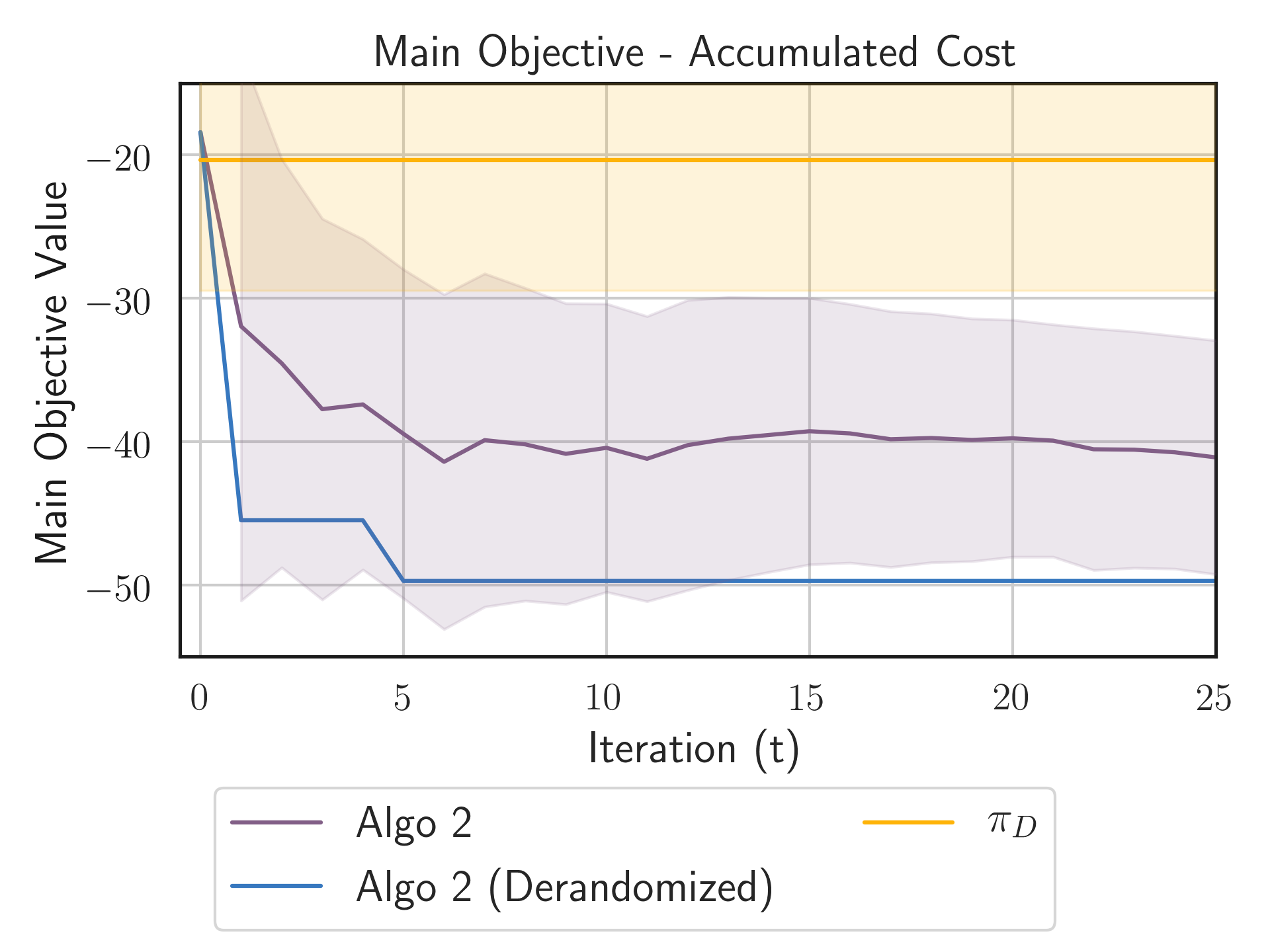}
       \ \ \ 
   \customlabel{fig:band_constraint}{fig:band_value}{right}%
         \includegraphics[width=1.6in]{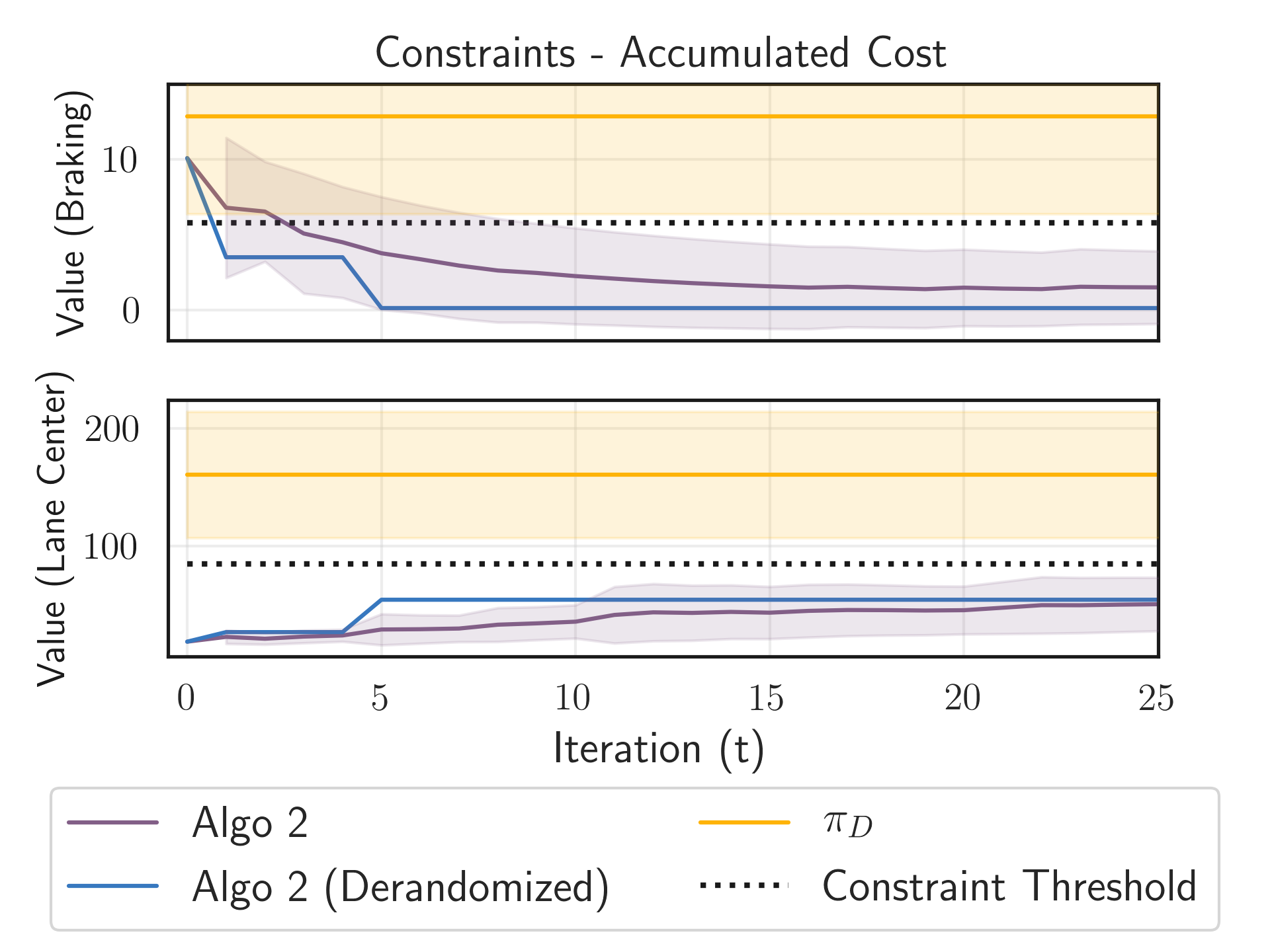}
                 \vspace{-0.1in}
\caption{(First and Second figures) Result of 2-D grid-search for one-shot, regularized policy learning for \emph{LSPI} (left) and \emph{FQI} (right). \\
(Third and Fourth figures) value range of individual policies in our mixtured policy and data generating policy $\pi_D$ for main objective (left) and cost constraint (right) \vspace{0.05in}}
\label{fig:appendix_car}
\end{figure}

\subsection{Additional Discussion for the Car Racing Experiment}
\textbf{Regularized policy learning and grid-search.} We perform grid search over a range of regularization parameters $\lambda$ for both Least-Squares Policy Iteration - LSPI (\cite{lagoudakis2003least}) and Fitted Q Iteration - FQI (\cite{ernst2005tree}). The results, seen from the the first and second plot of Figure~\ref{fig:appendix_car}, show that one-shot regularized learning has difficulty learning a policy that satisfies both constraints. We augment LSPI with non-linear feature mapping from one of our best performing FQI model (using CNNs representation). While both regularized LSPI and regularized FQI can achieve low main objective cost, the constraint cost values tend to be sensitive with the $\lambda$ step. Overall for the whole grid search, about $10\%$ of regularized policies satisfy both constraints, while none of the regularized LSPI policy satisfies both constraints.

\textbf{Mixture policy and de-randomization.} As our algorithm returned a mixture policy, it is natural to analyze the performance of individual policies in the mixture. The third and fourth plot from Figure~\ref{fig:appendix_car} show the range of performance of individual policy in our mixture (purple band). We compare individual policy return with the stochastic behavior of the data generation policy. Note that our policies satisfy constraints almost always, while the individual policy returned in the mixture also tends to outperform $\pi_D$ with respect to the main objective cost. 

\textbf{Off-policy evaluation standalone comparison.} Typically, inverse propensity scoring based methods call for stochastic behavior and evaluation policies \cite{precup2000eligibility,swaminathan2015batch}. However in this domain, the evaluation policy and environment are both deterministic, with long horizon (the max horizon is $\D$ is 469). Consequently Per-Decision Importance Sampling typically evaluates the policy as 0. In general, off-policy policy evaluation in long-horizon domains is known to be challenging \cite{liu2018breaking,guo2017using}. We augment PDIS by approximating the evaluation policy with a stochastic policy, using a softmin temperature parameter. However, PDIS still largely shows significant errors. For Doubly Robust and Weighted Doubly Robust methods, we train a model of the environment as follows: 
\begin{itemize}
  \item a 32-dimensional representation of state input is learned using variational autoencoder. Dimensionality reduction is necessary to aid accuracy, as original state dimension is $96\times96\times3$
  \item an LSTM is used to learn the transition dynamics $P(z(x^\prime) | z(x),a)$, where $z(x)$ is the low-dimensional representation learned from previous step. Technically, using a recurrent neural networks is an augmentation to the dynamical modeling, as true MDPs typically do not require long-term memory
  \item the model is trained separately on a different dataset, collected indendently from the dataset $\D$ used for evaluation
  \end{itemize}
  The architecture of our dynamics model is inspired by recent work in model-based online policy learning \cite{ha2018world}. However, despite our best effort, learning the dynamics model accurately proves highly challenging, as the horizon and dimensionality of this domain are much larger than popular benchmarks in the OPE literature \cite{jiang2016doubly,thomas2016data,farajtabar2018more}. The dynamics model has difficulty predicting the future state several time steps away. Thus we find that the long-horizon, model-based estimation component of DR and WDR in this high-dimensional setting is not sufficiently accurate. For future work, a thorough benchmarking of off-policy evaluation methods in high-dimensional domains would be a valuable contribution. 



\end{document}